\tikzset{zigzag/.style={decorate, decoration=zigzag}}
\definecolor{darkmagenta}{rgb}{0.55, 0.0, 0.55}
\newcommand{\new}[1]{{#1}}
\newcommand{\nb}[1]{}
\newcommand{\comment}[1]{}
\newcommand{\todo}[1]{}
\newcommand{\Aa}{{\cal{A}}}
\newcommand{\Bb}{{\cal{B}}}
\newcommand{\Ff}{{\cal{F}}}
\newcommand{\Ii}{{\cal{I}}}
\newcommand{\Jj}{{\cal{J}}}
\newcommand{\Kk}{{\cal{K}}}
\newcommand{\Mm}{{\cal{M}}}
\newcommand{\Oo}{{\cal{O}}}
\newcommand{\Qq}{{\cal{Q}}}
\newcommand{\Tt}{{\cal{T}}}
\newcommand{\smallplus}{\text{+}} 
\newcommand{\notmodels}{\centernot\models}
\newcommand{\fentails}{\models_{\mathsf{fin}}}
\newcommand{\notfentails}{\notmodels_{\!\!\mathsf{fin}}}
\newcommand{\poly}{\mathrm{poly}}
\renewcommand{\ALCI}{\ensuremath{{\cal{ALC\hspace{-0.06ex}I}}}\xspace}
\renewcommand{\ALCQ}{\ensuremath{{\cal{ALC\hspace{-0.25ex}Q}}}\xspace}
\renewcommand{\ALCO}{\ensuremath{{\cal{ALC\hspace{-0.25ex}O}}}\xspace}
\newcommand{\ALCOI}{\ensuremath{{\cal{ALC\hspace{-0.25ex}O\hspace{-0.06ex}I}}}\xspace}
\newcommand{\ALCOQ}{\ensuremath{{\cal{ALC\hspace{-0.25ex}O\hspace{-0.25ex}Q}}}\xspace}
\newcommand{\ALCOIQ}{\ensuremath{{\cal{ALC\hspace{-0.25ex}O\hspace{-0.06ex}I\hspace{-0.1ex}Q}}}\xspace}
\newcommand{\ALCIQ}{\ensuremath{{\cal{ALCI\hspace{-0.1ex}Q}}}\xspace}
\newcommand{\CQ}{\ensuremath{\mathcal{C\hspace{-0.25ex}Q}}\xspace}
\newcommand{\UCQ}{\ensuremath{\mathcal{U\hspace{-0.15ex}C\hspace{-0.25ex}Q}}\xspace}
\newcommand{\ALCOIt}{\ensuremath{\ALCOI^\smallplus}\xspace}
\newcommand{\ALCIt}{\ensuremath{\ALCI^\smallplus}\xspace}
\newcommand{\ALCOQt}{\ensuremath{\ALCOQ^\smallplus}\xspace}
\newcommand{\ALCQt}{\ensuremath{\ALCQ^\smallplus}\xspace}
\newcommand{\CQt}{\ensuremath{\CQ^\smallplus}\xspace}
\newcommand{\UCQt}{\ensuremath{\UCQ^\smallplus}\xspace}
\DeclareMathSymbol{\shortminus}{\mathbin}{AMSa}{"39}
\newcommand{\ALCIQfwd}{\ensuremath{\ALCIQ^\shortminus}\xspace}
\newcommand{\ALCIQfwdt}{\ensuremath{(\ALCIQfwd)^\smallplus}\xspace}
\newcommand{\ALCOIQfwd}{\ensuremath{\ALCOIQ^\shortminus}\xspace}
\newcommand{\ALCOIQfwdt}{\ensuremath{(\ALCOIQfwd)^\smallplus}\xspace}
\newcommand{\SOI}{\ensuremath{{\cal{S\hspace{-0.2ex}O\hspace{-0.06ex}I}}}\xspace}
\newcommand{\SIF}{\ensuremath{{\cal{S\hspace{-0.06ex}I\hspace{-0.1ex}F}}}\xspace}
\newcommand{\K}{\ensuremath{{\cal{K}}}\xspace}
\renewcommand{\concepts}{\ensuremath{\mn{N_{\mn{C}}}}}
\renewcommand{\roles}{\ensuremath{\mn{N_{\mn{R}}}}}
\newcommand{\individuals}{\ensuremath{\mn{N_{\mn{I}}}}}
\newcommand{\variables}{\ensuremath{\mn{N_{\mn{V}}}}}
\newcommand{\var}{\textit{var}}
\newcommand{\Nom}{\mn{nom}}
\newcommand{\Nomi}{\Nom}
\newcommand{\Ind}{\mn{ind}}
\newcommand{\Rol}{\mn{rol}}
\newcommand{\Tp}{\mn{Tp}}
\newcommand{\CN}{\mn{CN}}
\newcommand{\RCN}{\mn{RCN}}
\newcommand{\tp}{\mn{tp}}
\newcommand{\wtp}{\mn{wtp}}
\newcommand{\twoexp}{\ensuremath{\textsc{2ExpTime}}\xspace}
\newcommand{\ptime}{\ensuremath{\textsc{PTime}}\xspace}
\newcommand{\rel}{\mn{rel}}
\newcommand{\cmp}{\mn{cmp}}
\newcommand{\tup}[1]{\mathbf{#1}}
\newcommand{\rt}{f}
\newcommand{\up}{\tup{g}}
\newcommand{\step}{\mathsf{step}}
\newcommand{\img}{\mathsf{img}}
\newcommand{\Nat}{\mathbb{N}}
\newtheorem{definition}{Definition}
\newtheorem{theorem}{Theorem}
\newtheorem{fact}{Fact}
\newtheorem{lemma}{Lemma}
\title{On Finite Entailment of Non-Local Queries in Description Logics}
\author{
Tomasz Gogacz$^1$ \and
V{\'i}ctor Guti{\'e}rrez-Basulto$^2$ \and
Albert Gutowski$^1$ \and\\
Yazm{\'i}n Ib{\'a}{\~n}ez-Garc{\'i}a$^2$\and
Filip Murlak$^1$ \\
   \affiliations
  $^1$University of Warsaw, Poland \\
  $^2$Cardiff University, UK \\
  \emails
\{t.gogacz, a.gutowski, f.murlak $\!$\}@mimuw.edu.pl, \{gutierrezbasultov, ibanezgarciay $\!\!$\}@cardiff.ac.uk}
\begin{document}

\maketitle

\begin{abstract}
We study the problem of \emph{finite} entailment of ontology-mediated
queries. Going beyond local queries, we allow transitive closure over
roles. We focus on ontologies formulated in the description logics
\ALCOI and \ALCOQ, extended with transitive closure. For both logics,
we show \twoexp upper bounds for finite entailment of unions of
conjunctive queries with transitive closure. We also provide a
matching lower bound by showing that finite entailment of conjunctive
queries with transitive closure in \ALC is \twoexp-hard.
\end{abstract}

\section{Introduction}

The use of ontologies to provide background knowledge for enriching
answers to queries posed to a database is a major research topic
in the fields of knowledge representation and
reasoning. In this data-centric setting, various options for the
formalisms used to express ontologies and queries exist,
but popular choices are
description logics (DLs)
and either unions of conjunctive queries (UCQs) or navigational queries.
The main reasoning problem in this scenario is query
entailment, which has been extensively investigated for different
combinations of DLs and query languages.  An important assumption in
data-centric applications is that both database instances and the
models they represent are finite.  The study of \emph{finite}
query entailment, where one is interested in reasoning over finite
models only, is thus paramount. Even so, the finite model semantics
has received far less attention than the unrestricted one. 


Prior work on finite query entailment in description logics has
concentrated on \emph{local} queries, UCQs in
particular~\cite{Rosati08,GarciaLS14,AmarilliB15,GogaczIM18,Kieronski18,GogaczGIJM19}.
The single work studying extensions of UCQs \cite{Rudolph16},
brings only undecidability results for navigational queries and
various expressive description logics. In contrast, for unrestricted
entailment a large body of literature is available, providing multiple
positive results for navigational queries and both expressive and lightweight DLs
%
%
%
\cite{CalvaneseEO14,StefanoniMKR14,BienvenuOS15,JungLMS17,GuIbJu-AAAI18,GogaczGIJM19,BednarczykR19}.
The lack of research on finite entailment of non-local queries (e.g.\
variants of regular path queries) comes as a surprise since, as noted
in the above works, they are necessary in data centric applications
dealing with graph databases.

In this paper, we close the distance to the undecidability frontier
for finite entailment delineated by \citeauthor{Rudolph16}~(\citeyear{Rudolph16})
by identifying some decidable classes of non-local ontology-mediated queries.
We focus on UCQs with transitive closure over roles;
from the viewpoint of DL applications, transitive closure is arguably
one of the most useful features of regular expressions over roles. As the
ontology component, we consider extensions of the DL \ALC, allowing for
transitive closure of roles. The study of finite entailment is relevant
for this combination because, unlike for plain CQs, 
query entailment of CQs with transitive closure is not
finitely controllable even for \ALC, and thus finite and unrestricted
entailment do not coincide. As a consequence, dedicated algorithmic
methods and lower bounds need to be developed.


Our main finding is that
finite entailment of UCQs with transitive closure over
\ALCOIt or \ALCOQt knowledge bases is \twoexp-complete.
As unrestricted entailment of positive regular path queries is
\twoexp-complete for both logics~\cite{CalvaneseEO14,BednarczykR19},
and the hardness already holds for CQs and both \ALCI and \ALCO
~\cite{DBLP:conf/cade/Lutz08,DBLP:conf/kr/NgoOS16}, we have that the
complexity of our setting is the same over arbitrary and finite
models.
As finite entailment of two-way regular path queries over
$\mathcal{ALCIOF}$ knowledge bases is undecidable~\cite{Rudolph16},
our results are quite close to the undecidability frontier.
%

While there exist positive results on finite entailment in
expressive DLs related to \ALCOIt or \ALCOQt, they are all about local
queries. For example, the \twoexp algorithms for finite entailment of
UCQs over \SOI, \SIF, and \SOQ knowledge
bases~\cite{GogaczIM18,GogaczGIJM19} are considerably easier to obtain
due to the relatively simple structure of transitive roles. For the
more expressive \SHOIF, the problem becomes
undecidable~\cite{Rudolph16}.  Indeed, to the best of our knowledge,
this paper offers the first positive results on finite entailment of
non-local queries in description logics.


%
%

In order to show our main result, we provide a series of intermediate
reductions allowing us to work with simpler variants of the input
logics and query formalisms. The reductions apply to both logics,
requiring sometimes conceptually different proofs for each of
them. Our proofs encompass different techniques. We use unravelling
operations to establish a tree-like model property, i.e.\ to show that
if a query is not entailed by a knowledge base, then there is a
tree-like counter-model. This in turn serves as the basis for
automata-based approaches to finite entailment. We also use the
coloured blocking principle to construct appropriate finite
counter-models out of infinite tree-like counter-models.

\section{Preliminaries}\label{sec:preli}

\subsection{Description Logics} 

We consider a vocabulary consisting of countably infinite disjoint
sets of \emph{concept names} $\concepts$, 
\emph{role names} $\roles$, and \emph{individual names} $\mn{N_I}$.
A \emph{role} is a role name  or an \emph{inverse role} $r^-$.
The \emph{(transitive-reflexive) closure} of a role $r$ is
$r^*$. \emph{$\ALCOIQfwdt$-concepts $C,D$} are defined by the grammar 
\[
  C,D ::= A
  \mid \neg C
  \mid C \sqcap D
  \mid \{a\}
  \mid \exists s. C
  \mid \qnrleq n {s'} C  
\]
where $A \in \mn{N_C}$, $s$ is a role or the closure of a role, $n
\geq 0$ is a natural number given in binary, and $s'$ is a role name or
the closure of a role name.  We will use $\qnrgeq n {s'} C$ as
abbreviation for $\neg \qnrleq {n{-}1} {s'} C$, together with standard
abbreviations $\bot$, $\top$, $C\sqcup D$, $\forall s.C$.
Concepts of the form $\qnrleq n  {s'}  C $, $\qnrgeq n {s'} C$,
and $\{a\}$ are called 
\emph{at-most restrictions}, \emph{at-least restrictions},
and \emph{nominals}, respectively. Note that in
$\ALCOIQfwdt$-concepts, \emph{inverse roles} are not allowed in
at-most and at-least restrictions. 

An \emph{$\ALCOIQfwdt$-TBox \Tmc} is a finite set of \emph{concept
inclusions (CIs)} $C\sqsubseteq D$, where $C,D$ are
$\ALCOIQfwdt$-concepts.  An \emph{ABox} $\Amc$ is a finite non-empty
set of \emph{concept} and \emph{role assertions} of the form $A(a)$,
$r(a,b)$, $r^*(a,b)$ where $A \in \mn{N_C}$, $r \in \mn{N_R}$ and
$\{a,b\} \subseteq \mn{N_I}$.  
 A \emph{knowledge base (KB)} is a pair
$\Kk=(\Tt, \Aa)$. We write $\CN(\Kk)$, $\Rol(\Kk)$, $\Nom(\Kk)$, and
$\Ind(\Kk)$ for, respectively, the set of \emph{all concept names,
role names, nominals, and individuals occurring in $\Kk$}. The
\emph{counting threshold} of $\Kk$ is one plus the greatest number
used  in $\Kk$.  We let $\|\Kk\|$ be the total size of the
representation of $\Kk$.

Sublogics of $\ALCOIQfwdt$ are defined by forbidding any subset of the
following features: nominals ($\Oo$), inverse roles ($\Ii$), counting
($\Qq$), and closure of roles ($^\smallplus$); this is indicated by dropping
corresponding letter or decoration from the name of the logic. The symbol
$^\shortminus$ indicates restricted interaction between inverse roles
and counting restrictions,  so if either is forbidden,  $^\shortminus$
is also dropped. The focus of this paper is on the logics $\ALCOIt$ 
and $\ALCOQt$, obtained by forbidding counting and inverse roles,
respectively. Other sublogics of $\ALCOIQfwdt$, in particular
$\ALCIQfwdt$, are used to uniformize and share fragments of
arguments.

\subsection{Interpretations} 

The semantics is given as usual via \emph{interpretations} $\Imc =
(\Delta^\Imc, \cdot^\Imc)$ consisting of a non-empty \emph{domain}
$\Delta^\Imc$ and an \emph{interpretation function $\cdot^\Imc$}
mapping concept names to subsets of the domain and role names to
binary relations over the domain. Further, we adopt the \emph{standard
name assumption}, i.e., $a^\Imc=a$ for all $a \in \mn{N_I}$.  The
interpretation of complex concepts $C$ is defined in the usual
way~\cite{DLBook}. An interpretation $\Imc$ is a
\emph{model of a TBox \Tmc}, written $\Imc\models\Tmc$ if
$C^\Imc\subseteq D^\Imc$ for all CIs $C\sqsubseteq D\in \Tmc$.  It is
a \emph{model of an ABox \Amc}, written $\Imc\models \Amc$, if $a\in
A^\Imc$ for all $A(a)\in \Amc$,  $(a,b)\in r^\Imc$ for all $r(a,b)\in
\Amc$, and $(a,b)\in (r^\Imc)^*$ for all $r^*(a,b)\in \Amc$, where
$ (r^\Imc)^*$ is the usual transitive-reflexive closure of the binary
relation $r^\Imc$. Finally, $\Imc$ is a \emph{model of a KB
$\Kk=(\Tt, \Aa)$}, written $\Imc\models \Kmc$, if $\Ii\models \Tt$ and 
$\Imc\models \Aa$.

An interpretation $\Ii$ is \emph{finite} if $\Delta^\Ii$ is finite. An
interpretation $\Imc'$ is a \emph{sub-interpretation} of $\Imc$,
written as $\Imc'\subseteq \Imc$, if $\Delta^{\Imc'}\subseteq
\Delta^\Imc$, $A^{\Imc'}\subseteq A^\Imc$, and $r^{\Imc'}\subseteq
r^{\Imc}$ for all $A\in\mn{N_C}$ and $r\in \mn{N_R}$. For $\Sigma
\subseteq \concepts \cup \roles$, $\Ii$ is a
\emph{$\Sigma$-interpretation} if $A^\Ii=\emptyset$ and
$r^\Ii=\emptyset$ for all $A\in \concepts \setminus \Sigma$ and $r\in
\roles\setminus\Sigma$. The \emph{restriction of $\Imc$ to signature
$\Sigma$} is the maximal $\Sigma$-interpretation $\Imc'$ with
$\Imc'\subseteq \Imc$. The \emph{restriction of $\Imc$ to domain
$\Delta$}, written $\Ii\upharpoonright \Delta$, is the maximal
sub-interpretation of $\Imc$ with domain $\Delta$. The union $\Ii \cup
\Jj$ of $\Ii$ and $\Jj$ is an interpretation such that $\Delta^{\Ii
\cup \Jj} = \Delta^{\Ii} \cup \Delta^{\Jj}$, $A^{\Ii \cup \Jj} =
A^{\Ii} \cup A^{\Jj}$, and $r^{\Ii \cup \Jj} = r^{\Ii} \cup r^{\Jj}$
for all $A\in\mn{N_C}$ and $r\in \mn{N_R}$.

A \emph{homomorphism} from interpretation $\Ii$ to interpretation
$\Jj$, written as $h : \Ii \to \Jj$ is a function $h : \Delta^\Ii \to
\Delta^\Jj$ that preserves roles, concepts, and individual names: that
is, for all $r \in \roles$, $(h(d), h(e)) \in r^\Jj$ whenever $(d,e)
\in r^\Ii$, for all $A \in \concepts$, $h(d) \in A^\Jj$ whenever $d
\in A^\Ii$, and $h(a)= a$ for all $a \in \Ind(\Kmc)$.

\subsection{Queries and Finite Entailment}

Let $\variables$ be a countably infinite set of \emph{variables}. An
\emph{atom} is an expression of the form $A(t)$, $t = t'$, $r(t,t')$,
or $r^*(t,t')$ with $A \in\concepts$, $r \in \roles$, and $t,t' \in
\variables\cup\individuals$, referred to as \emph{concept},
\emph{equality}, \emph{role}, and \emph{transitive atoms},
respectively.
A \emph{conjunctive query (with transitive atoms)} is a set of atoms,
understood as the conjunction thereof.  We write
$\CQt$ for conjunctive queries (with transitive atoms), and $\CQ$ for
conjunctive queries without transitive atoms.
Let $\var(q)$ be the set of variables occurring in the atoms of $q \in \CQt$.
A \emph{match for $q$ in $\Imc$} is a function $\eta:\var(q)\to
\Delta^\Ii$ such that $\Ii,\eta\models q$ under the standard semantics
of first-order logic, assuming that the extension of $r^*$ is the
reflexive-transitive closure of the extension of $r$. 
An interpretation $\Ii$ \emph{satisfies} $q$, written $\Ii\models q$, if
there exists a match for $q$ in $\Ii$.

Fix $p,q \in \CQt$ and a function $\eta:\var(p)
\to \variables \cup \individuals$. Let $\eta(p)$ be 
obtained from $p$ by substituting each $x\in\var(p)$ with $\eta(x)$.
We call $\eta$ a \emph{homomorphism} if $\eta(p) \subseteq q^*$, where
$q^*$ is obtained by saturating $q$ as follows for all $r$ and all $s,s' \in \{r, r^*\}$:
if $\{x\,{=}\,y, y\,{=}\,z\} \subseteq q$, add $x\,{=}\,z$,
if $\{s(x,y), y\,{=}\,y'\} \subseteq q$, add $s(x,y')$,
if $\{s(x,y), x\,{=}\,x'\} \subseteq q$, add $s(x',y)$,
if $\{s(x, y), s'(y,z)\} \subseteq q$, add $r^*(x,z)$.
Let $\Ii\models q$. If there is a homomorphism from $p$ to $q$,
then $\Ii\models p$. If there is a homomorphism from $\Ii$ to $\Jj$,
then $\Jj\models q$.

A union of conjunctive queries is a finite set of $\CQt$s. We write
$\UCQt$ for unions of conjunctive queries, and $\UCQ$ for unions of
conjunctive queries without transitive atoms.
An interpretation $\Ii$ satisfies $Q\in\UCQt$, written as $\Ii\models
Q$, if $\Ii \models q$ for some $q \in Q$.
%
%
A \emph{fragment} of $Q$ is a $\CQt$ formed by a \new{connected}
subset of atoms of some $q \in Q$. We let $\|Q\|$ denote the number of all
fragments of $Q$; note that it is exponential in $\max_{q\in Q} |q|$.


We say that $\Kk$ \emph{finitely entails} $\Qq$, written
$\Kk\fentails Q$, if each finite model of $\Kk$ satisfies $Q$. A model
of $\Kk$ that does not satisfy $Q$ is a \emph{counter-model}.  The
\emph{finite entailment problem} asks if a given KB $\Kk$ finitely
entails a given query $Q$.


We also consider \emph{finite entailment modulo types}, which allows
more precise complexity bounds.  A \emph{unary $\Kk$-type} is a subset
of $\CN(\Kk)$ including either $A$ or 
$\bar A$ for each $A \in \CN(\Kk)$. Let $\Tp(\Kk)$ be the set of all
unary $\Kk$-types. For an interpretation $\Ii$ and an element $d \in
\Delta^\Ii$, the \emph{unary $\Kk$-type of $d$ in $\Ii$} is
$\tp^\Ii(d) = \left \{ A \in \CN(\Kk) \bigm | d \in A^\Ii\right\}$. We
say that $\Ii$ \emph{realizes} a unary $\Kk$-type $\tau$ if $\tau =
\tp^\Ii(d)$ for some $d \in \Delta^\Ii$.  For a KB $\Kk$, a query $Q$,
and a set of unary types $\Theta \subseteq \Tp(\Kk)$ we write $\Kk
\fentails^\Theta Q$ if for each finite interpretation $\Ii$ that only
realizes types from $\Theta$, if $\Ii\models \Kk$ then $\Ii\models Q$.
In this context, a counter-model is a model of $\Kk$ that only
realizes types from $\Theta$ and does not satisfy $Q$. 

\subsection{Normal Form and Additional Assumptions}

Without loss of generality, we assume throughout the paper that all
CIs are in one of the following \emph{normal forms}:
%
%
\begin{align*}
  \bigsqcap_i A_i \sqsubseteq \bigsqcup_j B_j,
  \quad A \sqsubseteq \qnrleq n r B,
  \quad A \sqsubseteq \qnrgeq n r B,
  \\
  \quad A \sqsubseteq \qnrleq n {r^*} B,
  \quad A \sqsubseteq \qnrgeq n {r^*} B,
  \\ 
  \quad A \sqsubseteq \forall r^-. B,
  \quad A\sqsubseteq \exists r^-.B,
  \quad A\sqsubseteq \exists (r^-)^*.B,
\end{align*}
where $A,A_i,B,B_j$ are concept names or nominals, $r\in \mn{N_R}$,
and empty disjunction and conjunction are equivalent to $\bot$ and
$\top$, respectively. In logics without counting, the number $n$
in at-most restrictions must be 0, and in at-least restrictions it
must be 1.  
%
%
We also assume that for each concept name $A$ used in $\Kk$ there is a
\emph{complementary} concept name $\bar A$ axiomatized with CIs $\top
\sqsubseteq A \sqcup \bar A$ and $A \sqcap \bar A \sqsubseteq \bot$.


A concept name $B\in\CN(\Kk)$ is \emph{relevant} if $\Kk$ contains a
CI of the form $A \sqsubseteq \qnrleq n {r^*} B$ with $n>0$. We let
$\RCN(\Kk)$ denote the set of relevant concept names in $\Kk$.  A
concept name $B$ is \emph{relevant in $\Ii$ for $d\in\Delta^\Ii$ with
respect to $r\in\roles$} if $d \in A^\Ii$ for some CI $A \sqsubseteq
\qnrleq n {r^*} B$ in $\Kk$ \new{with $n>0$}. We call $\Kk$
\emph{sticky} if for each model $\Ii$ of $\Kk$, each $r \in \roles$,
and each $d\in\Delta^\Ii$, all concept names relevant for $d$ with
respect to $r$ are also relevant (with respect to $r$) for each
$r$-successor of $d$. Stickiness of $\Kk$ can be assumed without loss
of generality:
\new{
for each $B\in\RCN(\Kk)$ and $r
\in \Rol(\Kk)$ introduce fresh concept names $B_r$ and $\overline B_r$
axiomatized with $\top \sqsubseteq B_r \sqcup \overline B_r $, $B_r
\sqcap \overline B_r \sqsubseteq \bot$, $B_r \sqsubseteq \qnrleq 0 r
{\overline B_r}$, $B_r \sqsubseteq \qnrleq N {r^*} B$, where $N$ is
the counting threshold in $\Kk$, and add $A \sqsubseteq B_r$ for each
CI $A \sqsubseteq \qnrleq n {r^*} B$ in $\Kk$.
}

A variable $y$ is \emph{linking} in $q\in\CQt$ if the only atoms in
$q$ using $y$ are $r^*(x, y), r^*(y, z)$ for some $x, z \in
\var(q)$. In a match $\eta$ for $q$, $\eta(y)$ can be any node on a
path from $\eta(x)$ to $\eta(z)$. We call $q$ \emph{normalized} if in
every atom over $r^*$ at least one variable is linking, and for every
two atoms $r^*(x, y), r^*(y, z)$ with $y$ linking, exactly one of the
variables $x$ and $z$ is linking too. Each query can be normalized by
first eliminating all linking variables, and then subdividing each
$r^*$ atom into three $r^*$ atoms using two fresh linking variables.
Without loss of generality we can assume that the input
\new{$\UCQt$s consist of normalized connected $\CQt$s}.

\section{Plan of Attack}

Our main technical contribution are the following results.

\begin{theorem}\label{thm:LoweBound}
 Finite entailment of $\UCQt\!$s over \ALCOIt or \ALCOQt knowledge bases is \twoexp-complete. 
 \end{theorem}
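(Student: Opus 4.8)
The statement combines a \twoexp upper bound for finite entailment of $\UCQt$s over \ALCOIt and \ALCOQt with a matching lower bound, so the plan is to prove the two directions separately and to exploit the fact that \ALC is a common sublogic of both target logics, so that any hardness proved for \ALC transfers upward.

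For the lower bound, the plan is to reduce from a \twoexp-hard problem -- the word problem for alternating Turing machines running in exponential space, equivalently a doubly-exponential tiling problem -- using only \ALC in the ontology and a single $\CQt$ as the query. The idea is to let the TBox force every model to lay out a computation (or tiling) along a structure whose rows are indexed by an exponential-length binary counter, so that each row encodes one configuration of width $2^n$. A single transitive atom of the form $r^*(x,y)$ in the query then bridges the exponential distance between a cell and the corresponding cell of the successor configuration, letting one \CQt detect any local defect in the run. Designing the query so that it matches exactly on defective encodings yields $\Kk \notfentails Q$ precisely when a valid, halting (hence finite) computation exists. Because the paper notes that finite and unrestricted entailment already diverge for $\CQt$ over \ALC, the delicate point is to make the reduction sound for \emph{finite} models specifically: valid halting computations must yield genuinely finite counter-models, and conversely finite counter-models that dodge the query must encode accepting runs.

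For the upper bound, the plan is to decide the existence of a finite counter-model through a three-stage argument, after first reducing \ALCOIt and \ALCOQt to the uniform fragments used in the paper (such as $\ALCIQfwdt$). First I would establish a tree-like model property: if $\Kk$ has a finite counter-model for $Q$, then it has a (possibly infinite) tree-like counter-model. This is obtained by unravelling, with the normalized query form and the analysis of linking variables ensuring that transitive atoms of $Q$ can only be matched along controlled paths, so that unravelling preserves both model-hood of $\Kk$ and non-satisfaction of $Q$. Second, I would construct an alternating parity tree automaton accepting exactly those tree-like structures that are models of $\Kk$, avoid $Q$, and are \emph{finiteness-compatible}, i.e.\ can be folded back into a finite model; the finiteness bookkeeping is carried by the relevant concept names and the stickiness assumption for the counting-over-$r^*$ restrictions $\qnrleq n {r^*} B$ in the \ALCOQt case, and by analogous inverse-role bookkeeping in the \ALCOIt case. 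The automaton has doubly-exponentially many states, and its nonemptiness can be tested in time polynomial in the number of states and exponential in the index, which yields the \twoexp bound; keeping the state space to a single exponential despite $\|Q\|$ being exponential in the query size requires care. Third, I would use the coloured blocking principle to turn an accepted tree into an actual finite counter-model, identifying nodes of matching colour (unary type together with the finiteness bookkeeping) to close the tree into a finite graph.

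The hardest part I expect is the interaction of finiteness with transitive-closure counting. In an infinite tree the at-most restrictions $\qnrleq n {r^*} B$ are easy to satisfy, but folding to a finite model can succeed only if the number of $B$-elements reachable by $r$-paths stays below the threshold; the coloured blocking must respect exactly these bounds, which is the whole point of introducing relevant concept names and stickiness. Equally delicate is ensuring that the folding creates no \emph{new} match of $Q$: identifying two nodes introduces fresh $r^*$-paths, and one must argue that none of these completes a previously-avoided transitive atom into a full query match. Getting the colours rich enough to control both constraints simultaneously, while keeping the automaton within a single exponential, is where the technical weight of the argument lies.
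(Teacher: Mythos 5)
Your lower bound is essentially the paper's own: the paper also reduces the word problem of exponentially space-bounded alternating Turing machines, adapting the $\mathcal{SH}$-encoding of Eiter et al.\ to \ALC with a single \CQt (computation trees of configuration trees whose cell gadgets store both the current and the previous configuration, and a query that matches exactly on improper trees), so that stage of your plan is sound and on target.

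The upper bound, however, has a genuine gap: the element-level tree-like model property you assert in your first stage is \emph{false} for \ALCOQt. Consider $A \sqsubseteq \qnrleq 1 {r^*} B$ together with a CI forcing every element to satisfy $\exists r^*.B$, satisfied in a finite model by a cycle containing one $B$-element. In any tree-shaped model the reachable sets of the nodes along an infinite branch are strictly nested, so no single $B$-witness can serve them all, and duplicating witnesses violates the at-most restriction; no tree-shaped counterpart exists, so no unravelling argument of the kind you propose can get off the ground. This is exactly why the paper does not run one automaton over element-level trees, but instead introduces \emph{hybrid decompositions}: bags are arbitrarily large, well-connected finite interpretations (strongly connected components of the finite counter-model plus their relevant successors), neighbouring bags may share, besides the root, only a set $\Gamma_u$ of relevant elements of size at most $N^{|\RCN(\Kk)|}$ (Fact~\ref{fact:relevant}), well-connectedness lets transitive atoms be replaced by localizations (reducing \UCQt to $\mathcal{UCQ}$), and a safety condition guarantees foldability. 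Since bags are unbounded, the automata necessarily run over an \emph{infinite} alphabet and their transition relations cannot be represented explicitly; emptiness is decided in \ptime relative to an oracle for the step relation (Fact~\ref{fact:emptiness}), where each oracle query is itself a smaller finite-entailment instance---whence the paper's cascade of four reductions (nominals/ABoxes, multiple roles, transitive atoms, base case) rather than your one-shot construction. Two further points would sink the plan as written even on the \ALCOIt side, where plain unravelling does work: coloured blocking (Fact~\ref{fact:coloured-blocking}) applies only to queries without transitive atoms, so the folding must first ensure bounded simple directed $r$-paths and replace $Q$ by the rewriting $Q^{(\ell)}$ (Fact~\ref{fact:bounded}), a mechanism you flag as ``delicate'' but do not supply; and your complexity accounting conflates automata models---emptiness polynomial in the number of states holds for \emph{nondeterministic} B\"uchi/parity automata, while alternation costs another exponential, so an alternating parity automaton with doubly exponentially many states yields a triply exponential bound, not \twoexp.
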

%
To prove upper bounds, we will show a series of reductions that will
allow us to deal at the end with a base case where   the query is a
plain UCQ and the knowledge base is `single-role ABox-trivial'  either
in $\ALCIt$ or in $\ALCQt$ without at-most restrictions over closures
of roles. These reductions can be seen as different phases in  our
decision procedure, each of them depending on the previous one. 


\medskip \noindent {\bf I.}
We start by showing in Section~\ref{sec:ABoxesNomMain} that one can eliminate nominals from the input knowledge bases, i.e.\ roughly,  that one can reduce finite entailment of $\UCQt\!$s over  \ALCOIt or \ALCOQt KBs to \ALCIt or \ALCQt KBs, respectively. We further show that the input ABox can be assumed to be `trivial' in the sense that it contains no role assertions and that only one individual name occurs in it.  

\smallskip \noindent {\bf II.}
In Section~\ref{sec:mulroles} we show that the variants obtained in Step I above can be further reduced to the case where the knowledge base contains a single role name. This is in line with seeing \ALCQt and \ALCIt as a fusion of logics~\cite{BaaderLSW02}, where the interaction between different roles is limited. In a nutshell, we show
that there exists a finite counter-model if and only if there exists a tree-like counter-model accepting a decomposition into components interpreting a single role name.  
 

\smallskip \noindent {\bf III.}
In Section~\ref{sec:elim-trans}, assuming  single-role ABox-trivial $\ALCIt$ or 
$\ALCQt$  KBs, we eliminate transitive closure from queries and from at-most
restrictions. This  step is the one requiring  the most technical
effort. We will develop the notion of hybrid decompositions, which are
tree decompositions that associate with each node arbitrarily large
interpretations, but have a certain more subtle parameter bounded. 
We will observe special characteristics of these interpretations that
allow us to establish structural restrictions between neighbouring nodes
in the decomposition.  Roughly, the key result will be  that finite
counter-models can be unravelled  into  counter-models  admitting  a
hybrid decomposition with the above features. This tree-like model
property will then be the basis for automata-based approaches to
finite entailment in this setting.  

\smallskip \noindent {\bf IV.}
 Section~\ref{sec:basecase}  provides a decision procedure for the base case described above.  We will  rely again on tree unravelling and automata-based techniques. 

%

\smallskip
For the matching lower bound, we show the following.

\begin{theorem}
 Finite entailment of  $\CQt\!$s over \ALC knowledge bases is \twoexp-hard. 
 \end{theorem}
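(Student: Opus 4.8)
The plan is to reduce from the word problem of an alternating Turing machine running in exponential space, which is a canonical \twoexp-complete problem. The standard template here is: given such a machine $M$ and input $w$, build in polynomial time a knowledge base $\Kk_{M,w}$ in \ALC together with a $\CQt$ $q_{M,w}$ so that $\Kk_{M,w} \fentails q_{M,w}$ if and only if $M$ rejects $w$. Since the machine uses $2^n$ tape cells, a single configuration is naturally encoded by a path of length $2^n$ through an interpretation, with the $n$-bit cell address stored in a block of $n$ concept names. The computation tree of $M$ then becomes a tree of such configuration-paths, branching at existential and universal states. The query $q_{M,w}$, built from transitive atoms, is the key device: it must detect \emph{faulty} computations (inconsistent cell contents between successive configurations, illegal transitions, or a final accepting configuration), so that a counter-model corresponds exactly to a correct rejecting computation.

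\medskip \noindent
\textbf{The role of transitive closure and finiteness.}
First I would exploit the fact, noted in the introduction, that for \CQt over \ALC finite and unrestricted entailment genuinely differ, so the finiteness constraint must be doing real work in the reduction; it cannot simply be a transcription of an unrestricted-entailment hardness proof. The natural way transitive atoms earn their keep is in enforcing the exponential counter: a transitive atom $r^*(x,y)$ can connect two cells whose addresses agree on all but the checked bits, letting a \emph{constant-size} query navigate across a region of exponentially many elements and compare the contents of cells at corresponding addresses in consecutive configurations. Without $r^*$, comparing cell $i$ of one configuration with cell $i$ of the next would require a query of size $2^n$. The intended equivalence is that the \emph{absence} of a query match witnesses a locally consistent computation; I would set up the TBox so that \ALC constraints force each element to carry a well-formed cell (address bits, tape symbol, head marker, state), and force branching to respect the existential/universal transition relation of $M$.

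\medskip \noindent
\textbf{Where finiteness bites.}
The subtle point, and the reason we expect this to be the main obstacle, is arranging that \emph{finite} counter-models correspond to \emph{rejecting} computations while spurious infinite structures are ruled out. I would use finiteness to preclude an infinite descending chain that could otherwise pass off a non-terminating or ill-founded "computation" as a counter-model: in a finite model, any generated computation tree must bottom out, and by designing the TBox so that every leaf is forced to be a halting configuration, a finite counter-model is compelled to close off every branch with an explicit accept/reject verdict. The query would then be engineered to fire on any \emph{accepting} leaf, so that a finite counter-model exists precisely when \emph{all} branches reject, i.e.\ $M$ rejects $w$. Making the alternation faithful---ensuring universal states force all successor configurations and existential states allow a choice, in the presence of the open-world, tree-model-friendly semantics of \ALC---is where the encoding must be most carefully checked, since \ALC's inability to count or invert roles forces the address counter and the synchronization between parent and child configurations to be expressed purely through value-restriction propagation and query patterns.

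\medskip \noindent
\textbf{Assembling the bound.}
Finally I would verify that $\|\Kk_{M,w}\|$ and $|q_{M,w}|$ are polynomial in $|M|+|w|+n$: the address is $O(n)$ bits, the transition table is $O(|M|)$ CIs, and the query is of constant or linear size because the transitive atoms absorb the exponential distances. Establishing the forward direction (a rejecting computation yields a finite counter-model) amounts to reading off a finite interpretation from the finite accepting/rejecting computation tree and checking, atom by atom, that no query fragment matches; the backward direction (a finite counter-model yields a rejecting computation) requires extracting a genuine computation tree from the model and arguing, via the finiteness-enforced halting of every branch, that it is well-formed and rejecting. I expect the backward direction, and specifically ruling out "short-circuit" matches of the transitive query across unintended pairs of elements, to demand the most care; this is typically handled by introducing auxiliary marker concepts that segment the model so that $r^*$ can only traverse within a single configuration-comparison gadget.
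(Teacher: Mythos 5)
Your skeleton coincides with the paper's: both reduce from the word problem of exponentially space bounded alternating Turing machines, both make counter-models encode computation trees assembled from exponential-size configurations, and both use the transitive atoms of the \CQt exactly to compensate for \ALC lacking the transitive roles and role hierarchies of the $\mathcal{SH}$ encoding of Eiter et al.\ that the paper adapts. However, your treatment of alternation contains a genuine error. You stipulate that universal states force \emph{all} successor configurations while existential states allow a choice, that the query fires on accepting leaves, and you conclude that a finite counter-model exists precisely when ``all branches reject, i.e., $M$ rejects $w$''. Under that branching discipline a counter-model is a strategy of the \emph{existential} player all of whose plays reject --- a witness that the existential player \emph{can avoid} acceptance --- which is not equivalent to rejection of an ATM. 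Concretely, if some existential configuration has one successor leading only to acceptance and another leading only to rejection, then $M$ accepts $w$, yet your counter-model exists (pick the rejecting successor), so your reduction would wrongly report entailment failure on an accepted input. A rejection witness must be dualized: \emph{existential} configurations include all successors, \emph{universal} configurations include one, and every leaf rejects. The paper sidesteps this entirely by keeping the standard polarity: a counter-model represents an \emph{accepting} computation tree (one successor per existential configuration, all successors per universal one, with the TBox forcing leaves to be accepting), so a counter-model exists iff $M$ accepts $w$, and \twoexp-hardness follows because the class is closed under complement.

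A secondary point: your premise that finiteness ``must be doing real work'' and that the construction ``cannot simply be a transcription of an unrestricted-entailment hardness proof'' is contradicted by the paper, whose encoding \emph{is} a direct adaptation of the unrestricted $\mathcal{SH}$ lower bound, the only new ingredient being that $r^+$ atoms in the query replace atoms over a transitive super-role; the paper explicitly notes that the properness characterization (Proposition 4 of Eiter et al.) does not depend on transitive roles or role inclusions. Finiteness is harmless in the lower bound because an accepting computation tree of an exponentially space bounded ATM is already a finite object, so the counter-model read off from it is finite as it stands --- the delicate finite-model machinery of this paper lives in the upper bounds. Consequently your ``Where finiteness bites'' apparatus (halting verdicts forced at leaves to exploit finiteness) is solving a problem the reduction does not have, while the place where the proof actually invests its care --- the cell gadget recording current and previous configurations, the per-address-bit copies of the gadget query conjoined into a single \CQt (so the query is linear in $|w|$, not constant), and the lemma reducing improperness to the existence of a single match --- is exactly the part you defer to hand-waving.
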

The proof is  by reducing the word problem of exponentially space
bounded alternating Turing machines,  which is known to be
\twoexp-hard~\cite{chandraAlternation1981}. The encoding is a direct
adaptation of the one used to show that (arbitrary) entailment of CQs
over $\mathcal{SH}$ knowledge bases is \twoexp-hard~\cite{EiterLOS09},
but use $\CQt\!$s to compensate for the weaker logic.


\section{Eliminating Nominals, Trivializing ABoxes}
\label{sec:ABoxesNomMain}

Finite entailment of a $\UCQt$ over an $\ALCOIt$ or $\ALCOQt$ KB can
be reduced to multiple instances of finite entailment of $\UCQt$s over
$\ALCIt$ or $\ALCQt$ KBs, respectively, with \emph{trivial} ABoxes;
the latter means that ABoxes mention only one individual (with a
fully specified unary type) and contain no binary assertions.
Concluding complexity bounds for $\ALCOIt$ or $\ALCOQt$ from bounds
for $\ALCIt$ or $\ALCQt$ requires precise estimations of the number of
these instances and their parameters. The key parameters are the size
of the KB, the number of $\CQt$s in the query, and their size. 

Consider an $\ALCOIt$ or $\ALCOQt$ KB $\Kk$ and $Q \in \UCQt$ with
$\CQt$s of size at most $m$. Then, the reduction gives an at
most doubly exponential number of instances, with KBs of size at most
%
%
$\poly(\|\Kk\|, N^{1+|\RCN(\Kk)|})$ and queries with at most
%
%
$|Q| \cdot \big(\|\Kk\|\cdot N^{|\RCN(\Kk)|}\big)^{O(m)}$
$\CQt$s of size $O(m)$. Importantly, the reduction preserves the counting
threshold $N$ and the set $\RCN(\Kk)$ of relevant concept names.


Let us sketch the argument. Using routine one-step unravelling one can
show that for $\ALCOIt$ it suffices to search for counter-models $\Ii$
that can be decomposed into domain-disjoint interpretations
$\Ii\upharpoonright \Ind(\Kk)$ and $\Ii_1, \dots, \Ii_n$ for some $n
\leq |\Kk| \cdot |\Ind(\Kk)|$ with two kinds of additional edges:
arbitrary edges connecting elements from $\bigcup_i\Delta^{\Ii_i}$
with elements from $\Nom(\Kk)$, and a single edge connecting a
distinguished element $d_i\in\Delta^{\Ii_i}$ with a corresponding
element $e_i \in \Ind(\Kk)$ for each $i\leq n$. Iterating over
possible $\Ii\upharpoonright \Ind(\Kk)$, we check if there exist
$\Ii_i$ that \new{are compatible with}
$\Kk$, provide witnesses for elements from
$\Ind(\Kk)$, and avoid satisfying the query. The properties required
for a single $\Ii_i$ can be encoded as a finite entailment problem for
a modified KB whose ABox describes the unary type of $d_i$, and a
union of selected fragments of $Q$ with some variables
substituted by elements of $\{d_i\} \cup \Nom(\Kk)$. By representing
the existence of an $r$-edge to $a\in \Nom(\Kk)$ with a fresh concept
name $A_{\exists r.\{a\}}$, suitably axiomatized, we can eliminate
nominals from the KB and from the query.

For $\ALCOQt$, unravelling is made harder by at-most restrictions over
closures of roles. We shall treat in a special way all successors of
elements from $\Ind(\Kk)$ that are affected by such at-most
restrictions. Let us call an $r$-successor $e$ of $d \in \Delta^\Ii$
\emph{directly $r$-relevant} if $e\in A^\Ii$ for some $A$ relevant for
$d$ with respect to $r$, and let $\rel_r^\Ii(d)$ be the set of all \emph{$r$-relevant
successors} of $d$; that is, the least set containing all directly
$r$-relevant successors of $d$ and closed under directly $r$-relevant
successors. The key insight is that this set has bounded size. 

%

\begin{fact}[\citeauthor{GuIbJu-AAAI18} \citeyear{GuIbJu-AAAI18}]  \label{fact:relevant}
  If $N$ is the counting threshold in $\Kk$ and
  $\Ii \models \Kk$, then   $|\rel_r^\Ii(d)|\leq
  N^{|\RCN(\Kk)|}$ for all $d\in\Delta^\Ii$ and  $r \in \Rol(\Kk)$.   
\end{fact}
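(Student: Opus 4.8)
The plan is to exploit the at-most restrictions over $r^*$ directly: each such restriction $A \sqsubseteq \qnrleq n {r^*} B$ with $n>0$ contributes $B$ to $\RCN(\Kk)$ and, in any model, caps by $n \le N-1$ the number of $r^*$-successors in $B^\Ii$ of any element of $A^\Ii$. I would first invoke the stickiness assumption, available without loss of generality: it guarantees that the set $R_x$ of concept names relevant for $x$ with respect to $r$ is monotone non-decreasing along $r$-edges. Since every element of $\rel_r^\Ii(d)$ is reached from $d$ by a path of $r$-edges through directly $r$-relevant successors, each such element is an $r^*$-successor of $d$, lies in $B^\Ii$ for some $B\in\RCN(\Kk)$, and satisfies $R_\cdot \supseteq R_d$; more generally, relevance can only accumulate as one descends.

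I would organize $\rel_r^\Ii(d)$ as a tree $T$ rooted at $d$ via the directly $r$-relevant-successor relation. The key structural facts are: (i) for every $B\in\RCN(\Kk)$ and every node $a$ with $B\in R_a$, the number of $r^*$-successors of $a$ in $B^\Ii$, and in particular the number of $B$-labelled descendants of $a$ in $T$, is at most $N-1$; and (ii) the nodes at which a fixed $B$ first becomes relevant form an antichain in $T$, because once $B\in R_a$ stickiness forces $B\in R_{a'}$ for all descendants $a'$. Fixing, for each such \emph{$B$-origin} $a$, an injective labelling of its at most $N-1$ successors in $B^\Ii$ by slots in $\{1,\dots,N{-}1\}$, I would assign to each $e\in\rel_r^\Ii(d)$ a tuple $\sigma(e)\in\{0,\dots,N{-}1\}^{\RCN(\Kk)}$ recording, per relevant concept name $B$, the slot of the $B$-element governing the descent to $e$, and $0$ when $B$ is never relevant on the path to $e$. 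The bound then follows from $|\{0,\dots,N{-}1\}^{\RCN(\Kk)}| = N^{|\RCN(\Kk)|}$ once $\sigma$ is shown injective.

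Proving injectivity of $\sigma$ is the main obstacle, and it is where facts (i) and (ii) together with stickiness must be combined carefully. The difficulty is twofold: relevance grows as we go down, so no single ancestor's budget bounds the whole set and one must charge each element to the nearest ancestor where its \emph{responsible} concept became relevant; and $T$ genuinely branches, so a naive count of $B$-elements along a path collides across sibling subtrees, which is exactly why the argument must record injective slot indices rather than mere multiplicities. Concretely, I would show that from $\sigma(e)$ one can reconstruct the $T$-path $d=x_0,\dots,x_\ell=e$ step by step: at each node the responsible concept together with its recorded slot pins down the next node uniquely among the at most $N-1$ candidates supplied by the governing at-most restriction. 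This yields $|\rel_r^\Ii(d)|\le N^{|\RCN(\Kk)|}$; in fact the same bookkeeping gives the sharper bound $N^{|\RCN(\Kk)|}-N^{|\RCN(\Kk)\setminus R_d|}$, since the responsible concept of the first descent step is already relevant at $d$, so at least one coordinate of $\sigma(e)$ indexed by $R_d$ is nonzero.
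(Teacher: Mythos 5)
You are attempting a statement the paper itself does not prove: Fact~\ref{fact:relevant} is imported from \citeauthor{GuIbJu-AAAI18}, so the comparison is with that cited argument rather than with an in-paper proof. Your overall scheme is the right one and matches the standard counting: fix a spanning tree of $\rel_r^\Ii(d)$ rooted at $d$, use stickiness to make relevance monotone along tree edges, observe that for each $B\in\RCN(\Kk)$ the first node on a path where $B$ becomes relevant (the $B$-origin) has, via its CI $A \sqsubseteq \qnrleq n {r^*} B$ with $n\leq N-1$, a budget of at most $N-1$ many $r^*$-successors in $B^\Ii$, and charge each element to a tuple in $\{0,\dots,N-1\}^{\RCN(\Kk)}$. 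Your facts (i) and (ii) are correct (one minor repair: the directly-$r$-relevant-successor relation is in general a dag, possibly with several predecessors per element, so you must first fix a spanning tree).

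The genuine gap is the injectivity of $\sigma$, exactly the step you flag as the main obstacle. Your tuple stores \emph{one} slot per concept name, but a single concept can govern several steps on one root-to-leaf path: all $B$-steps below the $B$-origin $a$ draw on $a$'s single budget, and up to $N-1$ of them may lie on the same path (e.g.\ $d \to b_1 \to b_2$ with both steps governed by $B\in R_d$). Hence ``the $B$-element governing the descent to $e$'' is not well defined, and the stepwise reconstruction you propose cannot work: at an intermediate node the tuple tells you neither which concept is responsible for the next step nor, if that concept recurs later on the path, the slot of its current occurrence. Under the natural ``first occurrence'' reading the map is outright non-injective: with $B\in R_d$ and $C$ first relevant at $b_1$, the path $d \to b_1 \to c_1 \to b_2 \to c_2$ (steps governed by $B,C,B,C$) gives $\sigma(c_2)=\sigma(c_1)$, since on both paths the first $B$-step is $b_1$ and the first $C$-step is $c_1$. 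The encoding is saved by taking the \emph{last} $B$-step's slot for each $B$, but then injectivity needs a different argument than path reconstruction: compare $e\neq e'$ with $\sigma(e)=\sigma(e')$ at the deepest common node $x$ of their tree paths. If the paths diverge at $x$ with first steps governed by $B$ and $B'$, then $B$ is relevant at $x$, so by stickiness every $B$-step on either path has the same origin and hence draws slots injectively from one budget; equality of the last-$B$ coordinates then identifies a $B$-step inside one child's subtree with a $B$-step inside the other child's subtree or on the common prefix, which is impossible. The case where $e$ is an ancestor of $e'$ is handled analogously using the concept governing the final step into $e'$. With this fix your sharper bound $N^{|\RCN(\Kk)|}-N^{|\RCN(\Kk)\setminus R_d|}$ also goes through (the concept of the first step lies in $R_d$ and governs at least one step, so its last-occurrence coordinate is nonzero); note also that the zero value should mean ``$B$ governs no step on the path'', not ``$B$ is never relevant on the path''.
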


\noindent In the argument for $\ALCOIt$, we replace $\Ind(\Kk)$ with
\[\Ind(\Kk) \cup \bigcup \big \{ \rel^\Ii_r(a) \bigm| {r \in
      \Rol(\Kk)}, {a \in \Ind(\Kk)}\big\}\,,\]
and similarly for $\Nom(\Kk)$.
Appendix A has full details.

In the following sections we shall focus on the decision
procedures for $\ALCIt$ and $\ALCQt$, but to ensure that they give
$\twoexp$ upper bounds for $\ALCOIt$ and $\ALCOQt$ when combined
with the reductions above, we will need more careful complexity analysis.


\section{Eliminating Multiple Roles}~\label{sec:mulroles}
Moving on with simplifications, we reduce finite entailment of $\UCQt$s
over $\ALCIt$ or $\ALCQt$ KBs to the single-role case. More 
precisely, we show how to solve one instance of finite entailment by
solving many instances of finite entailment modulo types with
single-role KBs.
 
Let $\Kk = (\Tt,\Aa)$ be an $\ALCIt$ or $\ALCQt$ KB, $Q \in \UCQt$
and $ m = \max_{q\in Q} |q|$. One can decide if 
$\Kk\fentails Q$ in time $O\big(\poly\big(\|\Kk\|, 2^{|\CN(\Kk)|\cdot |Q|
  \cdot 4^{m}}\big)\big)$ using
oracle calls to finite entailment modulo
types with single-role KBs of size at most $\poly(\|\Kk\|,\|Q\|)$,
unions of at most $|Q| \cdot 2^{\poly(m)}$ $\CQt$s of size $\Oo(m)$,
and type sets of size at most $2^{|\CN(\Kk)|\cdot |Q| \cdot 4^{m}}$;
the KBs inherit their counting threshold and 
set of relevant concept names from $\Kk$.

The argument again relies on unravelling, which allows focusing on
\emph{tree-like} counter-models; that is, ones that can be decomposed
into multiple finite $\Sigma_r$-subinterpretations with
$\Sigma_r=\{r\}\cup\CN(\Kk)$ and $r \in \Rol(\Kk)$, called
\emph{bags}, arranged into a (possibly infinite) tree such that:
(1) two bags share a single element if they are neighbours, and are
disjoint if they are not neighbours; (2) each element $d$ occurs in
exactly one $\Sigma_r$-bag for each $r\in \Rol(\Kk)$ and \new{some bag
containing $d$ is the parent of all other bags containing $d$}; (3)~the root bag satisfies
$\Aa$. A tree-like interpretation is a model of $\Kk$ iff each
$\Sigma_r$-bag is a model of the TBox $\Tt_r$, collecting all CIs from $\Tt$ that mention 
only role name $r$.

Evaluating $Q$ over a tree-like interpretation $\Ii$ can be
distributed over bags by means of \emph{$Q$-labellings}, which assign
to each element its \emph{(unary) $Q$-type}, summarizing information
relevant for $Q$ as a set of pairs $(p,V)$ where $p$ is a  fragment of
$Q$ and $\new{\emptyset \neq \,} V 
\subseteq \var(p)$. We shall think of a $Q$-labelling as an extension
$\Ii'$ of $\Ii$ to fresh concept names $A_{p, V}$.
\new{We are interested in $Q$-labellings $\Ii'$ such that $A_{q,
    V}^{\Ii'} = \emptyset$ for all $q \in
  Q$ and $\emptyset \neq V \subseteq \var(q)$;
  we call them \emph{$Q$-refutations}.} 
We call $\Ii'$ \emph{correct} if $e \in A_{p, V}^{\Ii'}$ iff $\eta(V)
= \{e\}$ for some match $\eta$ for $p$ in $\Ii'$. We call $\Ii'$
\emph{consistent} if each bag satisfies the following: for each
partition of a fragment $p'$ into fragments $p, p_1, p_2, \dots, p_k$
with $\var(p_i)\cap \var(p_j) = \emptyset$ for $i\neq j$, $V_i =
\var(p_i) \cap \var(p)$, and $\new{\emptyset \neq} \;V\subseteq
\var(p)$, there is no match $\eta$ for $p$ in the bag such that
$\eta(V_i) = \{e_i\} \subseteq A_{p_i, V_i}^{\Ii'}$ for all $i$ but
$\eta(V) = \{e\} \not\subseteq A_{p', V}^{\Ii'}$.

\begin{lemma} \label{lem:multirolesconsistent}
  \new{The correct $Q$-labelling of $\Ii$ is a
  $Q$-refutation iff $\Ii$ admits a consistent $Q$-refutation.}
\end{lemma}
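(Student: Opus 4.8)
The plan is to argue around the unique \emph{correct} $Q$-labelling $\Ii'$ of $\Ii$, treating the two directions asymmetrically. The forward direction is the easy one: I would show that a correct labelling is automatically consistent, so if it happens to be a $Q$-refutation then it already witnesses the right-hand side. Consistency follows directly from correctness. Given a partition of a fragment $p'$ into $p, p_1, \dots, p_k$ and a match $\eta$ for $p$ in a bag with $\eta(V_i) = \{e_i\} \subseteq A_{p_i,V_i}^{\Ii'}$ for every $i$, correctness of the labels at the $e_i$ supplies matches $\eta_i$ for $p_i$ in $\Ii$ with $\eta_i(V_i) = \{e_i\}$. Since the $p_i$ have pairwise disjoint variables, each meets $p$ exactly in $V_i$, and $\eta$ and every $\eta_i$ agree on $V_i$ (all sending it to $e_i$), the union $\eta \cup \eta_1 \cup \dots \cup \eta_k$ is a well-defined match for $p'$ in $\Ii$ anchoring $V$ at $e = \eta(V)$. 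Correctness then gives $e \in A_{p',V}^{\Ii'}$, which is exactly the conclusion consistency requires; hence the correct labelling can never violate consistency.

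For the converse I would argue contrapositively. Suppose $\Ii \models Q$, fix $q \in Q$ together with a match $\eta$ for $q$ in $\Ii$, and fix an arbitrary consistent labelling $\Ii''$; the goal is to show that $\Ii''$ labels some element with the \emph{full} query $q$, so that it cannot be a $Q$-refutation. (Recall that, by correctness, the correct labelling is a $Q$-refutation exactly when $\Ii \not\models Q$; thus showing that no consistent $Q$-refutation can exist when $\Ii \models Q$ gives the backward implication, while the forward one was handled above.) I would root the decomposition and run a bottom-up induction over the subtree of bags touched by $\eta$. The invariant is that for each such bag $B$, with $\sigma(B)$ the element it shares with its parent, the labelling $\Ii''$ already records $(p_B, V_B)$ at $\sigma(B)$, where $p_B$ collects the atoms of $q$ whose match lies inside the subtree $T_B$ and $V_B$ is the set of their variables sent by $\eta$ to $\sigma(B)$. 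The base case of a leaf bag is the instance of consistency with $k = 0$. The inductive step is a single application of consistency inside $B$: let $p$ be the atoms matched locally in $B$ and let the $p_{B_j}$ come from the children; the only variables shared between $p$ and $p_{B_j}$ are those $\eta$ sends to the single separator $e_j = \sigma(B_j)$, which is already labelled by the inductive hypothesis, so consistency forces the merged fragment onto $\sigma(B)$. At the root-most touched bag the merged fragment is all of $q$, contradicting $A_{q,V}^{\Ii''} = \emptyset$.

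The technical core is to make the per-bag decomposition of $\eta$ rigorous, and here I would exploit a structural feature of the tree-likeness conditions: because each bag is a $\Sigma_r$-interpretation, every element lies in a unique $\Sigma_r$-bag, and neighbouring bags share a single element, any path of $r$-edges is confined to one $\Sigma_r$-bag. Consequently every role and \emph{transitive} atom of $q$ is realised entirely within a single bag, and each concept atom can be assigned to the home bag of its variable; transitive closure, which one might expect to span the tree, therefore causes no difficulty. This confinement is what guarantees the two hypotheses the consistency condition needs at each step, namely that sibling fragments $p_{B_j}$ have pairwise disjoint variables and that the interface $V_j$ between $p$ and $p_{B_j}$ is mapped to the single separator $e_j$: any variable shared between a bag and a disjoint part of the tree must be mapped to the unique separating element. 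I expect the main obstacle to be the careful bookkeeping establishing that the bags touched by $\eta$ form a connected subtree (using connectedness of $q$ and the fact that the bags containing any one element form a connected star around it) and that the fragments $p_B$ defined along it really partition $q$ in the precise sense demanded by the consistency condition; once this is in place, the induction and the gluing go through mechanically.
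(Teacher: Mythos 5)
Your forward half is exactly the paper's: correctness implies consistency by gluing the local match $\eta$ with the matches $\eta_1,\dots,\eta_k$ supplied by the labels at the $e_i$, and your confinement observation (every role \emph{and} transitive atom is matched inside a single $\Sigma_r$-bag, because an element lies in only one $\Sigma_r$-bag) is also the paper's. The backward half, however, has a genuine gap: your inductive step is not an instance of the consistency condition as it is defined. The condition partitions $p'$ into \emph{fragments} $p,p_1,\dots,p_k$, where a fragment is by definition a \emph{connected} set of atoms, it requires a match for the single fragment $p$ in the bag, and it requires $\var(p_i)\cap\var(p_j)=\emptyset$ for $i\neq j$. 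Both requirements fail for your by-host partition. First, the set of atoms of $q$ matched locally in a bag $B$ need not be connected: $q$ can contain two $r$-atom pieces matched in the same $\Sigma_r$-bag (say on elements $f\neq f'$ shared with two different children) that are joined in $q$ only through atoms matched inside those child bags; then your local part $p$ is disconnected and no legal partition reassembles $q$ at $B$, since attaching the stray local atom to either child fragment makes two parts share a variable. Second, your claim that sibling fragments are automatically variable-disjoint is false, and generically so: in these decompositions \emph{every} element $d$ of a bag is shared with one child bag per remaining role, all attached at the same $d$ (this is exactly how the unravelling in the appendix builds the tree). So a variable $x$ with $\eta(x)=d$ occurring in an $r_1$-atom and an $r_2$-atom, both matched strictly below $B$, lands in two sibling fragments $p_{B_i},p_{B_j}$ with $x\in\var(p_{B_i})\cap\var(p_{B_j})$, and your ``single application of consistency inside $B$'' is unavailable. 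Your induction stalls at the first bag exhibiting either configuration; confinement of atoms to bags does not, by itself, deliver the two hypotheses you need.

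The paper's proof is organized precisely to dodge this. It does not sweep bottom-up with a by-bag partition and an invariant; instead it takes a \emph{minimal} violating triple $(p',V,e)$ for the given consistent labelling, chooses one bag containing the image of an atom meeting $V$, lets $p$ be a \emph{maximal connected} subquery with $e\in\eta(\var(p))$ and $\eta(\var(p))$ inside that bag, and takes $p_1,\dots,p_k$ to be the connected components of $p'\setminus p$ --- pairwise variable-disjoint \emph{by construction} --- whose labels at their anchor elements are supplied by minimality of $p'$ rather than by an inductive hypothesis, so that consistency needs to be applied only once. If you want to keep your bottom-up scheme, the merge at a bag has to be decomposed into several applications of consistency, e.g.\ absorbing a sibling fragment inside the child bag where the shared anchor element still lives (that element belongs to the sibling's bag as well), rather than in one shot at the parent; as written, the mechanical step you describe does not go through.
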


It remains to show how to find consistent \new{$Q$-refutations $\Jj$ such
that $\Jj \models \Kk$} and prove that they can be turned into finite
counter-models.

\new{Consistent $Q$-refutations} can be recognized by a tree
automaton but there are two obstacles. First, bags are finite, but arbitrarily
large, so tree-like interpretations do not naturally encode as
finitely-labelled trees. Second, the information that needs to be
passed between bags is the unary type of the shared element, and the
unary type of the element shared with the parent bag needs to be
related with the \emph{set of unary types} of elements shared with
child bags. As the number of such types is $2^{\CN(\Kk) \cdot |Q|\cdot
4^{m}}$, this suggests a triply exponential construction. Our response
is to generalize automata to trees over infinite alphabets and avoid
representing the transition relation explicitly.


\begin{definition}
A \emph{(B\"uchi) automaton} $\Bb= (\Sigma, \Gamma,S, I, F, \delta)$
consists of a \emph{node alphabet} $\Sigma$, an \emph{edge alphabet}
$\Gamma$, a finite set $S$ of \emph{states}, sets $I, F\subseteq S$ of
\emph{initial} and \emph{accepting} states, and a \emph{transition
function} $\delta: S \times \Sigma \to 2^{(\Gamma \times S)^*}$
mapping state-letter pairs to sets of words over the alphabet
$\Gamma\times S$.  A \emph{run} $\rho$ of $\Bb$ on a tree $T$ maps
nodes of $T$ to $S$ such that $\rho(\varepsilon) \in I$ and for each
node $v$ with children $v_1, v_2, \dots, v_n$,
\[ (\gamma_1, \rho(v_1)) (\gamma_2, \rho(v_2)) \dots (\gamma_n,
  \rho(v_n))  \in  \delta (\rho(v), \sigma) \,,\]
where $\gamma_i$ is the label of the edge $(v, v_i)$ and $\sigma$ is
the label of the node $v$. A tree $T$ is \emph{accepted} by $\Bb$ if
for some run $\rho$, each branch of $T$ contains infinitely many
nodes from $\rho^{-1}(F)$. All trees accepted by
$\Bb$ form the set \emph{recognized} by $\Bb$.
\end{definition}

Let us build an automaton $\Bb_{\Kk, Q}$ recognizing \new{consistent 
$Q$-refutations}. Because the state space is finite, the automaton
cannot compute which elements are shared between bags: it must be
given this information. We provide it by marking in each bag the
element shared with the parent bag and displaying the shared element
on the edge to the parent.  With that, the actual elements used in the
bags do not matter any more: the element on the edge between bags
indicates the element of the parent bag that should be identified with
the marked element of the child bag. Let $\Delta$ be a countably infinite
set. The node alphabet is the set of finite
$\Sigma_r$-interpretations for $r\in\Rol(\Kk)$ with domains contained
in $\Delta$, and the edge alphabet is $\Delta$. States consist of
$r\in\Rol(\Kk)$ and a unary type $\tau$ that includes no $A_{q, V}$
with $q \in Q$.  A state $(r, \tau)$ is initial if $\tau \supseteq \{A
\bigm | A(a) \in \Aa\}$ where $\{a\} = \Ind(\Kk)$.  All states are
accepting. The transition function ensures that the input tree
represents a tree-like interpretation and that each $\Sigma_r$-bag is
a model of $\Tt_r$ and satisfies the consistency condition.




Our automata are infinite objects, but in the oracle model their
emptiness can be tested in $\ptime$ (see Appendix B).

\begin{fact} \label{fact:emptiness}
  There is a polynomial-time algorithm with an oracle that, for each
 automaton $\Bb = (\Sigma, \Gamma, S, I, F, \delta)$, accepts input
 $(S, I, F)$ with oracle $\step_\Bb$ iff $\Bb$ accepts some tree, where
 $\step_\Bb$ is the set of pairs $(P, q) \in 2^S\times S$ such that
 $\delta(q, \sigma) \cap (P\times \Gamma)^* \neq \emptyset$  for
 some $\sigma \in \Sigma$.
\end{fact}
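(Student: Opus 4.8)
The plan is to reduce emptiness of $\Bb$ to solving a Büchi game on the finite state set $S$, and to recognize that the oracle $\step_\Bb$ supplies exactly the one-step ``controllable predecessor'' information needed to run the standard polynomial fixpoint algorithm for such games. First I would reformulate acceptance as a two-player game: the \emph{automaton player} builds the run, and the \emph{pathfinder} chooses a branch. A play is a sequence of states starting from some $q \in I$; at a state $q$ the automaton player picks a transition word $w \in \delta(q, \sigma)$ for some $\sigma \in \Sigma$, and the pathfinder selects one of the states occurring in $w$ as the next state. The automaton player wins iff the play visits $F$ infinitely often. I would then show that $\Bb$ accepts some tree iff the automaton player has a winning strategy from some initial state: the forward direction reads an accepting run tree off a strategy, and the backward direction uses positional determinacy of Büchi games to turn a memoryless winning strategy into a run whose every branch---being one of the pathfinder's plays---visits $F$ infinitely often.

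The key simplification is that neither the edge labels $\gamma_i \in \Gamma$ nor the multiplicities of states within a transition word affect the Büchi condition: all that matters is the set of states into which the pathfinder can be steered. Hence the automaton player's move from $q$ can be identified with a choice of a set $P \subseteq S$ of available successors, subject to the existence of a transition all of whose states lie in $P$, which is precisely the predicate $(P, q) \in \step_\Bb$. By monotonicity of $\step_\Bb$ in its first argument, the controllable-predecessor operator $\mathrm{CPre}(X) = \{\, q \in S \mid (X, q) \in \step_\Bb \,\}$ captures, with a single oracle query per state, the set of states from which the automaton player can force the next state into $X$; thus $\mathrm{CPre}(X)$ costs $|S|$ oracle calls.

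It then remains to solve the Büchi game by the standard nested fixpoint $W = \nu Y.\, \mathrm{Attr}(F \cap \mathrm{CPre}(Y))$, where $\mathrm{Attr}(T) = \mu X.\, T \cup \mathrm{CPre}(X)$ is the automaton player's attractor to $T$, and to accept iff $W \cap I \neq \emptyset$. Since each fixpoint iterate changes the current set by at least one of the $|S|$ states, the whole computation applies $\mathrm{CPre}$ at most $O(|S|^2)$ times, yielding a polynomial number of oracle queries and polynomial running time with oracle $\step_\Bb$, as required; correctness of the fixpoint is the textbook Büchi-game argument once $\mathrm{CPre}$ is in place.

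The main obstacle I anticipate is the soundness of the game abstraction: arguing rigorously that discarding $\Gamma$ and the ordering and multiplicities of successors loses nothing, so that a winning strategy in the abstracted game (where the pathfinder may descend to \emph{any} state of a chosen $P$) faithfully corresponds to an accepting run tree, and conversely. A related technical point is the degenerate case of empty transition words, which create leaves and therefore must count as losing for the automaton player under the liveness (infinitely-often-$F$) condition; one must take care that $(\emptyset, q) \in \step_\Bb$ is not treated as progress, i.e.\ that $\mathrm{CPre}$ is read as forcing into a \emph{nonempty} available set. Once these points are settled, the reduction to a Büchi game and its fixpoint solution give the claimed \ptime-with-oracle bound.
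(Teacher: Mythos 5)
Your reduction to a B\"uchi game is, modulo presentation, the same algorithm as the paper's: the paper defines $\step_\Bb(X) = \{q \mid (X,q) \in \step_\Bb\}$, computes the least fixpoint $\step^*_\Bb(P)$ of $G(X)=X\cup\step_\Bb(X)$ over $P$ (your attractor $\mathrm{Attr}$, with $\step_\Bb(\cdot)$ playing the role of $\mathrm{CPre}$), then the greatest fixpoint of $H(X)=\step^*_\Bb(X\cap F)$, and accepts iff the result meets $I$; the paper justifies this via the characterization of $\step^*_\Bb(P)$ as ``states rooting a finite partial run with pending leaves in $P$,'' which is the same content as your game/strategy correspondence, and the oracle-call count is polynomial in $|S|$ in both cases.

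However, your resolution of the ``degenerate case'' goes in the wrong direction, and this is a genuine error rather than a technicality. In this paper leaves are \emph{winning} for the automaton player, not losing: the trees being recognized may be finite --- for \ALCQt, every safe hybrid decomposition well-formed for the KB is finite, and $\Bb_{\Kk,Q,\Theta}$ must accept it --- so the B\"uchi condition has to be read as constraining only the infinite branches, while a node at which the automaton plays the empty transition word $\varepsilon \in \delta(q,\sigma)$ terminates its branch acceptably. Concretely, $\varepsilon \in \delta(q,\sigma)$ means $(\emptyset,q)\in\step_\Bb$, hence by monotonicity $q \in \step_\Bb(X)$ for \emph{every} $X$, and the paper's fixpoints deliberately let such states enter every attractor independently of $F$. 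If, as you propose, $\mathrm{CPre}$ is read as forcing into a nonempty available set and $\varepsilon$-moves count as losing, your algorithm decides emptiness of the language of trees all of whose branches are infinite --- a strictly stronger condition --- and it would wrongly declare nonempty automata empty whenever all their accepted trees are finite, which is exactly the \ALCQt branch of the decision procedure; there this error flips non-entailment into claimed entailment. The correct convention is the opposite of yours: the pathfinder stuck after an $\varepsilon$-move loses, i.e., $(\emptyset,q)\in\step_\Bb$ \emph{is} progress, and with that single change your game formulation matches the paper's proof.
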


Deciding if $\big(P, (r, \tau)\big) \in \step_{\Bb_{\Kk,Q}}$ reduces
to a single instance of finite entailment modulo types, for
$\Kk'=(\Tt_r, \Aa_\tau)$ where $\Aa_\tau$ encodes $\tau$ as a set of
assertions on some distinguished individual, $Q'$ obtained, informally
speaking, by taking the union of all matches forbidden in the
consistency condition, and $\Theta'=\big \{ \tau \bigm | \forall \, 
r\in\Rol(\Kk)\; (r, \tau) \in P \cup (r, \tau)\big\}$. Hence, applied
to $\Bb_{\Kk, Q}$, Fact~\ref{fact:emptiness} yields the desired
bounds.

\smallskip
To get finite counter-models we use coloured blocking.
For $d \in\Delta^\Ii$, the \emph{$n$-neighbourhood $N_n^{\Ii}(d)$ of
$d$} is the interpretation obtained by restricting $\Ii$ to elements
$e \in \Delta^\Ii$ within distance $n$ from $d$ in $\Ii$, enriched
with a fresh concept interpreted as $\{d\}$.
A \emph{colouring of $\Ii$ with $k$ colours} is an
extension $\Ii'$ of $\Ii$ to $k$ fresh concept names $B_1, \dots, B_k$
such that $B_1^{\Ii'}, \dots, B_k^{\Ii'}$ is a partition of
$\Delta^{\Ii'} = \Delta^{\Ii}$. We say that    
$d \in B_i^{\Ii'}$ has colour $B_i$. We call $\Ii'$ 
\emph{$n$-proper} if for each $d \in \Delta^{\Ii'}$  
all elements of $N_n^{\Ii'}(d)$ have different colours.

\begin{fact}[\citeauthor{GogaczIM18} \citeyear{GogaczIM18}]
  \label{fact:coloured-blocking}
If $\Ii$ has bounded degree, then for all $n\geq 0$ there exists an
$n$-proper colouring $\Ii'$ of $\Ii$ with finitely many colours.  
Consider interpretation $\Jj$ obtained from $\Ii'$ by redirecting some
edges such that the old target and the new target have
isomorphic $n$-neighbourhoods in $\Ii'$. Then, for each $q \in \CQ$
with at most $\sqrt{n}$ binary atoms, if $\Ii\notmodels q$, then
$\Jj\notmodels q$.  
\end{fact}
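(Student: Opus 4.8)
The plan is to prove the two assertions separately: the existence of the colouring is a routine argument, while the preservation of query‑refutation is the real content.

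For the colouring, I would first exploit bounded degree: if every element of $\Ii$ has at most $b$ neighbours, then $|N_n^\Ii(d)|$ is bounded by a constant depending only on $b$ and $n$. The requirement of $n$‑properness is that any two elements lying in a common $n$‑neighbourhood receive different colours, and two elements lie in a common $n$‑neighbourhood exactly when their distance in $\Ii$ is at most $2n$ (take a vertex halfway along a shortest path as the centre). Thus an $n$‑proper colouring is precisely a proper vertex colouring of the auxiliary graph $G$ on $\Delta^\Ii$ in which $d,e$ are adjacent iff $0 < \mathrm{dist}_\Ii(d,e) \le 2n$. Since $\Ii$ has bounded degree, $G$ again has bounded degree, so each finite subgraph of $G$ admits a proper colouring with a fixed finite number of colours by greedy colouring; by the De Bruijn--Erd\H{o}s theorem this lifts to a proper colouring of all of $G$ with finitely many colours, which is the desired $\Ii'$.

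For the query part I would argue the contrapositive in the form: every match of $q$ in $\Jj$ yields a match of $q$ in $\Ii'$, and hence in $\Ii$, since $q\in\CQ$ does not mention the colour names $B_i$, so $\Ii$ and $\Ii'$ have the same matches of $q$. We may assume $q$ connected. The one crucial consequence of $n$‑properness is that redirection spans long distances: if $(a,b)$ was redirected to $(a,b')$ with a colour‑preserving isomorphism $\phi : N_n^{\Ii'}(b)\to N_n^{\Ii'}(b')$ sending $b\mapsto b'$, then $b$ and $b'$ carry the same colour, so they cannot lie in a common $n$‑neighbourhood, giving $\mathrm{dist}_{\Ii'}(b,b') > 2n$. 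Now fix a match $\eta$ of $q$ in $\Jj$ with $k\le\sqrt n$ binary atoms and let $H$ be the matched subgraph. Deleting the redirected edges of $H$ leaves components that use only $\Ii'$‑edges and have $\Ii'$‑diameter at most $k$. Along each redirected edge we can relocate the component hanging off its new target $b'$ back to its old target $b$ by applying $\phi^{-1}$, which is defined because that component lies inside $N_k^{\Ii'}(b')\subseteq N_n^{\Ii'}(b')$. Processing the components along the tree induced by the redirected edges and composing the resulting inverse isomorphisms pulls the whole of $H$ back into $\Ii'$, producing a homomorphism $\eta':\var(q)\to\Delta^{\Ii'}$, i.e.\ a match of $q$ in $\Ii'$.

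I expect the heart of the argument to be exactly this relocation step when a single match traverses several redirected edges, and two things must be controlled. First, the composed inverse isomorphisms must stay within their domains: each component contributes at most $k$ to the radius and there are at most $k$ redirected edges, so the cumulative radius needed is of order $k^2$, and the hypothesis $k\le\sqrt n$ is precisely what keeps this below $n$ — this is where the peculiar $\sqrt n$ bound is spent. Second, the relocation must be consistent, i.e.\ the components must not be forced into contradictory positions; here the long‑distance property $\mathrm{dist}_{\Ii'}(b,b')>2n$ does the work, since a short cycle of $H$ crossing a redirected edge would force its far‑apart old and new targets to be close, which is impossible. Additional care is needed because $\eta$ need not be injective, so the pulled‑back pieces may share elements, and one checks that the composed isomorphisms agree on such overlaps, again using colour‑distinctness within the relevant radius. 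Concept atoms are preserved automatically, as the neighbourhood isomorphisms and the colouring all respect concept names.
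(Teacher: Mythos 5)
Your proposal is correct and takes essentially the intended route: the paper states Fact~\ref{fact:coloured-blocking} without proof, importing it from the cited work of Gogacz, Ib\'a\~nez-Garc\'ia and Murlak (2018), and your argument---colouring the auxiliary distance-$2n$ graph, which inherits bounded degree, via greedy colouring lifted by De Bruijn--Erd\H{o}s, and pulling a match of $q$ back through composed inverse neighbourhood isomorphisms, with the $k^2 \le n$ budget (at most $k \le \sqrt{n}$ nested redirections, each relocating a radius-$k$ component) keeping all compositions inside their domains, and $n$-properness plus colour preservation forcing agreement of overlapping relocated pieces---is exactly the mechanism of the coloured blocking principle established there. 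You also correctly identify where each hypothesis is spent: bounded degree for finitely many colours, and the $\sqrt{n}$ bound for the cumulative relocation radius.
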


From the proof of Fact~\ref{fact:emptiness} it also follows that if
$\Bb$ accepts some tree, then it also accepts a \emph{regular} tree;
that is, a tree that has only finitely many non-isomorphic
subtrees. Each regular tree has bounded branching and uses only
finitely many different labels.  Consequently, the tree-like
counter-model $\Ii$ corresponding to a regular tree accepted by
$\Bb_{\Kk, Q}$ uses finitely many different bags (up to isomorphism),
which means that their size is bounded by some $k$. Therefore,
$\Ii$ has bounded degree and, because $\Sigma_r$-bags are disjoint,
the length of simple directed $r$-paths in $\Ii$ is also bounded by
$k$.

For $\ell = 2k$, let  $Q^{(\ell)}$ be obtained from $Q$ by replacing
each transitive atom $r^*(x, y)$ with the disjunction 
\[ x=y \lor r(x,y) \lor \bigvee_{1<i\leq \ell} r^i(x, y)\, , \] where
$r^i(x, y)$ expresses the existence of an $r$-path of length $i$ from
$x$ to $y$ as $r(x, z_1) \land r(z_1, z_2) \land \dots \land
r(z_{i-1}, y)$ for fresh variables $z_1, \dots, z_{i-1}$. Rewrite
$Q^{(\ell)}$ as a $\UCQ$ and let $t$ be the maximal number of binary
atoms in one $\CQ$ in $Q^{(\ell)}$.

\begin{fact} \label{fact:bounded}
If each simple directed $r$-path in an interpretation $\Jj$ has length
at most $\ell$, then $\Jmc \models Q$ iff $\Jmc \models Q^{(\ell)}$.
\end{fact}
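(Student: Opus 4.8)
The plan is to verify the two implications of the biconditional separately, observing that one of them holds unconditionally while the other is exactly where the hypothesis on path lengths is used. Throughout, recall that a transitive atom $r^*(x,y)$ is satisfied by a match $\eta$ precisely when $(\eta(x),\eta(y)) \in (r^{\Jj})^*$, i.e.\ when there is a directed $r$-walk of some length from $\eta(x)$ to $\eta(y)$, the length being $0$ exactly when the two endpoints coincide.

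First I would treat the implication $\Jj \models Q^{(\ell)} \Rightarrow \Jj \models Q$, which needs no assumption on path lengths. A match for $Q^{(\ell)}$ is a match $\eta'$ for some $\CQ$ $q'$ obtained from a $\CQt$ $q \in Q$ by replacing each transitive atom $r^*(x,y)$ with one of the disjuncts $x = y$, $r(x,y)$, or $r^i(x,y)$. In each case the satisfied disjunct witnesses a directed $r$-walk from $\eta'(x)$ to $\eta'(y)$ (of length $0$, $1$, or $i$ respectively), hence $(\eta'(x),\eta'(y)) \in (r^{\Jj})^*$. Restricting $\eta'$ to $\var(q)$ therefore yields a match for $q$: the non-transitive atoms occur literally in $q'$ and so are satisfied, and every transitive atom of $q$ is satisfied as just argued. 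Thus $\Jj \models Q$.

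The converse is where the hypothesis enters. Suppose $\Jj \models Q$, witnessed by a match $\eta$ for some $q \in Q$. For each transitive atom $r^*(x,y)$ of $q$ we have $(\eta(x),\eta(y)) \in (r^{\Jj})^*$. If $\eta(x) = \eta(y)$ I select the disjunct $x=y$. Otherwise there is a directed $r$-walk from $\eta(x)$ to $\eta(y)$; deleting cycles (the segment between any two occurrences of a repeated vertex) turns it into a simple directed $r$-path, whose length $i$ satisfies $1 \le i \le \ell$ by the standing assumption on $\Jj$. I then select the disjunct $r^i(x,y)$ (equivalently $r(x,y)$ when $i=1$) and extend the match by mapping its fresh path variables $z_1, \dots, z_{i-1}$ to the successive internal vertices of this path. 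Performing this choice simultaneously for all transitive atoms produces a match for the $\CQ$ of $Q^{(\ell)}$ determined by these choices, so $\Jj \models Q^{(\ell)}$.

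There is no serious obstacle here; the only point requiring care is the bookkeeping of the freshly introduced path variables. Because the variables $z_1, \dots, z_{i-1}$ are fresh and pairwise disjoint across distinct transitive atoms (and across the disjuncts arising in the rewriting into a $\UCQ$), the extensions chosen for different atoms never clash, even when two transitive atoms share an endpoint such as a linking variable $y$: that shared variable keeps its original value $\eta(y)$, and the two simple paths are realized through disjoint sets of fresh internal variables. The bound $\ell$ is used exactly once, to guarantee that for every transitive atom that must be matched some disjunct $r^i(x,y)$ with $i \le \ell$ is indeed available.
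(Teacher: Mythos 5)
Your proposal is correct and matches the intended argument: the paper states Fact~\ref{fact:bounded} without proof, treating it as routine, and your two directions — that each disjunct of $Q^{(\ell)}$ certifies $(\eta(x),\eta(y)) \in (r^{\Jj})^*$, and conversely that cycle-elimination turns any witnessing $r$-walk into a simple directed $r$-path of length at most $\ell$ whose internal vertices instantiate the fresh variables $z_1,\dots,z_{i-1}$ — are exactly the expected reasoning. Your attention to the freshness and disjointness of the path variables across distinct transitive atoms is the one point of bookkeeping worth spelling out, and you handle it properly.
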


Fix $n = t^2$ and let $\Ii'$ be an $n$-proper colouring of $\Ii$. On
each infinite branch, select the first $\Sigma_r$-bag $\Mm$ such that
for some $\Sigma_r$-bag $\Mm'$ higher on this branch, the
$n$-neighbourhood of the element $e$, shared by $\Mm$ and its parent, is
isomorphic to the $n$-neighbourhood of the element $e'$, shared by
$\Mm'$ and its parent. The set of selected bags forms a maximal
antichain, and by K\"onig's Lemma, it is finite. Let $\Ff$ be the
interpretation obtained by taking the union of all strict ancestors of 
the selected bags, and for each element $e$ shared by a selected
$\Sigma_r$-bag and its parent, redirect each $r$-edge incident with
$e$: instead of pointing at an $r$-neighbour of $e$, it should point
at the corresponding $r$-neighbour of $e'$. If $\Kk$ is an $\ALCQt$
KB, redirect only outgoing $r$-edges, and drop the incoming
ones. Multiple nodes $e$ may be attached to the same bag $\Mm'$ in
this way, but they are only $r$-reachable from each other via elements
of $\Mm'$, so any simple $r$-path in $\Ff$ has length bounded by $2k =
\ell$.  It is routine to check that $\Ff\models \Kk$. By
Fact~\ref{fact:coloured-blocking}, $\Ff \not\models Q^{(\ell)}$; 
by Fact~\ref{fact:bounded}, $\Ff \not\models Q$. Thus, $\Ff$ is 
a finite counter-model.



\section{Eliminating Transitive Atoms}~\label{sec:elim-trans}
In this section we make the hardest step towards proving our main
result: we reduce finite entailment of $\UCQt$s to finite entailment
of $\UCQ$s, for single-role ABox-trivial KBs either in $\ALCIt$ or in $\ALCQt$.
Most of the argument is not only shared for the two DLs, but
works for their common extension $\ALCIQfwdt$.  Thus, throughout this
section we let $\Kk = (\Tt,\Aa)$ be a single-role $\ALCIQfwdt$ KB with
a trivial ABox. We also fix some $Q \in \UCQt$ and a set $\Theta$ of
unary types. The overall strategy
is to show that finite counter-models can be unravelled into safe
counter-models admitting tree decompositions with bags of arbitrary
size, but with a certain more subtle width measure bounded. The bags
of these decompositions will be almost strongly connected, which will
allow dropping transitive atoms when evaluating fragments of $Q$ over
a single bag. Moreover, each bag will contain only a bounded number of
elements relevant for at-most restrictions over closures of roles,
meaning that they can be replaced with nominals (for each bag
separately). This way, the existence of bags that together form a safe
counter-model can be reduced to multiple instances of finite
entailment of $\UCQ$s in $\ALCOIt$ or $\ALCOQt$ without at-most
restrictions over closures of roles; eliminating nominals as explained
in Section~\ref{sec:ABoxesNomMain}, one arrives at the base case,
solved in the next section. As the last step, from the existence of
such safe counter-models one infers the existence of finite
counter-models.

Given that we are in the single-role case, we shall be using graph
terminology without mentioning the role name.
In particular, a \emph{source} is an element without incoming edges, a
\emph{sink} is an element without outgoing edges, an
\emph{internal} element is one that has both incoming and outgoing
edges, \new{and an \emph{isolated} element is one that has neither
incoming nor outgoing edges.}

\subsection{Unravelling Finite Counter-Models}

We relax the simplistic notion of decomposition used to eliminate
multiple roles by allowing bags to share additional elements with
their neighbours, provided that the total number of these additional
elements is bounded for each bag.

\begin{definition} [hybrid decomposition] \label{def:hybrid}
A \emph{hybrid decomposition} of an interpretation $\Ii$ is a
tree $T$ in which each node $v \in T$ is labelled with a finite
interpretation $\Ii_v = (\Delta_v, \cdot^{\Ii_v})$, called a
\emph{bag}, a set $\Gamma_v \subseteq \Delta_v$, and an element
$\rt_v\in \Delta_v$, called the \emph{root} of $\Ii_v$, such that
\begin{enumerate}
\item \label{item:hybrid-union}
  $\Ii = \bigcup_{v\in T} \Ii_v $;
\item \label{item:hybrid-connected}
  for each $e\in\Delta^\Ii$, $\left \{v \in T \bigm| e \in 
    \Delta_v\right\}$ is connected in $T$;
\item \label{item:hybrid-disjoint}
  for each edge $(u,v)$ in $T$, 
  $\Delta_u \cap \Delta_v  =  \{\rt_v\} \cup (\Gamma_u
  \cap \Gamma_v)$.
\end{enumerate}  
We let $\widetilde\Gamma_u = \left \{\rt_v \bigm | v \text{ is a child of }
  u\right\}$ and call $\max_{u\in T} |\Gamma_u|$ the \emph{width} of
$T$.
%
%
An element $d\in \Delta_u$ is \emph{fresh} in $u$ if $u$ is the root of
$T$ or $d\notin\Delta_{u'}$ for the parent $u'$ of $u$; $d$ is
\emph{local} in $u$ if $d=\rt_u$ or if 
$d\notin\Delta_v$ for all neighbours $v$ of $u$. 
\end{definition}

\noindent We often blur the distinction between the node $v$ and the
interpretation $\Ii_v$, using the term \emph{bag} for both. Note that
each element is fresh in exactly one bag, and the only local
element in $u$ that is not fresh in $u$ is $\rt_u$ (unless
$u=\varepsilon$, when $\rt_\varepsilon$ is fresh too).

While Definiton~\ref{def:hybrid} captures the fundamental structural
simplicity of our counter-models, the limitations of the target
special case impose grittier structural requirements. 

\begin{definition}[well-formedness] \label{def:well-formed}
If $\Kk$ uses both inverses and counting, a hybrid decomposition $T$ is
\emph{well-formed} for $\Kk$ if $\Ii_\varepsilon \models \Aa$ and 
for each $u\in T$, 
\begin{enumerate}

  
\item \label{item:types}
  for each concept name $A$, $A^{\Ii_u} = A^{\Ii} \cap \Delta_u$;

\item \label{item:roots}
  for each child $v$ of $u$, $\rt_v$ is a fresh sink or source in
  $\Ii_u$ \new{(or a fresh element of $\Gamma_u$)};
  
\item \label{item:external}
  for each fresh sink or source $d$ in $\Ii_u$ \new{(or fresh $ d\in
    \Gamma_u$)},  $d=\rt_v$ for exactly one child $v$ of $u$;  
  
  
\item \label{item:rel-consistent}
  for each $w\in T$, if $\rt_u \in \Gamma_w$, then $\Gamma_u \subseteq
  \Gamma_w$;

\item \label{item:rel-contained}
  for each local element $d$ in $\Ii_u$ and each concept name
  $A$ relevant for $d$: if $d \in A^{\Ii_u}$ then $d\in\Gamma_u$, and
  $\Gamma_v \cap A^{\Ii_v} \subseteq \Gamma_u$ for all $v\in T$ such
  that $\rt_v$ is  a sink in $\Ii_u$ or $\rt_u$ is a \emph{non-isolated} source in $\Ii_v$. 

\end{enumerate}
If $\Kk$ does not use inverses, ``sink or source'' in items
\ref{item:roots} and \ref{item:external} is replaced with ``sink'', and
the case ``\new{$\rt_u$ is a non-isolated source in $\Ii_v$}'' in item
\ref{item:rel-contained} is dropped, \new{but additionally we require that
each element of $\Gamma_u\setminus \{\rt_u\}$ is a sink in $\Ii_u$.}
If $\Kk$ does not use counting, we additionally require that $\Gamma_u
= \emptyset$ for all $u\in T$. 
\end{definition}

\noindent Note that in a well-formed hybrid decomposition, each
element is local in exactly one node. 

We work with infinite counter-examples because they have simpler
structure, but ultimately we need to make them finite. The following
notion guarantees that this is possible.

\begin{definition}[safety]
A hybrid decomposition $T$ is \emph{safe} if it admits no infinite
sequence of nodes $u_0, u_1, \dots$ such that for all $i$, either
$\rt_{u_{i+1}}$ is a sink in $\Ii_{u_{i}}$ or
$\rt_{u_i}$ is a \new{non-isolated} source in
$\Ii_{u_{i+1}}$.
\end{definition}

Finally, replacing $\UCQt$ with $\mathcal{UCQ}$ is possible thanks to strong
connectedness guarantees on the bags. 

\begin{definition}[well-connectedness]
An interpretation $\Jj$ is \emph{well-connected} if for all $d, e \in
\Delta^\Jj$, $(d,e) \in (r^*)^\Jj$ iff either $d=e$ or $d$ is
not a sink and $e$ is not a source in $\Jj$.  A hybrid decomposition
$T$ is well-connected if $\Ii_v$ is well-connected for all $v \in T$.
\end{definition}



In the remainder, by a \emph{$\Kk$-decomposition} we mean a hybrid
decomposition well-formed for $\Kk$, of width at most
$N^{|\RCN(\Kk)|}$ where $N$ is the counting threshold in $\Kk$. 



\begin{lemma} \label{lem:unravelling}
If $\Kk\notfentails^\Theta Q$, then there exists a counter-model admitting a
safe and well-connected $\Kk$-decomposition. 
\end{lemma}

\begin{proof}
Let $\Jj$ be a finite counter-model. We construct the special  
counter-model $\Ii$ and a witnessing $\Kk$-decomposition $T$ of $\Ii$
by unravelling $\Jj$ in a special way. We describe the construction
for the case when $\Kk$ uses inverses; if it does not, simply replace
``neighbours'' with ``direct successors''.  


Because the ABox of $\Kk$ is trivial, we have $\Ind(\Kk) = \{d\}$. We
begin from the interpretation $\Ii_\varepsilon$, obtained from $\Jj
\upharpoonright \{d\}$ by removing the only possible edge,
$\rt_\varepsilon = d$, and $\Gamma_\varepsilon = \emptyset$.
Now, roughly, for each previously added element $e$
missing a neighbour, we will be adding a new bag containing $e$ with
all its missing neighbours, along with all other elements from
$\cmp_\Jj(e)$---the strongly connected component of $e$ in $\Jj
\upharpoonright \big(\Delta^\Jj  \setminus \rel_\Jj(e)\big)$, or
$\{e\}$ if $e \in \rel_\Jj(e)$---with all their neighbours, and $\rel_\Jj(e)$.
If an element $f'$ in this bag corresponds to $f \in \Delta^\Jj$, we
call $f'$ a \emph{copy of} $f$ and $f$ the \emph{original
of} $f'$. For convenience, we extend this nomenclature to the root
bag: if $f' \in \Delta_\varepsilon$, then its original $f \in
\Delta^\Jj$ is $f'$ itself; if $f \in \Delta^\Jj$ belongs to
$\Delta_\varepsilon$, then its copy $f'$ in $\Delta_\varepsilon$ is
$f$ itself. We proceed as follows, as long as there is something to do.

For each previously added node $u$ and each \new{sink or source} $d'
\in \Delta_u$ \new{(or $d'\in\Gamma_u$)} fresh in
$u$,  we add a new child $v$ of $u$. Let
$d\in\Delta^\Jj$ be the original of $d'$ and let $X \subseteq
\Delta^\Jj$ be the set of the originals of neighbours of $d'$ in
$\Ii_u$. The interpretation $\Ii_v$ is obtained by
\begin{itemize}
\item
  taking the restriction of $\Jj$ to the subdomain comprising
 $\rel_\Jj(d)$, $\cmp_\Jj(d)$, neighbours of $\cmp_\Jj(d) \setminus
 \{d\}$, and neighbours of $d$ that do not belong to $X$;
\item
  removing all edges that are not incident with $\cmp_\Jj(d)$ and all
  edges between $d$ and elements from $X$ \new{(if $\Kk$ does not use
    inverses, drop edges outgoing from $\rel_\Jj(d) \setminus \{d\}$)};
\item
  replacing each element $e$ with a fresh copy $e'$, except that for
  elements $e \in \{d\} \cup \rel_\Jj(d)$ that already have a copy
  $e'$ in $\Ii_u$, $e'$ is reused in $\Ii_v$.
\end{itemize}
We let $\rt_v = d'$ and $\Gamma_v = \left\{e' \bigm| e \in
\rel_\Jj(d)\right\}$. 

By construction, $T$ is a hybrid decomposition of width bounded by
$\max_{d \in \Delta^\Jj} |\rel_\Jj(d)| \leq N^{|\RCN(\Kk)|}$
(Fact~\ref{fact:relevant}). It is also not difficult to check that $T$
is safe, well connected, and well formed, and that $\Ii \models \Kk$
(see Appendix C).
%
Moreover, mapping each element of $\Ii$ to its original in $\Jj$ gives
a homomorphism from $\Ii$ to $\Jj$, which implies that
$\Ii \notmodels Q$, and that $\Ii$ only realizes types from $\Theta$.
%
%
%
%
%
\end{proof}

\subsection{Evaluating Queries over Unravellings}


We aim at distributing query evaluation over bags, like when
eliminating multiple roles.  This is now harder because bags share
more than one element, but it is possible because the total number of
additional shared elements is bounded for each bag.  These elements
will be \emph{parameters} of $Q$-types.

\begin{definition}[nullary $Q$-types]
  A \emph{(nullary) $Q$-type with parameters $\Gamma$} is a set of
  pairs $(p,\eta)$ where $p$ is a fragment of $Q$ and $\eta$ is a
  partial function from $\var(p)$ to $\Gamma$.
  The \emph{$Q$-type of $\Jj$ with parameters $\Gamma$} is the set
  $\tp_{Q}^\Jj(\Gamma)$ of pairs $(p, \eta)$ where $p$ is a fragment of
  $Q$, and $\eta$ is a partial function from $\var(p)$
  to $\Gamma$ that can be extended to a matching for $p$ in $\Jj$.
\end{definition}

The power of $Q$-types with parameters is
\emph{compositionality}. Consider 
interpretations $\Jj_1,\Jj_2$ and parameter sets 
$\Gamma_1, \Gamma_2$.
If $\Gamma'\subseteq \Gamma_1$, then
\[\tp^{\Jj_1}_Q(\Gamma')
  =\tp^{\Jj_1}_Q(\Gamma_1) \upharpoonright \Gamma'\,,\]
where
$\tau_1 \upharpoonright \Gamma'  = \left\{(p, \eta \upharpoonright
  \Gamma') \bigm | (p, \eta) \in \tau_1 \right\}$ is the
\emph{projection of $\tau_1$ on $\Gamma' $}.
If $\Delta^{\Jj_1} \cap \Delta^{\Jj_2}
\subseteq \Gamma_1 \cap \Gamma_2$, then
\[\tp^{\Jj_1 \cup \Jj_2}_Q(\Gamma_1\cup\Gamma_2)
  = \tp^{\Jj_1}_Q(\Gamma_1) \oplus \tp^{\Jj_2}_Q(\Gamma_2)\,,\]
where
$\tau_1\oplus\tau_2$ is the \emph{composition} of $\tau_1$ and
$\tau_2$ defined as the set of tuples $(p, \eta)$ such that there
exist $\tau \subseteq \tau_1 \cup \tau_2$ with $\var(p') \setminus
\dom(\eta')$ pairwise disjoint for $(p',\eta')\in\tau$ and a
homomorphism $\widetilde\eta : p \to \bigcup_{(p',\eta') \in \tau}
\eta'(p')$ extending $\eta$.

Like before, we decorate elements with their $Q$-types.

\begin{definition} [$Q$-labellings]
A \emph{$Q$-labelling} for a $\Kk$-decomposition $T$ of $\Ii$ is a
function $\sigma$ that maps each element $d$ local in $u\in T$ to a
$Q$-type $\sigma(d)$ with parameters $\{d\} \cup \Gamma_u$.
We call $\sigma$ \emph{correct} if for each $d$
local in $u$, $\sigma(d)$ is the $Q$-type of $\widehat\Ii_u$ with
parameters $\{d\} \cup \Gamma_u$, where $\widehat\Ii_u$ is the
interpretation represented by the subtree of $T$ rooted at $u$. 
%
%
\end{definition}

If an interpretation $\Ii$ admits a $\Kk$-decomposition, then $\Ii
\notmodels Q$ iff the correct $Q$-labelling for any
$\Kk$-decomposition of $\Ii$ uses only $Q$-types in which no $q \in Q$
appears. Such $Q$-labellings, called \emph{$Q$-refutations}, are
what we need to find.

Given a $\Kk$-decomposition, we can (coinductively) compute the
correct $Q$-labelling bottom-up, by composing and projecting $Q$-types
of the current bag with the $Q$-types of its direct subtrees, as
captured in the following lemma.

\begin{lemma} \label{lem:correct}
\new{The correct $Q$-labelling for a $\Kk$-decomposition $T$ of an
interpretation $\Ii$ is the least (pointwise) $Q$-labelling $\sigma$ for $T$ such
that}
\[\sigma(d) = \bigg(
  \tp^{\Ii_u}_Q\big(\{d\} \cup \Gamma_u \cup \widetilde\Gamma_u\big)
  \oplus \bigoplus_{e\in \widetilde\Gamma_u} \sigma(e)
  \bigg) \upharpoonright \{d\}\cup\Gamma_u\] 
for each node $u \in T$ and each element $d$ local in $u$.
\end{lemma}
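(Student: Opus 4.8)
The plan is to prove the two directions of ``least solution'' separately: first that the correct $Q$-labelling satisfies the displayed fixpoint equation (it is a solution), and then that it lies below every solution, so that it is indeed the least one.

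\emph{The correct labelling is a solution.} Fix a node $u$ and an element $d$ local in $u$, and write $\widehat\Ii_u$ for the subtree interpretation rooted at $u$, so that $\sigma(d) = \tp^{\widehat\Ii_u}_Q(\{d\}\cup\Gamma_u)$ by correctness. I would first observe that $\widehat\Ii_u = \Ii_u \cup \bigcup_{v}\widehat\Ii_v$, with $v$ ranging over the children of $u$, and that this union is \emph{clean}: by connectivity (condition~\ref{item:hybrid-connected}) any element of $\Delta_u\cap\Delta^{\widehat\Ii_v}$ already lies in $\Delta_v$, hence in $\{\rt_v\}\cup(\Gamma_u\cap\Gamma_v)$ by condition~\ref{item:hybrid-disjoint}, while any element shared by two distinct children's subtrees lies in $\Delta_u$ and, since distinct children have distinct roots (well-formedness items~\ref{item:roots}--\ref{item:external}), in $\Gamma_u$. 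As $\Delta_u$ is finite, $u$ has only finitely many children. Since each $\rt_v$ is local in $v$, correctness also gives $\sigma(\rt_v)=\tp^{\widehat\Ii_v}_Q(\{\rt_v\}\cup\Gamma_v)$. All pairwise overlaps are contained in the corresponding parameter sets, so applying the composition identity once per child (using that $\oplus$ is associative and commutative) yields
\[\tp^{\widehat\Ii_u}_Q\Big(\{d\}\cup\Gamma_u\cup\widetilde\Gamma_u\cup\textstyle\bigcup_v\Gamma_v\Big)=\tp^{\Ii_u}_Q\big(\{d\}\cup\Gamma_u\cup\widetilde\Gamma_u\big)\oplus\bigoplus_{e\in\widetilde\Gamma_u}\sigma(e)\,.\]
Projecting both sides to $\{d\}\cup\Gamma_u$ and applying the projection identity on the left recovers exactly the fixpoint equation.

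\emph{The correct labelling is least.} Let $\sigma'$ be any $Q$-labelling satisfying the equation; I would show $\sigma(d)\subseteq\sigma'(d)$ for every local $d$. Each $(p,\eta)\in\sigma(d)=\tp^{\widehat\Ii_u}_Q(\{d\}\cup\Gamma_u)$ arises from a match $\mu$ for $p$ in $\widehat\Ii_u$ extending $\eta$. Since $p$ is finite and every transitive atom is witnessed by a finite path, $\mu$ touches only finitely many bags, spanning a finite subtree; let $\mathrm{dp}(\mu)$ be the maximal tree-distance from $u$ to a bag it touches. I would induct on $\mathrm{dp}(\mu)$. If $\mathrm{dp}(\mu)=0$, the match lies entirely in $\Ii_u$, so $(p,\eta)\in\tp^{\Ii_u}_Q(\{d\}\cup\Gamma_u\cup\widetilde\Gamma_u)$, and taking $\tau$ to be this single bag-pair exhibits $(p,\eta)$ in the composition, hence in $\sigma'(d)$ after projection. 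Otherwise I cut $\mu$ at the boundary of $\Ii_u$: partition $p$ into a bag fragment and child fragments $p_v$ whose images lie in $\widehat\Ii_v$, splitting any transitive atom whose witnessing path crosses into a child at the boundary elements $\{\rt_v\}\cup(\Gamma_u\cap\Gamma_v)$. Each $p_v$-match lives in $\widehat\Ii_v$ at depth $\mathrm{dp}(\mu)-1$, so by the induction hypothesis its pair lies in $\sigma'(\rt_v)$; the bag fragment lies in $\tp^{\Ii_u}_Q(\{d\}\cup\Gamma_u\cup\widetilde\Gamma_u)$. Taking $\tau$ to consist of these pieces, glued at the boundary parameters, witnesses $(p,\eta)$ in the composition, and projecting gives $(p,\eta)\in\sigma'(d)$. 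Combined with the first part, $\sigma$ is a solution below every solution, hence the least one.

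\emph{Main obstacle.} The crux of both parts is the cutting-and-gluing of matches along decomposition boundaries in the presence of transitive atoms: a single transitive atom may be realised by a path traversing many bags, so one must argue that such a path decomposes into per-bag transitive atoms meeting at $\{\rt_v\}\cup(\Gamma_u\cap\Gamma_v)$, and conversely that concatenating per-bag paths at these shared elements yields a genuine path in $\widehat\Ii_u$. This is precisely where well-formedness (roots are fresh sinks or sources, so every path entering a child subtree passes through its root or a shared $\Gamma$-element) and well-connectedness are used, and where the saturation rules defining the query-homomorphism relation do the bookkeeping that turns concatenated paths back into transitive atoms.
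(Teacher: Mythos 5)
Your proof is correct, and its first half (the correct labelling satisfies the fixpoint equation, via the composition and projection identities for $Q$-types over a ``clean'' union $\widehat\Ii_u = \Ii_u \cup \bigcup_v \widehat\Ii_v$) is exactly what the paper does, which likewise just invokes compositionality. Where you genuinely diverge is the leastness direction. The paper inducts on the size of a \emph{minimal connected set} $V$ of bags hosting a match, always cutting at the topmost node of $V$; since that node may be a strict descendant of $u$, the paper needs an auxiliary transfer fact (its Fact~\ref{fact:aux}: $\sigma'(\rt_v)\upharpoonright \{d\}\cup\Gamma_u \subseteq \sigma'(d)$ for every descendant $v$, proved by a separate induction on tree distance) to carry the resulting membership back up to $\sigma'(d)$. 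You instead induct on the depth of the bag-span of the match and always cut at $u$ itself, so the fixpoint inequality at $u$ performs the one-level transfer and Fact~\ref{fact:aux} becomes unnecessary---a mild simplification, at the cost of a slightly less economical cutting step (the paper's cut at the minimal node of $V$ decomposes the match exactly once, whereas your descent re-cuts at every level between $u$ and the match). Your justification that the cut interface is $\{\rt_v\}\cup(\Gamma_u\cap\Gamma_v)$, via connectedness of occurrence sets (condition~\ref{item:hybrid-connected}) and condition~\ref{item:hybrid-disjoint}, matches the paper's use of $\Delta_v \cap \Delta_u \subseteq \{\rt_w\}\cup\Gamma_w$, and your subdivision of transitive atoms at boundary elements is the same device as the paper's minimal subdivision $r^*(x,z_1),\dots,r^*(z_\ell,y)$. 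One inaccuracy in your closing remark: well-connectedness plays no role in this lemma---Lemma~\ref{lem:correct} holds for arbitrary $\Kk$-decompositions, and gluing per-bag witnessing paths at shared elements needs only the decomposition conditions already cited; well-connectedness is needed only later, in Lemma~\ref{lem:consistent}, to make \emph{weak} $Q$-types coincide with $Q$-types.
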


\subsection{From $\UCQt$ to $\UCQ$}

The condition in Lemma~\ref{lem:correct} provides the locality
required in automata-based decision procedures, but it still relies on
evaluating $\UCQt$s over bags. We now show how to avoid it.

\begin{definition}[localization]
  A \emph{localization} of a $\CQt$ $q$ is any $\CQ$ obtainable from
  $q$ by replacing each atom $r^* (x,y)$ either with $x=y$ or with
  $r(x,y'), r(x',y)$ for some fresh variables $x'$ and $y'$ used only in
  these two atoms.
\end{definition}

Every matching for a $\UCQt$ $q$ in an interpretation $\Jj$ extends to
a matching for some localization of $q$. So, if $\Jj$ does not satisfy
any localization of $q$, then $\Jj\notmodels q$.  Moreover, if $\Jj$
is well-connected, then each matching for a localization of $q$ in
$\Jj$ induces a matching for $q$. So, for well-connected $\Jj$,
$\Jj\notmodels q$ iff $\Jj$ does not satisfy any localization of $q$.


\begin{definition}[weak $Q$-types]
 The \emph{weak $Q$-type of $\Jj$ with parameters $\Gamma$} is the set
 $\wtp_Q^\Jj(\Gamma)$ of pairs $(p, \eta)$ such that $p$ is a fragment
 of $Q$ and $\eta$ is a partial function from $\var(p)$ to $\Gamma$
 that extends to a matching of some localization of $p$ in $\Jj$.
\end{definition}

Weak $Q$-types are over-approximations of $Q$-types,
\[\tp_{Q}^\Jj(\Gamma) \subseteq \wtp_{Q}^\Jj(\Gamma)\,,\]
and are exact for well-connected interpretations, 
\[\Jj \text { is well-connected} \implies \tp_{Q}^\Jj(\Gamma)  =
  \wtp_{Q}^\Jj(\Gamma)\,.\]

By replacing $Q$-types with weak $Q$-types in the condition of
Lemma~\ref{lem:correct}, we avoid evaluating fragments of $\UCQt$ by
evaluating their localizations instead. We also relax the condition by
replacing equality with inclusion, to facilitate the reduction to
finite (non-)entailment.

\begin{definition} [consistency] \label{def:consistency}
A $Q$-labelling $\sigma$ for a $\Kk$-decomposition $T$ of an
interpretation $\Ii$ is \emph{consistent} if 
\[\sigma(d) \supseteq \bigg(\wtp^{\Ii_u}_Q\big(\{d\} \cup \Gamma_u \cup \widetilde\Gamma_u\big) \oplus
\bigoplus_{e\in \widetilde\Gamma_u} \sigma(e)\bigg) \upharpoonright
\{d\}\cup\Gamma_u\]
for each node $u \in T$ and each element $d$ local in $u$.
\end{definition}

For any $\Kk$-decomposition $T$, the existence of a consistent
$Q$-refutation implies the existence of a correct
$Q$-refutation. Moreover, if $T$ is well-connected, then the two
conditions coincide, which eliminates false negatives in the finite
entailment decision procedure.




\begin{lemma} \label{lem:consistent}
  If a $\Kk$-decomposition $T$ of $\Ii$ admits a consistent
  $Q$-refutation, then it admits a correct $Q$-refutation.
  If $T$ is well-connected, then the correct $Q$-labelling is
  consistent.   
\end{lemma}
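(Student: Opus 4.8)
The plan is to prove the two implications of Lemma~\ref{lem:consistent} separately, both leaning on the compositionality of $Q$-types (and weak $Q$-types) established just before the statement.

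\medskip\noindent\textbf{First implication.}
Suppose $T$ admits a consistent $Q$-refutation $\sigma$; I want to produce a correct $Q$-refutation. The key tool is Lemma~\ref{lem:correct}, which characterizes the correct $Q$-labelling $\sigma^*$ as the \emph{least} $Q$-labelling satisfying the fixpoint equation with equality. First I would argue that every consistent $\sigma$ is pointwise above the correct labelling, i.e.\ $\sigma^*(d) \subseteq \sigma(d)$ for all $d$. Since $\wtp^{\Ii_u}_Q(\cdot) \supseteq \tp^{\Ii_u}_Q(\cdot)$, the consistency inequality in Definition~\ref{def:consistency} dominates the equality of Lemma~\ref{lem:correct}: any labelling satisfying the consistency inequality is a \emph{prefixpoint} of the monotone operator whose least fixpoint is $\sigma^*$. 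The operator $d \mapsto \big(\tp^{\Ii_u}_Q(\{d\}\cup\Gamma_u\cup\widetilde\Gamma_u) \oplus \bigoplus_{e\in\widetilde\Gamma_u}\sigma(e)\big)\upharpoonright\{d\}\cup\Gamma_u$ is monotone in $\sigma$ (both $\oplus$ and $\upharpoonright$ are monotone in their $Q$-type arguments, as is clear from their set-theoretic definitions), so by the Knaster--Tarski argument already implicit in Lemma~\ref{lem:correct} its least fixpoint lies below every prefixpoint. Hence $\sigma^* \subseteq \sigma$ pointwise. Now, since $\sigma$ is a $Q$-refutation, no pair $(q,\eta)$ with $q\in Q$ appears in any $\sigma(d)$; by $\sigma^*(d)\subseteq\sigma(d)$, the same holds for $\sigma^*$. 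Thus the correct labelling $\sigma^*$ is itself a $Q$-refutation, which is the desired conclusion.

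\medskip\noindent\textbf{Second implication.}
Now assume $T$ is well-connected, and show that the correct $Q$-labelling $\sigma^*$ is consistent. By the exactness of weak $Q$-types on well-connected interpretations stated just before the definition, $\wtp^{\Ii_u}_Q(\Gamma) = \tp^{\Ii_u}_Q(\Gamma)$ for every bag $\Ii_u$ (each bag is well-connected, by the definition of a well-connected decomposition). Substituting this equality into Definition~\ref{def:consistency} turns the consistency inequality into exactly the fixpoint equation of Lemma~\ref{lem:correct}, which $\sigma^*$ satisfies with equality---and equality trivially entails the required inclusion $\supseteq$. Hence $\sigma^*$ is consistent.

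\medskip\noindent\textbf{Main obstacle.}
The conceptually delicate point is the monotonicity/prefixpoint argument in the first implication: one must be careful that replacing $\tp$ by the larger $\wtp$ and replacing equality by $\supseteq$ genuinely yields a prefixpoint of the \emph{same} operator whose least fixpoint is $\sigma^*$. This requires that the operator built from $\tp$ is dominated by the operator built from $\wtp$ (immediate from $\tp\subseteq\wtp$ together with monotonicity of $\oplus$ and $\upharpoonright$), so that a consistent $\sigma$ satisfies $\sigma \supseteq \mathrm{Op}_{\tp}(\sigma)$, making it a genuine prefixpoint. A secondary subtlety is that $T$ may be infinite, so $\sigma^*$ is defined coinductively as the least fixpoint over an infinite tree; the least-prefixpoint reasoning remains valid, but one should note that pointwise containment is preserved along the whole (possibly infinite) decomposition rather than argued by induction on a finite height.
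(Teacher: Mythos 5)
Your proof is correct and follows essentially the same route as the paper: the paper likewise derives $\sigma^* \subseteq \sigma'$ for every consistent $\sigma'$ from Lemma~\ref{lem:correct} together with $\tp \subseteq \wtp$ (your prefixpoint/monotonicity argument just makes explicit what the paper leaves implicit), and obtains the second claim from the coincidence of $Q$-types and weak $Q$-types on well-connected bags. No gaps; your added care about monotonicity and the coinductive setting matches the reasoning in the paper's appendix proof of Lemma~\ref{lem:correct}.
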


\begin{proof}
  Let $\sigma$ be the correct $Q$-labelling for $T$. By
  Lemma~\ref{lem:correct} and the relation between $Q$-types and weak
  $Q$-types, for each consistent $Q$-labelling $\sigma'$ for $T$,
  $\sigma(v) \subseteq \sigma'(v)$ for all $v \in T$. Consequently, if
  some consistent $Q$-refutation $\sigma'$ for $T$ exists, then $\sigma$
  is a $Q$-refutation as well. As $\sigma$ is correct, the first claim
  of the lemma follows. If $T$ is well-connected, $Q$-types and weak
  $Q$-types coincide. Combined with Lemma~\ref{lem:correct}, this
  implies that $\sigma$ is consistent. 
\end{proof}

\begin{lemma} \label{lem:there}
 If $\Kk\notfentails^\Theta Q$ then some model of $\Kk$ realizing only
 types from $\Theta$ has a safe
 $\Kk$-decomposition admitting a consistent $Q$-refutation. 
\end{lemma}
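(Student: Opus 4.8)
The plan is to read off both the model and its decomposition directly from Lemma~\ref{lem:unravelling}, and then argue that the \emph{correct} $Q$-labelling of that decomposition is automatically a consistent $Q$-refutation; no new construction is needed, only a bookkeeping of which earlier guarantees are invoked where.

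First I would apply Lemma~\ref{lem:unravelling} to the hypothesis $\Kk\notfentails^\Theta Q$, obtaining a counter-model $\Ii$ together with a safe and well-connected $\Kk$-decomposition $T$ of $\Ii$. By the meaning of counter-model in the modulo-types setting, $\Ii$ is a model of $\Kk$ that realizes only types from $\Theta$ and satisfies $\Ii\notmodels Q$. Thus $\Ii$ is already a model of exactly the required shape and $T$ is safe; all that remains is to exhibit a consistent $Q$-refutation for $T$.

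Next I would take $\sigma$ to be the correct $Q$-labelling for $T$, which exists (and is unique) by Lemma~\ref{lem:correct}. Since $\Ii$ admits the $\Kk$-decomposition $T$ and $\Ii\notmodels Q$, the characterization stated just before Lemma~\ref{lem:correct} shows that $\sigma$ uses only $Q$-types in which no $q\in Q$ appears, i.e.\ $\sigma$ is a $Q$-refutation. Finally, because $T$ is well-connected, the second part of Lemma~\ref{lem:consistent} guarantees that the correct $Q$-labelling is consistent. Combining these facts, $\sigma$ is a consistent $Q$-refutation for the safe $\Kk$-decomposition $T$ of $\Ii$, which is precisely the conclusion.

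I do not expect a genuine obstacle here, because the structural difficulty has already been absorbed into Lemma~\ref{lem:unravelling}, and the compositionality of $Q$-types together with the coincidence of $Q$-types and weak $Q$-types on well-connected bags is what forces the correct labelling to satisfy the consistency inequality. The one point that needs care is to invoke well-connectedness exactly where it matters: it is the hypothesis of Lemma~\ref{lem:consistent} that upgrades the correct $Q$-refutation to a \emph{consistent} one. Hence, even though the statement of Lemma~\ref{lem:there} only demands safety, I must carry the well-connectedness guarantee of the unravelling all the way through the argument, since dropping it would leave only a correct $Q$-refutation rather than the consistent one required.
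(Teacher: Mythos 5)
Your proposal is correct and matches the paper's own proof step for step: invoke Lemma~\ref{lem:unravelling} to obtain a counter-model with a safe, well-connected $\Kk$-decomposition, note the correct $Q$-labelling is a $Q$-refutation since $\Ii\notmodels Q$, and use the second part of Lemma~\ref{lem:consistent} (via well-connectedness) to conclude consistency. Your closing remark---that well-connectedness must be carried through even though the lemma's statement only mentions safety---is exactly the right observation about where the unravelling's guarantees are consumed.
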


\begin{proof}
Assume $\Kk\notfentails^\Theta Q$. By Lemma~\ref{lem:unravelling},
there exists a counter-model $\Ii$ with a safe and well-connected
$\Kk$-decomposition $T$. Let $\sigma$ be the unique correct
$Q$-labelling for $T$. Because $\Ii \notmodels Q$, $\sigma$ is a
$Q$-refutation.  By Lemma~\ref{lem:consistent}, $\sigma$ is
consistent.
\end{proof}

\subsection{Recognizing Safe Counter-Examples}

We now construct an automaton $\Bb_{\Kk, Q}$ recognizing safe
$\Kk$-decompositions of models of $\Kk$, admitting consistent
$Q$-refutations. Like in the case of multiple roles, we
must give the automaton information about shared elements. For
this purpose, we enrich each node $v$ with a tuple $\up_v$ enumerating
all elements of $\Gamma_v$ that belong to the parent $u$ of $v$, and
on the edge between $u$ and $v$ we put $\rt_v$ and $\up_v$.  Again,
the actual elements in $\Delta_v$ and $\Delta_u$ do not matter:
$\rt_v$ and $\up_v$ coupled with the label on the edge from $u$ to $v$
determine which elements represented in $v$ and in $u$ should be
identified. Let $M = N^{|\RCN(\Kk)|}$ where $N$ is the counting
threshold in $\Kk$. We fix a countably infinite set $\Delta$ and
assume that the node alphabet of our automaton is the set of 4-tuples
of the form $(\Jj, \rt, \Gamma, \up)$ where $\Jj$ is a finite
interpretation with $\Delta^\Jj \subseteq \Delta$, $\rt \in
\Delta^\Jj$, $\Gamma \subseteq \Delta^\Jj$, $|\Gamma| \leq K$, and
$\up$ is a tuple enumerating a subset of $\Gamma$. The edge alphabet
is $\Delta \times \bigcup_{i=0}^M \Delta^{i}$.

The following lemma relies on $\Kk$ being expressed either in $\ALCIt$
or in $\ALCQt$.
By a base-case KB we understand a single-role KB in $\ALCIt$ or
$\ALCQt$, respectively, with a trivial ABox and no at-most
restrictions over closures of roles. Let $m=\max_{q \in Q}|q|$.

\begin{lemma} \label{lem:regularity}
  There exists an automaton $\Bb_{\Kk,Q,\new{\Theta}}$ recognizing those safe
  $\Kk$-decompositions of models of $\Kk$ realizing only types from
  $\Theta$ that admit consistent $Q$-refutations.
  The states, initial states, and accepting states of
  $\Bb_{\Kk, Q, \new{\Theta}}$ can be computed in time
  $O\big(2^{\poly( \|Q\|, \|\Kk\|, M^m)}\big)$.
  The question if $(P,q) \in
  \step_{\Bb_{\Kk,Q, \new{\Theta}}}$ can be reduced to an instance of finite
  entailment modulo types for a base 
  case KB of size at most
  $\poly(\|\Kk\|, \|Q\|, N\cdot M^m)$
  and counting threshold $N$,
  a $\UCQ$ consisting of at most
  $\|Q\|^2 \cdot M^m \cdot m^m$
  $\CQ$s of size $O(m)$,
  and a type set of size at most
  $|\Theta| \cdot 2^{\poly(\|K\|, \|Q\| , N\cdot M^m)}$.
\end{lemma}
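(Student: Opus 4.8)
The plan is to realize $\Bb_{\Kk,Q,\Theta}$ as a B\"uchi automaton over the node and edge alphabets already fixed, which guesses a candidate consistent $Q$-refutation in its states and verifies locally all the conditions that define the target set of trees. The state assigned to a node $v$ records exactly what the parent of $v$ needs to know about the subtree rooted at $v$: the $Q$-type $\sigma(\rt_v)$ with parameters $\{\rt_v\}\cup\Gamma_v$, written over canonical parameter names so that the edge label $(\rt_v,\up_v)$ determines which of them are shared with the parent; the unary types of $\rt_v$ and of the elements of $\Gamma_v$; and structural flags telling whether $\rt_v$ and each shared parameter is a sink, source, internal, or isolated element of $\Ii_v$. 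A state is initial when it is compatible with the root bag, i.e.\ its guessed type extends $\{A\mid A(a)\in\Aa\}$ and its $\up$-part is empty. Reading a node letter $(\Ii_u,\rt_u,\Gamma_u,\up_u)$ together with the edge labels and states of the children of $u$, the transition function checks that: (i) gluing $\Ii_u$ to the child summaries along the indicated shared elements yields a piece of a hybrid decomposition well-formed for $\Kk$ in the sense of Definition~\ref{def:well-formed} (each fresh sink/source or fresh element of $\Gamma_u$ is the root of exactly one child, the domain overlaps obey Definition~\ref{def:hybrid}, and items~\ref{item:rel-consistent}--\ref{item:rel-contained} hold, using the unary types carried by the child states); (ii) $\Ii_u$ satisfies every CI of $\Tt$, which is a per-bag check because the ``sink or source'' requirement on roots in item~\ref{item:roots} localizes each existential, universal, counting, and inverse condition to a single bag; (iii) every realized unary type lies in $\Theta$; and (iv) the consistency inequality of Definition~\ref{def:consistency} holds for $\rt_u$ and for every element local in $u$, while no guessed $Q$-type contains a full $q\in Q$, i.e.\ the labelling is a $Q$-refutation.

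Correctness follows the template of the multiple-roles construction. Compositionality of weak $Q$-types together with Lemmas~\ref{lem:correct} and~\ref{lem:consistent} shows that a run satisfying the consistency inequality at every node is precisely a consistent $Q$-refutation, and that on the well-connected decompositions produced by Lemma~\ref{lem:unravelling} the correct labelling is itself consistent; hence the accepted trees are exactly the safe $\Kk$-decompositions of models of $\Kk$ that realize only types from $\Theta$ and admit a consistent $Q$-refutation. The number of states is $2^{\poly(\|Q\|,\|\Kk\|,M^m)}$, a product of the choices for the at most $M+1$ recorded unary types and for the guessed $Q$-type with its $\le M+1$ parameters, so the state set and its initial and accepting subsets are computable within the claimed time bound.

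Safety is imposed through the B\"uchi acceptance condition rather than through $\delta$. Using the structural flags stored in the states, each tree edge can be marked ``bad'' exactly when it instantiates one of the two clauses of the safety condition, and a state is declared accepting iff the edge entering it is not bad; requiring infinitely many accepting states on every branch then forbids any branch from carrying an infinite tail of bad edges. One delicate point is that a violating sequence $u_0,u_1,\dots$ may, through the ``$\rt_{u_i}$ is a non-isolated source in $\Ii_{u_{i+1}}$'' clause, ascend the tree; one must verify that every such sequence can be rerouted into an infinite descending chain, so that the branch-wise B\"uchi condition really captures safety.

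The heart of the argument is the reduction of membership in $\step_{\Bb_{\Kk,Q,\Theta}}$ to a single instance of finite entailment modulo types. Writing $s$ for the state (to avoid clashing with queries $q\in Q$), $(P,s)\in\step$ asks whether some node letter admits a transition all of whose child states lie in $P$; equivalently, whether there is one bag $\Ii_u$ whose root has the unary type recorded in $s$, with at most $M=N^{|\RCN(\Kk)|}$ parameters (Fact~\ref{fact:relevant}) and with children taken from $P$, passing checks (i)--(iv). Since the parameters $\Gamma_u$ are bounded in number, I would turn each of them into a \emph{nominal} carrying its prescribed unary type; this makes the anchoring of matches at shared elements expressible in the query and, with the parameters pinned, lets the at-most restrictions over closures of roles be discharged by the relevant-concept-name bookkeeping and dropped. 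The refutation requirement then reads: $\Ii_u$ must \emph{avoid} every composite match that, glued with the matches certified by the child states $\sigma(e)$ for $e\in\widetilde\Gamma_u$, completes a full $q\in Q$. Ranging over $q\in Q$, over the ways of splitting $q$ between the bag and the children, and over the assignments of boundary variables to the nominal parameters, these composites form a $\UCQ$ $Q'$ of at most $\|Q\|^2\cdot M^m\cdot m^m$ fragments of size $O(m)$. Thus $(P,s)\in\step$ iff the base-case KB $(\Tt_r,\Aa_\tau)$ with these nominals does \emph{not} finitely entail $Q'$ relative to the admissible type set $\Theta'$; eliminating nominals as in Section~\ref{sec:ABoxesNomMain} lands in the base case, and the stated bounds on KB size, counting threshold $N$, query, and type set follow by direct bookkeeping. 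This last step is the main obstacle: expressing the local consistency-and-refutation test for an arbitrarily large bag as one finite non-entailment instance in which the bounded set of relevant elements is treated as nominals, while keeping the induced query and type set within the claimed sizes.
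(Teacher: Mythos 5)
There is a genuine gap at the heart of your construction: step (ii), the claim that ``$\Ii_u$ satisfies every CI of $\Tt$'' is a per-bag check because item~\ref{item:roots} ``localizes each existential, universal, counting, and inverse condition to a single bag.'' This is false, and the paper's proof is organized around exactly this difficulty. By item~\ref{item:roots}, the root $\rt_v$ of a child bag is a \emph{fresh sink or source} in $\Ii_u$; a fresh sink has no outgoing edges in $\Ii_u$ at all, so any at-least restriction $A \sqsubseteq \qnrgeq n r B$ holding at it must be witnessed in the child bag $\Ii_v$ (or deeper), and symmetrically $\Ii_v$ need not satisfy $\Tt$ at $\rt_v$ either, since part of that element's neighbourhood lives in $\Ii_u$. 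At-most restrictions on a shared element require \emph{summing} direct-successor counts over the several bags containing it, and restrictions over $r^*$ are witnessed by paths crossing unboundedly many bags. So individual bags are typically \emph{not} models of $\Tt$, and per-bag checking is simultaneously unsound and incomplete. The paper instead devotes an entire component of the automaton to this: for $\rt_v$ and each element of $\Gamma_v$ it passes up counts, cut off at $N$ (resp.\ at $1$ for predecessors), of $A$-successors, direct $A$-successors, $A$-predecessors, and direct $A$-predecessors contributed by local elements of the current subtree, and it verifies the CIs only at the unique node where an element is \emph{forgotten}, i.e.\ where it is fresh; correctness of this aggregation is explicitly noted to rely on the safety condition, which confines all witnesses to a finite fragment of the decomposition. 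The same bookkeeping then necessarily reappears in the $\step$ reduction, which your sketch also omits: besides nominals $a_1,\dots,a_k$ for $\Gamma_u$, the base-case KB needs fresh concept names such as $E_{\qnrleq n r B}$ and $E_{\qnrgeq 1 {r^*} B}$ describing \emph{external} neighbours, CIs splitting each restriction between internal and external witnesses (e.g.\ $A \sqsubseteq \bigsqcup_{i \leq n} \qnrgeq i r B \sqcap E_{\qnrgeq {n-i} r B}$), disjunctions over the states $p \in P$ that fix the external profile of each non-root sink and source, and a preliminary filtering of $P$ for compatibility with $q$ (matching types, relevant-concept sets, and counts). Pinning $\Gamma_u$ as nominals and asking whether the bag models $\Tt$ and avoids your $\UCQ$ $Q'$ would reject essentially every genuine decomposition, since honest bags fail $\Tt$ at their children's roots.

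A secondary, smaller issue: for safety you correctly flag that a violating sequence $u_0,u_1,\dots$ may ascend the tree via the ``$\rt_{u_i}$ is a non-isolated source in $\Ii_{u_{i+1}}$'' clause, but you leave the rerouting to a single branch unresolved. The paper resolves it by storing in each state two flags --- whether $\rt_v$ is a sink or non-isolated source in the parent bag, and the same datum for the parent's root in the grandparent --- and declaring accepting precisely the states where these flags alternate, which makes the branch-wise B\"uchi condition equivalent to safety. Your query component (states carrying $\sigma(\rt_v)$ projected to $\{\rt_v\}\cup\Gamma_v$, local consistency checks, the appeal to Lemmas~\ref{lem:correct} and~\ref{lem:consistent}), the state-count bound, and the shape of the $\UCQ$ with its $\|Q\|^2 \cdot M^m \cdot m^m$ bound do match the paper; but without the cross-bag witness accounting, both the automaton and the reduction are incorrect as stated.
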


\subsection{Making Counter-Examples Finite}

To complete the proof we need to derive finite non-entailment from the
existence of a safe counter-example.

\begin{lemma} \label{lem:back}
  If $\Kk$ is in $\ALCIt$ or $\ALCQt$ and some model of $\Kk$
  realizing only types from $\Theta$ has a safe $\Kk$-decomposition
  admitting a consistent $Q$-refutation, then $\Kk\notfentails^\Theta Q$.
\end{lemma}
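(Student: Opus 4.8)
The plan is to reverse the unravelling construction from Lemma~\ref{lem:unravelling}: given a safe $\Kk$-decomposition $T$ carrying a consistent $Q$-refutation $\sigma$, I would build a \emph{finite} counter-model by folding the infinite tree $T$ into a finite structure. The interpretation $\Ii = \bigcup_{v\in T}\Ii_v$ represented by $T$ is already a model of $\Kk$ realizing only types from $\Theta$; by Lemma~\ref{lem:consistent} the existence of a consistent $Q$-refutation implies the existence of a correct $Q$-refutation, and since a correct $Q$-refutation witnesses $\Ii\notmodels Q$, the infinite $\Ii$ is a counter-model. So the whole content of the lemma is to turn this infinite counter-model into a finite one while preserving both modelhood of $\Kk$ and non-satisfaction of $Q$.

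\smallskip\noindent\textbf{Folding along safety.} The key structural handle is \emph{safety}: $T$ admits no infinite sequence $u_0,u_1,\dots$ where each $\rt_{u_{i+1}}$ is a sink in $\Ii_{u_i}$ or $\rt_{u_i}$ is a non-isolated source in $\Ii_{u_{i+1}}$. This is exactly the property that makes the "dangerous" direction of edge-propagation well-founded, so that a blocking/looping-back argument terminates. I would first argue that $\Ii$ has bounded degree and bounded-length simple directed paths: bags have width at most $N^{|\RCN(\Kk)|}$ and, because a regular-tree / finitely-many-bag-types assumption can be imposed (as in Section~\ref{sec:mulroles}), bag sizes are bounded by some $k$; hence degrees are bounded and, since the only way to extend a simple path across a bag boundary is through the single shared root or the bounded $\Gamma$-set, simple directed paths have bounded length $\ell$. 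This bound is what lets me apply Fact~\ref{fact:bounded} to replace $Q$ by the finitary $Q^{(\ell)}$, and Fact~\ref{fact:coloured-blocking} on coloured blocking to redirect edges without creating a match.

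\smallskip\noindent\textbf{Coloured blocking and redirection.} With $t$ the maximal number of binary atoms in a $\CQ$ of $Q^{(\ell)}$, I set $n=t^2$ and take an $n$-proper colouring $\Ii'$ of $\Ii$. On each infinite branch I select the first bag whose shared-with-parent element $e$ has an $n$-neighbourhood isomorphic to that of a corresponding element $e'$ in an ancestor bag; by K\"onig's Lemma the selected bags form a finite antichain. I keep only the strict ancestors of the selected bags and redirect every edge incident with such an $e$ to the matching neighbour of $e'$, mirroring the multiple-roles construction — dropping incoming edges in the $\ALCQt$ case to respect at-most restrictions, and symmetrically handling source/sink roles in the $\ALCIt$ case (this is where "$\Kk$ is in $\ALCIt$ or $\ALCQt$" is used, since the fused logic $\ALCIQfwdt$ would not permit this clean one-sided surgery). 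Because redirection only identifies $n$-neighbourhood-isomorphic elements, Fact~\ref{fact:coloured-blocking} gives $\Ff\not\models Q^{(\ell)}$, and since simple directed paths in $\Ff$ stay bounded by $\ell$, Fact~\ref{fact:bounded} upgrades this to $\Ff\not\models Q$.

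\smallskip\noindent\textbf{The main obstacle.} I expect the hard part to be verifying $\Ff\models\Kk$ after redirection, specifically the interaction of at-most restrictions \emph{over closures} of roles with the folding. Safety was engineered precisely so that relevant concept names (the $\RCN(\Kk)$) and their bounded relevant-successor sets are respected across the cut; I would need to check that every $d$ whose local bag is discarded has its closure-counting obligations already satisfied by the bounded $\rel$-set preserved in $\Gamma_u$, using well-formedness item~\ref{item:rel-contained} to guarantee that all relevant witnesses under a sink/source boundary lie in $\Gamma$, and hence are carried over to the ancestor $e'$ whose $n$-neighbourhood (with $n$ chosen large enough to see the whole bounded $\rel$-set) is isomorphic. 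Stickiness and the $\ALCIt$/$\ALCQt$ one-sidedness are what keep these counting constraints from being violated by the newly redirected edges; assembling these invariants into a clean "$\Ff\models\Kk$" verification is the delicate bookkeeping, but the conceptual work is already done by safety and well-formedness.
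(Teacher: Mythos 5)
You attempt a uniform coloured-blocking fold for both logics and defer exactly the step that cannot be done in general: verifying $\Ff\models\Kk$ when at-most restrictions over closures of roles survive the folding. The paper's proof is structured so that this bookkeeping never arises, exploiting the fact that inverses and counting never co-occur. For \ALCQt the paper does \emph{no folding at all}: since well-formedness without inverses makes every child's root a sink in its parent's bag (and elements of $\Gamma_u\setminus\{\rt_u\}$ sinks as well), any infinite branch of the decomposition would itself be an infinite sequence violating safety; hence every safe well-formed decomposition for an inverse-free KB is already finite, and the interpretation it represents is the finite counter-model outright. For \ALCIt, well-formedness forces $\Gamma_u=\emptyset$ and the normal form admits only $n=0$ in at-most restrictions, so after folding one must preserve only universal restrictions (equivalent, in $\ell$-bounded interpretations, to non-satisfaction of a UCQ, handled by Fact~\ref{fact:coloured-blocking}) and existential restrictions over closures (transferred along the $n$-neighbourhood isomorphism, Lemma~\ref{lem:folding-correct}); there are no counting obligations to carry across the cut. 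Your planned \ALCQt fold with dropped incoming edges is not salvageable by the invariants you cite: redirecting edges changes the set of $r^*$-reachable $B$-elements, a constraint $\qnrleq n {r^*} B$ with $n>0$ is not expressible as a forbidden CQ of bounded size, and coloured blocking gives no protection for it --- this is essentially the same obstruction that keeps the method from extending to the fused logic \ALCIQfwdt, so the ``delicate bookkeeping'' you postpone is precisely the part that would fail.

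There are also two smaller divergences in the \ALCIt half that matter technically. First, the paper does not select arbitrary bags with a neighbourhood-isomorphic ancestor: it restricts the selection to nodes $v$ such that $\rt_v$ is a sink in the parent's bag \emph{and} the parent's root is a source in the grandparent's bag, then deletes $\rt_v$ and redirects only the \emph{incoming} edges to $\rt_{\hat v}$. This sink-after-source pattern is what guarantees that a simple directed path in $\Ff$ can traverse at most one redirected edge (proof of Lemma~\ref{lem:folding-correct}), which yields the $\ell$-bound on simple paths needed before Fact~\ref{fact:bounded} applies; your ``redirect every edge incident with $e$'', copied from Section~\ref{sec:mulroles}, relied on role-disjointness of $\Sigma_r$-bags and does not transfer to the single-role setting. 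Second, bounded simple-path length in $\Ii$ does not follow from bag boundaries and bounded bag size alone --- a simple path can cross unboundedly many bags --- it follows from regularity of $T$ (via Lemma~\ref{lem:regularity}) \emph{combined with} safety, which together bound the length of sink/source alternation chains; your appeal to regularity is on the right track, but the path bound needs safety, not just bounded bags.
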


For $\ALCQt$, a finite counter-model comes for free. Indeed, each safe hybrid
decomposition well-formed for a KB without inverses must be
finite, so the interpretation it represents is finite too.

For $\ALCIt$, we use the coloured blocking principle. 
By Lemma~\ref{lem:regularity}, the set of those $\Kk$-decompositions
of models of $\Kk$ that admit a weakly-consistent $Q$-refutation is
recognized by an automaton. Consequently, if nonempty, it contains a
regular tree $T$; that is, $T$ has only finitely many non-isomorphic
subtrees. From the regularity of $T$ it follows that $\Ii_v$ are
chosen from a finite set, which means that their size is
bounded. Similarly, the branching of $T$ is bounded. Finally, safety
and regularity of $T$ together imply that there is a bound on the
length of sequence of nodes $u_0, u_1, \dots, u_k$ such that for all
$i<k$, either $\rt_{u_i}$ is a source in $\Ii_{u_{i+1}}$ or
$\rt_{u_{i+1}}$ is a sink in $\Ii_{u_{i}}$. It follows that in the
interpretation $\Ii$ represented by $T$, both the branching and the
length of simple directed paths is bounded by some $\ell \in \Nat$.

%
%
%

By Fact~\ref{fact:bounded}, $\Ii\models Q$ iff $\Ii\models
Q^{(\ell)}$, where $Q^{(\ell)} \in \UCQ$ is the query defined in
Section~\ref{sec:mulroles}. Fix $t = \max_{q \in Q^{(\ell)}} |q|$ and 
$n = \max(\ell^2, t^2)$.  Let  $\Ii'$ be an $n$-proper
colouring of $\Ii$. Let $V$ be the set of nodes $v \in T$ such that
$\rt_v$ is a sink in $\Ii_{v'}$ for the parent $v'$ of $v$ and
$\rt_{v'}$ is a source in $\Ii_{v''}$ for the parent $v''$ of $v'$. By
the boundedness properties of $T$, each infinite branch of $T$ has
infinitely many nodes in $V$.  On each infinite branch, select the
first node $v \in V$ with an ancestor $\hat v \in V$ such that the
$n$-neighbourhood of $\rt_{v}$ is isomorphic with the
$n$-neighbourhood of $\rt_{\hat v}$. The set of
selected nodes forms a maximal antichain in $T$ and, by K\"onig's
Lemma, it is finite. Let $\Ff$ be the interpretation obtained by
taking the union of $\Ii_u$ for $u$ ranging over the nodes of $T$ that
have a selected descendent, except that for each selected node $v$,
$\rt_{v}$ is removed from the domain and all incoming edges are
redirected to $\rt_{\hat v}$.

\begin{lemma} \label{lem:folding-correct}
  \new{ The length of simple directed paths in $\Ff$
   is bounded by $\ell$} and $\Ff \models \Kk$.
\end{lemma}

By Fact~\ref{fact:coloured-blocking}, $\Ff \notmodels Q^{(\ell)}$,
and by Fact~\ref{fact:bounded} and Lemma~\new{\ref{lem:folding-correct}},
$\Ff \notmodels Q$.  Thus, $\Ff$ is a finite counter-model.

\subsection{Wrapping up}



  Assume $\Kk$ is in $\ALCIt$ or $\ALCQt$.
  By Lemmas~\ref{lem:there} and \ref{lem:back}, $\Kk\notfentails^\Theta Q$
  iff  some model of $\Kk$ realizing only types from $\Theta$ has a
  safe $\Kk$-decomposition admitting a  weakly consistent
  $Q$-refutation.
  Combining Lemma~\ref{lem:regularity} and Fact~\ref{fact:emptiness},
  we get a decision procedure for $\Kk\fentails^\Theta Q$ running in time 
  $O\big(\poly(|\Theta|)\cdot 2^{\poly( \|Q\|, \|\Kk\|, M^m)}\big)$ and using oracle calls to finite
  entailment modulo types for instances with parameters bounded as in Lemma~\ref{lem:regularity}.



\section{Base Case}~\label{sec:basecase}
We now solve finite entailment modulo types under the assumptions
justified in previous sections; that is, for $Q$ in $\UCQ$ and a
single-role ABox-trivial $\Kk$ either in $\ALCIt$ or in $\ALCQt$
without at-most restrictions over closures of roles. We show that it
suffices to consider tree-shaped counter-models
(Lemma~\ref{lem:tree-shaped}), and that they consitute an effectively
regular set (Lemma~\ref{lem:reg}).  The decision procedure then
amounts to computing the automaton and testing its emptiness; it runs
in time $\poly(|\Theta|, 2^{\|\K\|^2\cdot\|Q\|})$ where $\Theta$ is the
set of allowed types.

An interpretation $\Jj$ is \emph{tree-shaped} if its elements can be
arranged into a tree such that the unique $f\in\Ind(\Kk)$ is the root
and $r$-edges in $\Jj$ are allowed only between parents and children;
if $\Kk$ is an $\ALCQt$ KB, $r$-edges must point down.

\begin{lemma} \label{lem:tree-shaped}
  $\Kk\notfentails^\Theta Q$ iff there exists a tree-shaped
  counter-model of bounded degree realizing only types from $\Theta$. 
\end{lemma}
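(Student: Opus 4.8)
The plan is to prove the two implications by complementary model transformations: unravelling for the ``only if'' direction, and a reachability-aware form of coloured blocking for the ``if'' direction. I expect the second to carry the real difficulty, owing to the at-least restrictions over closures of roles.

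\emph{Only if.} Assume $\Kk\notfentails^\Theta Q$ and fix a finite counter-model $\Jj$ realizing only types from $\Theta$. I would unravel $\Jj$ from the unique $f\in\Ind(\Kk)$: for $\ALCQt$ the nodes are the directed $r$-paths from $f$, and for $\ALCIt$ they are the paths that may alternate $r$- and $r^-$-steps, giving a tree with $r$-edges in both directions. Sending each node to the $\Jj$-element in which it ends is a homomorphism $h\colon\Ii\to\Jj$ that preserves unary types and $r$ (hence $r^*$); since $Q\in\UCQ$ is preserved under homomorphisms and $\Jj\notmodels Q$, we get $\Ii\notmodels Q$, and $\Ii$ realizes only types from $\Theta$. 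The Boolean CIs and the one-step restrictions ($\qnrleq n r B$, $\qnrgeq n r B$, $\forall r^-.B$, $\exists r^-.B$) hold because the one-step neighbourhood of each node faithfully copies that of its image, while the at-least restrictions over closures ($\qnrgeq n {r^*} B$ and $\exists (r^-)^*.B$) hold because every $\Jj$-path witnessing reachability lifts to a directed path in $\Ii$, distinct targets lifting to distinct copies. Retaining at each node only boundedly many successors---enough to witness all one-step and closure at-least requirements, which is harmless as there are no at-most restrictions over closures---makes $\Ii$ of bounded degree.

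\emph{If.} Let $\Ii$ be a tree-shaped counter-model of bounded degree realizing only types from $\Theta$; I must extract a \emph{finite} one. As $Q\in\UCQ$ has disjuncts with at most $m$ atoms, I would fix $n=m^2$, take an $n$-proper colouring of $\Ii$ (Fact~\ref{fact:coloured-blocking}), and fold by redirecting, on every infinite branch, the edge entering a blocking node $v$ to an ancestor $\hat v$ with isomorphic $n$-neighbourhood, discarding the subtree strictly below $v$. By K\"onig's Lemma the chosen nodes form a finite antichain, so the resulting $\Ff$ is finite; by Fact~\ref{fact:coloured-blocking}, $\Ff\notmodels Q$; and types, Boolean CIs, and one-step restrictions survive because every redirected edge points at a node with an isomorphic neighbourhood.

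The crux---and the step I expect to fight with---is preserving the at-least restrictions over closures under folding. A fold-back edge creates a cycle but also severs the directed path that led, in $\Ii$, from a node to a reachability witness, so a witness lying in a discarded subtree can disappear; a short fixpoint argument in fact shows that plain neighbourhood blocking can only shrink the set of $r^*$-reachable types, and the difficulty is aggravated by the need to coordinate fold points across sibling branches. My remedy is to make the blocking reachability-aware: enrich the colour of each element with its \emph{reachability profile}, recording for $r^*$ (and, in the $\ALCIt$ case, also for $(r^-)^*$) the multiset, capped at the counting threshold $N$, of unary types reachable from it. Blocking pairs $(\hat v,v)$ are required to agree on this enriched colour and to be positioned so that, after folding, the retained region through which the new cycle passes still realizes enough distinct witnesses for every entry of their common profile. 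The enriched colours range over a finite set, so some colour recurs infinitely often on each branch; and since the finitely many capped witnesses for a stable profile occur at bounded depth, one can push $v$ past a complete set of witnesses while still matching a recurring neighbourhood type, by the infinite pigeonhole principle. Because every node on a cycle reaches every other, this keeps each folded element's capped profile intact; and as there are no at-most restrictions over closures, preserving reachability (never having to bound it) is exactly what the closure at-least restrictions require. It is then routine that $\Ff\models\Kk$, so $\Ff$ is the desired finite counter-model realizing only types from $\Theta$.
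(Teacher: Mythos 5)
Your ``only if'' direction is essentially the paper's: unravel from the unique $f\in\Ind(\Kk)$ (successors only for \ALCQt, successors and predecessors for \ALCIt); note that the degree bound comes for free, since the unravelling of a finite model has degree at most twice that of the model, so your pruning step is unnecessary. The genuine gap is in the ``if'' direction, at exactly the step you flag. Your reachability-aware blocking asserts that blocking pairs can be ``positioned so that, after folding, the retained region \dots still realizes enough distinct witnesses for every entry of their common profile'', but that assertion \emph{is} the difficulty, and the per-branch pigeonhole argument does not deliver it, for two reasons. First, cross-branch coordination: the complete capped witness set of a fold target $\hat v$ is spread over many branches of its subtree, and the selected nodes on those sibling branches (chosen by the same rule) may sever parts of it; pushing $v$ ``past a complete set of witnesses'' on its own branch says nothing about what the antichain does elsewhere, and making the selection depend on which witnesses survive is circular, since the retained region is defined by the selection. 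Second, distinctness: for $A \sqsubseteq \qnrgeq n {r^*} B$ in \ALCQt you must count \emph{distinct} elements, and after folding the witnesses reached through a redirected edge into $\hat v$ may coincide with witnesses an element $d$ still retains directly (for instance when $d$ lies between $\hat v$ and $v$, so the fold creates a cycle through $d$), so summing retained witnesses with profile-guaranteed ones can double count. Your capped-multiset profile is the right invariant to want, but your argument that folding preserves it presupposes the surviving witness sets it is supposed to establish.

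The paper's proof avoids both problems by a structurally different surgery. It first introduces $N$ \emph{shades} $A_1,\dots,A_N$ partitioning each concept $A$, assigned greedily top-down so that an element with at least $n$ reachable elements in $A$ reaches $n$ distinct shades; this reduces every counted at-least restriction over $r^*$ to per-shade existentials $A_i' \sqsubseteq \exists r^*. A_i$, making distinctness automatic, since distinct shades are distinct elements. It also records reachability in fresh concept names $B'$ (and $B''$ for $(r^-)^*$ in the \ALCIt case) with extension $(\exists r^*.B)^\Ii$, so that this information becomes part of the unary type and is therefore preserved by any redirection that preserves $n$-neighbourhoods. Then, instead of blocking pairs along branches, it cuts at a level $l$ below which every realized $n$-neighbourhood already occurs above $l$, keeps for each element above $l$ one explicitly chosen witness per CI of the form $A \sqsubseteq \exists r^*.B$ or $A \sqsubseteq \exists (r^-)^*.B$ \emph{together with the path leading to it}, and redirects every edge leaving this region to an element above $l$ with an isomorphic $n$-neighbourhood. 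To repair your proof you would need to import something like the shade and recorded-reachability devices, or else prove your coordination claim in full detail, which I do not see how to do as stated.
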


\begin{proof}

Let $\Ii$ be a finite counter-model realizing only types from
$\Theta$.  We get a tree-shaped counter-model $\Ii'$ by unravelling $\Ii$
from the unique $f \in \Ind(\Kk)$. If $\Kk$ is in $\ALCQt$, we unravel by
adding fresh copies of all direct successors of previously copied
elements from $\Ii$. If $\Kk$ is in $\ALCIt$, we add fresh copies of
all direct successors and fresh copies of all direct predecessors. It is
clear that $\Ii'$ is a tree-shaped counter-model realizing only types
from $\Theta$, of degree bounded by twice the degree of $\Ii$.

Conversely, let $\Ii$ be a tree-shaped counter-model of bounded degree
realizing only types from $\Theta$.

If $\Kk$ is in $\ALCQt$, we enrich $\Ii$ to ensure that witnesses for
at-least restrictions over $r^*$ are never glued with each other. Note
that in this case all $r$-edges in $\Ii$ point down. Let $N$ be the
counting threshold in $\Kk$. For each concept name $A$, introduce
fresh concepts names $A_1, A_2, \dots, A_N$, called the {\em shades}
of $A$.  Extend $\Ii$ to these concept names in such a way that
$A_1^\Ii, A_2^\Ii, \dots, A_N^\Ii$ form a partition of $A^\Ii$, and
for each $n \leq N$, if $d\in\Delta^\Ii$ has at least $n$ successors
in $A$, then it has successors in at least $n$ shades of $A$.
This can be done greedily, by processing $\Ii$ top down, ensuring that
subtrees rooted at unprocessed nodes use each shade of $A$ at most
once. Take a shallowest unprocessed node $v$. For $n=1, 2, \dots, N$,
if $v$ has at least $n$ descendents in $A$ but only $n-1$ are
painted with a shade of $A$, pick an unpainted one and paint it with
an unused shade of $A$. After the whole tree is processed, paint each
remaining element in $A$ with an arbitrary shade of $A$.

For each concept name $B$ (including the shades) enrich $\Ii$ further
by introducing a fresh concept name $B'$ with extension $(\exists
r^*.B)^\Ii$ and add $B' \sqsubseteq \exists r^*.B$ to $\Kk$; if
$\Kk$ is in $\ALCIt$, introduce also $B''$ with extension $(\exists
(r^-)^*.B)^\Ii$ and add $B'' \sqsubseteq \exists (r^-)^*.B$.
Let $n=\max(2, |Q|^2)$ and let $\Ii'$ be an $n$-proper colouring of
$\Ii$. Consider a level $l$ in $\Ii'$ such that all $n$-neighbourhoods
realized in $\Ii'$ are already realized above $l$. For each element
above level $l$, choose a witness for each CI of the form $A
\sqsubseteq \exists r^* .B$ or $A \sqsubseteq \exists (r^-)^*. B$ in
$\Kk$. Restrict the domain to the nodes above level $l$ and the chosen
witnesses, together with the paths that lead to them. Redirect each
edge leaving the restricted domain to some element above level $l$,
preserving the $n$-neighbourhood.

The only nontrivial thing to check is that at-least restrictions are
not violated, in the case when $\Kk$ is in $\ALCQt$. For restrictions
over $r$, this is because siblings have different colours in
each $n$-proper colouring, so edges leading to different siblings are
never redirected to the same element. For restrictions over $r^*$,
this is because witnesses for the CI $A_i' \sqsubseteq \exists r^*. A_i$ are
preserved for each shade $A_i$ of $A$.
\end{proof}

\begin{lemma} \label{lem:reg}
  The set of tree-shaped counter-models realizing only types from
  $\Theta$ is recognized by an automaton computable in
  time $2^{O(\|\K\|^2\cdot\|Q\|)}$.
\end{lemma}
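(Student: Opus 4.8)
The plan is to build a tree automaton $\Bb$ over finitely-labelled trees that recognizes exactly the tree-shaped interpretations that are models of $\Kk$, realize only types from $\Theta$, and fail to satisfy $Q$. Since $\Kk$ is single-role and ABox-trivial, a tree-shaped interpretation can be encoded as a labelled tree whose root corresponds to the unique $f\in\Ind(\Kk)$, each node carries its unary $\Kk$-type (from $\Theta$), and the single role name $r$ is realized only along parent--child edges (pointing down in the $\ALCQt$ case, in either direction in the $\ALCIt$ case). The bounded-degree guarantee from Lemma~\ref{lem:tree-shaped} ensures that the branching is bounded, so the tree really is a finitely-branching labelled tree over a finite alphabet, and an automaton is the natural device.

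First I would set up the local consistency checks that enforce $\Ii\models\Kk$. Because we have eliminated inverses-with-counting and at-most restrictions over $r^*$, every CI in normal form imposes a constraint that is essentially local or one-directional: Boolean CIs $\bigsqcap_i A_i\sqsubseteq\bigsqcup_j B_j$ are checked at each node; $\exists r.B$, $\forall r.B$, and at-least restrictions over $r$ are checked against the (bounded) multiset of children types; the $\forall r^-.B$ and $\exists r^-.B$ constraints in the $\ALCIt$ case are checked between a node and its parent. The at-least restrictions over $r^*$, namely $A\sqsubseteq\exists r^*.B$ (and $A\sqsubseteq\exists(r^-)^*.B$), are \emph{reachability} constraints along the tree; these I would enforce with a B\"uchi acceptance condition that forbids a promised witness from being postponed forever down (or up) a branch, exactly the kind of eventuality the B\"uchi condition of the automaton definition is designed to capture. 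The type restriction to $\Theta$ is a trivial alphabet restriction.

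The remaining and genuinely more delicate ingredient is enforcing $\Ii\notmodels Q$. The key point is that once at-most restrictions over $r^*$ are gone and we work on tree-shaped models, a match for a $\UCQ$ (which $Q$ now is, after the previous section's reductions) is governed by purely local and path-shaped information, so non-satisfaction can be tracked by a finite-state computation along the tree. I would guess each node's $Q$-type in the automaton state and verify by the transition function that the guessed type is consistent with the node's label and the guessed types of its children, with non-satisfaction recorded by insisting that no state ever witnesses a full $q\in Q$; the type information propagated between neighbours is finite because fragments of $Q$ and their variable-assignments to the single shared element are finitely many. This is the main obstacle: getting the propagated query-summary finite and correct so that the product automaton tracks all partial matches of every fragment of $Q$ across the tree, including those that thread through many levels via the $r^*$ atoms, while keeping the state space singly exponential.

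Putting these together, the states of $\Bb$ combine a node's unary type, the local data needed to verify CIs against the parent, a finite summary of partial $Q$-matches, and a record of which $\exists r^*.B$ eventualities remain unfulfilled. The transition function $\delta$ checks all the local CI conditions, the query-summary composition, and the type membership in $\Theta$; the B\"uchi accepting condition discharges the reachability eventualities. The number of query fragments is $\|Q\|$ and each carries a partial assignment over the single shared element, so the query-summary component has size $2^{O(\|Q\|)}$; combined with the $2^{O(\|\K\|)}$ unary types and the polynomially many CI-related flags, the whole state space, and hence the automaton, can be computed in time $2^{O(\|\K\|^2\cdot\|Q\|)}$, as claimed. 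Correctness follows by checking that runs of $\Bb$ correspond exactly to tree-shaped models of $\Kk$ over $\Theta$ avoiding $Q$, using the local soundness of each check together with the B\"uchi condition for the $r^*$-eventualities.
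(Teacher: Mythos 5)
Your proposal is correct and follows essentially the same route as the paper: a B\"uchi tree automaton over the type alphabet $\Theta$ (with edge alphabet $\{r,r^-\}$ in the $\ALCIt$ case) whose KB component performs local CI checks and uses progress flags plus the acceptance condition to discharge the at-least-over-$r^*$ eventualities, while the query component propagates a $2^{O(\|Q\|)}$-size summary of fragments of $Q$ forbidden in the current subtree, yielding the product automaton in time $2^{O(\|\K\|^2\cdot\|Q\|)}$. One cosmetic remark: since $Q$ is a plain $\UCQ$ in the base case, your concern about query matches threading through $r^*$ atoms is moot---only the knowledge base's restrictions over closures need the B\"uchi mechanism.
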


This is a routine construction. For $\ALCQt$, the node alphabet is
$\Theta$ and the edge alphabet is trivial. To verify that the input
tree is a model of $\Kk$, the automaton stores the unary type of the
parent of the current node, and for each $A\in \CN(\K)$, it stores in
the state the minimum number of elements in $A$ to be found in the
current subtree, together with a binary flag indicating if progress
has been made recently in finding them. This component of the
automaton has $(4\cdot N)^{|\CN(\K)|}$ states, where $N$ is the
counting threshold in $\Kk$. Verifying that the input tree does not
satisfy $Q$ involves storing a set of subqueries of CQs from $Q$,
which are not to be satisfied in the current subtree. This component
has $2^{\|Q\|}$ states. The whole automaton is the product of the two
components. It is easy to see that the step relation can be computed
in time polynomial in the number of states.

The case of $\ALCIt$ is similar, except that the edge
alphabet is $\{r, r^-\}$ and the first component has separate
information about elements reachable and backwards reachable from
current node in the current subtree, and also outside (without
progress flags). The first component thus has $2^{7\cdot|\CN(\Kk)|}$
states, because it only needs to count up to 1.

Note that if the automaton accepts any tree, it also accepts a regular
one, and a regular tree has bounded degree. 


\section{Outlook}

This paper provides first positive results on finite entailment of
non-local queries over DLs knowledge bases. The main technical
contribution is optimal \twoexp upper bounds for finite entailment of
$\UCQt\!$s over \ALCOIt and \ALCOQt knowledge bases. To obtain these
results, we have shown intermediate reductions that are interesting in
their own, and could be applied to similar settings.

\smallskip
 There are several directions to follow for future work. A first
possibility is to vary the DL language.  One could consider
lightweight DLs from the $\mathcal{EL}$ and \textsl{DL-Lite} families
or extensions of \ALCOIt with e.g. role inclusions. For
the latter, a different approach to the one proposed here is needed
because our techniques rely on the lack of interaction between
different roles. Another option is to allow for controlled interaction
of inverses and number restrictions as e.g.\ in \ALCIQfwdt. A second
possibility is to consider more expressive non-local queries, such as
positive regular path queries. In this case new techniques seem to be
needed, e.g.\ the coloured blocking principle does not
work for PRPQs.


\section*{Acknowledgments}

This work was supported by Poland's National Science Centre
grant 2018/30/E/ST6/00042. 

\bibliography{main,references}
\bibliographystyle{kr}

\appendix
\clearpage
{\huge \bf Appendices}
\newcommand{\rootint}{\ensuremath{\Ii_0}}
\newcommand{\full}{\mathsf{full}}
\newcommand{\globsuc}{\mathsf{globsuc}}

\section{Eliminating Nominals, Trivializing ABoxes}~\label{sec:ABoxesNom}
We prove three results: for the logic $\ALCOIt$, for a restricted variant of $\ALCOQt$ logic (without at-most restrictions over transitive closures of roles), and for the full $\ALCOQt$ logic.
We start with the simplest proof, and then describe additional components needed in the subsequent proofs.

\subsection{$\ALCOIt$}
As there is no counting in the logic, for a role $r$ and a concept name $A$, we will use the standard notation
\[ \forall r. A\,, \quad \forall r^*. A\,, \quad \exists r. A\,, \quad \exists r^*. A\,, \]
instead of
\[ \qnrleq 0 r {\overline{A}}\,, \quad \qnrleq 0 {r^*} {\overline{A}}\,, \quad \qnrgeq 1 r A\,, \quad \qnrgeq 1 {r^*} A\,, \]
respectively.
Note that restrictions of the form $\qnrleq n {r^*} A$ for $n = 0$ can be eliminated using stickiness, and for $r$ being an inverse role they are not present in the normal forms that we assume throughout the paper.
In this section we will use them for the sake of brevity, highlighting the places where it might create confusion.

Let $\Kk = (\Tt, \Aa)$ be an $\ALCOIt$ knowledge base, $Q \in \UCQt$ with $\CQt$s of size at most $m$.
One can decide if $\Kk \fentails Q$ in time
$\poly(\|\Kk\|, (2m)^{|Q|\cdot\|\Kk\|^{O(m)}})$
using oracle calls to finite entailment over $\ALCIt$ KBs of size at most
$\poly(\|\Kk\|)$,
with trivial ABoxes and unions of at most
$|Q| \cdot \|\Kk\|^{O(m)}$
$\CQt$s of size $O(m)$.

We present the result as an oracle reduction; however, it is worth noting that it can be seen as a \emph{truth-table reduction}~\cite{post1944,LADNER1975103}; that is, the calls to the oracle are not adaptive.

\subsubsection{Modifying the KB}
\begin{itemize}
 \item Replace each transitive role atom $r^*(a, b)$ in the ABox with a CI in the TBox: $\{a\} \sqsubseteq \exists r^*. \{b\}$.
 \item For each nominal $\{a\}$ and role $r$ introduce fresh concept names $A_a \equiv \{a\}$, $A_{r,a} \equiv \exists r. \{a\}$ and $A_{r^*, a} \equiv \exists r^*. \{a\}$.
 \item Normalize the knowledge base. We will denote all concept names present in the KB after the normalization as \emph{original} concept names.
 \item Introduce \emph{auxiliary} concept names: for each original concept name $A$ and each role $r$ introduce concept names $C_{\exists r. A}$ and $C_{\exists r^*. A}$, with the following axiomatization:
       \[ C_{\exists r. A} \sqsubseteq \exists r. A\,,\quad \overline{C_{\exists r. A}} \sqsubseteq \forall r. \overline{A}\,, \]
       \[ C_{\exists r^*. A} \sqsubseteq \exists r^*. A\,,\quad \overline{C_{\exists r^*. A}} \sqsubseteq \forall r^*. \overline{A}\,. \]
       To preserve the normal form, the last CI is split into two:
       $\overline{C_{\exists r^*. A}} \sqsubseteq \overline{A}$ and
       $\overline{C_{\exists r^*. A}} \sqsubseteq \forall r. \overline{C_{\exists r^*. A}}$.
\end{itemize}
From now on, let $\Kk$ denote the KB after above modifications.

\subsubsection{Unravelling}
\begin{definition}
Let $\Ii$ be a model of $\Kk$ and $x \in \Delta^\Ii$.
A \emph{witnessing set} for $x$ is a set $S$ of pairs $(y, r)$, where $y \in \Delta^\Ii$ and $r$ is a role, such that:
\begin{itemize}
 \item for every $(y, r) \in S$, $y$ is a direct $r$-successor of $x$;
 \item for each CI of the form $A \sqsubseteq \exists r. B$ such that $A \in \tp^{\Ii}(x)$, there exists $(y, r) \in S$ such that $y$ is in concept $B$;
 \item for each CI of the form $A \sqsubseteq \exists r^*. B$ such that $A \in \tp^{\Ii}(x)$, either $B \in \tp^{\Ii}(x)$ or there exists $(y, r)\in S$ such that $y$ is in concept $C_{\exists r^*. B}$.
\end{itemize}
\end{definition}
Observe that in any minimal witnessing set there are at most as many elements as there are CIs in the TBox.
We will call an element $y$ a \emph{required witness} for $x$ if $y$ is present in every witnessing set for $x$.

\begin{lemma}
If $\Kk \not\fentails Q$, there exists a counter model $\Imc$ which can be decomposed into domain-disjoint intepretations $\Ii\upharpoonright \Ind(\Kk)$ and $\Ii_1, \dots, \Ii_n$ for some $n \leq \|\Kk\| \cdot |\Ind(\Kk)|$ with two kinds of additional edges:
arbitrary edges connecting elements from $\bigcup_i\Delta^{\Ii_i}$ with elements from $\Nom(\Kk)$, and a single edge connecting a distinguished element $d_i\in\Delta^{\Ii_i}$ with a corresponding element $e_i \in \Ind(\Kk)$ for each $i\leq n$.
Moreover, one can assume that:
\begin{itemize}
 \item if $e_i$ is not a nominal, $e_i$ is not a required witness for $d_i$,
 \item for any nominal $a$, 
 $\{(b, r): b \in \Ind(\Kk), (a, b) \in r^\Ii\} \cup \{(d_i, r_i): e_i = a\}$ is a witnessing set for $a$.
\end{itemize}
\end{lemma}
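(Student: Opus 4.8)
The plan is to prove the statement by a \emph{one-step unravelling} of a finite counter-model: keep the named part untouched and detach the anonymous part into finitely many self-contained components, each hanging off a single individual. I would start from a finite counter-model $\Ii$ of $\Kk$ (one exists because $\Kk\notfentails Q$) and fix, for every element $x$, a \emph{minimal} witnessing set $S_x$; since every $S_x$ has at most as many elements as there are CIs in $\Tt$, hence at most $\|\Kk\|$, this is what will control the number of components. The named part of the constructed model $\Imc$ is taken to be $\Ii\upharpoonright\Ind(\Kk)$ verbatim. Then, for each $a\in\Ind(\Kk)$ and each \emph{anonymous} witness $d\in S_a$, I would spawn a fresh component $\Ii_i$: take a fresh, type-preserving copy of the anonymous region of $\Ii$ generated by $d$, cutting every edge to a nominal (which is retained as one of the allowed ``arbitrary nominal edges''), de-identifying all non-nominal individuals met along the way, and realizing the single connection to the actual $e_i:=a$ as the one attachment edge incident with the root $d_i$ (the copy of $d$). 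Copying each original at most once per component keeps every component finite, pairwise domain-disjoint, and disjoint from $\Ind(\Kk)$, so $\Imc$ is finite.

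The heart of the argument is that $\Imc\models\Kk$, and here the key device is the map $h$ sending each copy to its original and fixing every individual and nominal, which is a \emph{type-preserving} homomorphism $h\colon\Imc\to\Ii$. Universal requirements ($\forall r.A$, $\forall r^-.A$, and their transitive variants $\forall r^*.A$) transfer from $\Ii$ back to $\Imc$ almost for free: any ($r^*$-)successor in $\Imc$ is mapped by $h$ to an ($r^*$-)successor in $\Ii$, which lies in $A$ because $\Ii\models\Kk$, and type preservation pulls membership back to $\Imc$. Existential requirements are handled by design, since each element retains a full witnessing set: named individuals recover their anonymous witnesses through the attachment edges and their nominal witnesses through the kept nominal edges, while every component element keeps its witnesses internally, with non-nominal individual witnesses replaced by equally-typed copies and with the entire $r^*$-witness chains for $\exists r^*.B$-requirements copied along.

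Since $h\colon\Imc\to\Ii$ is a homomorphism and $\Ii\notmodels Q$, every match in $\Imc$ would compose with $h$ to a match in $\Ii$, so $\Imc\notmodels Q$ and $\Imc$ is a counter-model. Condition~2 is then immediate: a nominal $a$ keeps exactly its named-individual edges and receives one component per anonymous witness in $S_a$, so these together form a witnessing set for $a$. For condition~1 I would, when building a component, additionally route $d_i$'s \emph{own} existential witnesses to internal de-identified copies rather than to the actual $e_i$; this gives $d_i$ a witnessing set avoiding the non-nominal $e_i$, making $e_i$ non-required. The count is $n\le\sum_{a\in\Ind(\Kk)}|S_a|\le\|\Kk\|\cdot|\Ind(\Kk)|$. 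I expect the main obstacle to be the interaction of transitive closure and inverse roles with the cutting-and-copying: one must verify that de-identifying non-nominal individuals and retaining only a single edge to $e_i$ neither destroys a witness for some $\exists r^*.B$ nor introduces an $r^*$-reachability that violates a $\forall r^*.A$ constraint. The type-preserving homomorphism to $\Ii$ is precisely the tool that tames the latter danger uniformly.
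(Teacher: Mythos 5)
Your proposal is correct and follows essentially the same route as the paper's own proof: a one-step unravelling from the individuals using minimal witnessing sets (whose size, bounded by the number of CIs, gives $n \leq \|\Kk\|\cdot|\Ind(\Kk)|$), fresh components attached to individuals by a single edge with all nominal edges redirected to the actual nominals, and the copy-to-original homomorphism $h:\Imc\to\Jj$ simultaneously yielding $\Imc\notmodels Q$ and the type preservation needed for universal and transitive constraints. The only deviation is cosmetic: the paper simply attaches a fresh copy of the \emph{whole} counter-model for each witness pair (named witnesses included), which makes modelhood and both ``moreover'' conditions immediate, whereas your witness-closed generated sub-region achieves the same thing at the cost of exactly the chain-copying care for $\exists r^*.B$-requirements that you flag.
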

\begin{proof}
This can be achieved by routine one-step unravelling from the individuals.
Let $\Jj$ be any counter-model.
$\Ii$ is constructed as follows.
Take a copy of $\Jj \upharpoonright \Ind(\Kk)$.
For every individual $x \in \Ind(\Kk)$, take any minimal witnessing set $S$ for $x$ in $\Jj$.
For each $(y, r) \in S$, add a fresh copy of the whole interpretation $\Jj$ to $\Ii$, identifying the nominals, and add an $r$-edge from $x$ to the newly created copy of $y$.
As each element and each edge in $\Ii$ was created as a copy of some element or edge in $\Jj$, there is a homomorphism from $\Ii$ to $\Jj$ witnessing that $\Ii \not\models Q$.
\end{proof}

\subsubsection{Constructing KBs for $\Ii_i$.}
We search for a counter-model admitting the described decomposition.
We will refer to ${\Ii\upharpoonright \Ind(\Kk)}$ as $\rootint$.
Iterate through all possible interpretations $\rootint$ \comment{($4^{|\Ind(\Kk)|^2\cdot|\Rol(\Kk)|}\cdot 2^{|\Ind(\Kk)|\cdot|\CN(\Kk)|}$ options)} and perform the rest of the procedure for a fixed $\rootint$; a counter-model exists iff it exists for some of these choices.

We will construct KBs representing abstractly $\Ii_i$ for $1 \leq i \leq n$; these KBs will have some additional concept names: $D_{\exists r. A} \equiv \exists r. A$ and $D_{\exists r^*. A} \equiv \exists r^*. A$, for each role $r$ and original concept name $A$.
Their meaning is similar to the auxiliary concept names, but they are supposed to represent having appropriate (direct) $r$-successor \emph{inside one interpretation $\Ii_i$}, while the auxiliary concept names speak about the whole interpretation $\Ii$.
We will call them \emph{local auxiliary concept names}.
For any choice of a unary type $\tau$ containing both concept names from $\Kk$ and the local auxiliary concept names (or their complements), let $\Kk_\tau$ be the KB $\Kk$ with the following modifications:
\begin{itemize}
 \item the ABox is $\left\{A(b) \bigm | A \in \tau\right\}$ for a fresh individual $b$;
 \item remove the axiomatization of concept names $A_{r, a}$;
 \item replace the axiomatization of the auxiliary concept names:
       \[ C_{\exists r. A} \equiv \exists r. A \sqcup \bigsqcup_{\substack{a \in \Nomi(\Kk)\\A(a) \in \Aa}} A_{r, a}\,, \]
       \[ C_{\exists r^*. A} \equiv \exists r^*. A \sqcup \bigsqcup_{\substack{a \in \Nomi(\Kk)\\ C_{\exists r^*. A}(a) \in \Aa}} \exists r^*. A_{r, a}\,, \]
 \item for each nominal $\{a\}$, each $A$ such that $A(a) \in \Aa$ and each role $r$:
 \begin{itemize}
  \item for each CI of the form $A \sqsubseteq \forall r. B$ add a new CI $A_{r^-, a} \sqsubseteq B$ to the TBox;
  \item for each CI of the form $A \sqsubseteq \forall r^*. B$ add a new CI $A_{r^-, a} \sqsubseteq \forall r^*. B$ to the TBox;
 \end{itemize}
 \item remove all CIs with a nominal on the left-hand side;
 \item replace CIs by the following rules, for any original concept names $A$, $B$ and any role $r$ (by the previous point, $A$ is not a nominal):
 \begin{itemize}
  \item $A \sqsubseteq \exists r. B \longrightarrow A \sqsubseteq C_{\exists r. B}$;
  \item $A \sqsubseteq \forall r. B \longrightarrow A \sqsubseteq \overline{C_{\exists r. \overline{B}}}$;
  \item $A \sqsubseteq \exists r^*. B \longrightarrow A \sqsubseteq C_{\exists r^*. B}$;
  \item $A \sqsubseteq \forall r^*. B \longrightarrow A \sqsubseteq \overline{C_{\exists r^*. \overline{B}}}$;
 \end{itemize}
 \item for any CI of the form $\bigsqcap_i A_i \sqsubseteq \bigsqcup_j B_j$, remove all nominals from the right-hand side (i.e. if $B_j$ is a nominal, remove $B_j$ from the disjunction);
 \item add the axiomatization of concept names $D_{\exists r. A}$ and $D_{\exists r^*. A}$.
\end{itemize}

Iterate through all possible choices of a number ${n \leq \|\Kk\| \cdot |\Ind(\Kk)|}$, and sequences of: unary types $\tau_i$ (containing concept names from $\Kk$ and the local auxiliary concept names), individuals $e_i$ and roles $r_i$ for $i = 1, \dots, n$.
\comment{($\|\Kk\|\cdot |\Ind(\Kk)| \cdot 4^{|\CN(\Kk)|\cdot \|\Kk\| \cdot |\Ind(\Kk)|} \cdot (|\Ind(\Kk)| \cdot 2|\Rol(\Kk)|)^{\|\Kk\| \cdot |\Ind(\Kk)|}$ options)}
We will consider an interpretation obtained by taking a union of $\rootint$ and all $\Ii_i$ for $i = 1, \dots, n$, where $\Ii_i$ is any model of $\Kk_{\tau_i}$, and adding two kinds of additional edges: between $e_i$ and $d_i$ (the only individual of $\Kk_{\tau_i}$) by role $r_i$ and from all elements in concept $A_{r, a}$ to $a$ by role $r$.
Perform the rest of the procedure for a fixed choice of $n$ and all $\tau_i$, $e_i$ and $r_i$ for $i = 1, \dots, n$; there is a counter-model iff there is a counter-model consistent with one of such choices.

We will see that the following procedure verifying that the resulting interpretation is a model of $\Kk$ does not depend on the choices of models $\Ii_i$.
Regardless of the exact shape of $\Ii_i$, the local auxiliary concept names $D_{\exists r^*. A_{r^-, a}}$ present in $\tau_i$ determine the set of nominals $r$-reachable from $d_i$ through $\Ii_i$.
Knowing also $\Ii_0$, we can calculate for each individual the set of $r$-reachable nominals and verify that it corresponds to the auxiliary concept names $C_{\exists r^*. A_{a}}$ present in the unary type of this individual.
This reachability information is the crucial part that needs to be verified; the rest is a number of straightforward conditions. Namely, one needs to check that:
\begin{itemize}
 \item for each individual, each auxiliary concept name $C_{\exists r^*. A}$ present in its unary type is witnessed, either by an $r$-reachable element of $\Ii_0$ in concept $A$ or by the local auxiliary concept name $D_{\exists r^*. A}$ present in some $\tau_i$ for an $r$-reachable $d_i$;
 \item for each individual, verify that it belongs to each auxiliary concept of the form $C_{\exists r. A}$ and $\overline{C_{\exists r. A}}$ present in its unary type; each of its direct successors is either inside $\Ii_0$ or is some $d_i$ (with unary types $\tau_i$) for $1 \leq i \leq n$, so this can be done by simply checking direct successors one by one;
 \item CIs of the form $\bigsqcap_i A_i \sqsubseteq \bigsqcup_j B_j$ are satisfied in all elements of $\Ii_0$;
 \item no universal CIs are violated in $\Ii_0$ and by the edges between $e_i$ and $d_i$; in particular, for any element being subject to restriction $\forall {r^*}. A$ (i.e. in concept $\overline{C_{\exists {r^*}. \overline{A}}}$), all its direct $r$-successors must be subject to the same restriction (stickiness).
\end{itemize}

Every counter-model admitting the described decomposition can be constructed this way.
If the verification procedure fails, there is no counter-model consistent with the choices made.
If it succeeds, the only thing left to do is to check if there are models of $\Kk_{\tau_i}$ for which $Q$ does not have a match in the resulting interpretation $\Ii$.

\subsubsection{Adjusting the query.}
We want to reduce reasoning about matches of $Q$ in the whole interpretation $\Ii$ to reasoning about matches of some query in the interpretations $\Ii_i$.
To achieve this, for each $q \in Q$ we will construct a $\UCQt$ $Q'$ mentioning $d_i$ for $i = 1, \dots, n$ as individuals such that, for any interpretation $\Ii$ admitting the described decomposition, $\Ii \models q$ iff there is some $q' \in Q'$ such that each connected component of $q'$ has a match in some $\Ii_i$ for some $i = 1, \dots, n$.
Each disjunct $q' \in Q'$ will correspond to some way in which a match of $q$ can be distributed among interpretations $\Ii_i$ for $i = 0, 1, \dots, n$.

Existence of an $r$-path between any two individuals $a, b$ can be deduced from $\rootint$ alone: any $r$-path from $a$ to $b$ is either contained entirely in $\rootint$ (which can be easily verified), or it passes through at least one nominal, so its existence can be checked by comparing the sets of nominals $r$-reachable from $a$ and $r^-$-reachable from $b$ (these sets in turn can be deduced from the unary types of $a$ and $b$).

\begin{definition}
Let $\Imc$ be an interpretation and $q \in \CQt$.
An \emph{enhanced match} for $q$ in $\Imc$ is a match $\eta:\var(q)\to\Delta^\Imc$ along with \emph{witnessing paths}: for each transitive atom $r^*(x, y)$ in $q$ there is one chosen $r$-path in $\Imc$ from $\eta(x)$ to $\eta(y)$.
\end{definition}

The crucial observation is that any path witnessing a transitive atom $r^*(x, y)$ is either contained entirely in one of the interpretations $\Ii_i$ for $0 \leq i \leq n$, or can be split into three parts: $r^*(x, a) \land r^*(a, b) \land r^*(b, y)$, where $a$ and $b$ are individuals and the paths witnessing the first and the third atom do not pass through individuals (i.e. each of them is contained inside one interpretation $\Ii_i$ for $1 \leq i \leq n$).
Thus, a $\UCQt$ $Q'$ capturing all possible distributions of an enhanced match of some $q \in Q$ is a disjunction of all possible results of the following procedure:
\begin{itemize}
 \item for each transitive atom $r^*(x, y)$ in $q$, either leave it unmodified or choose two individuals $a, b \in \Ind(\Kk)$ and replace it with $r^*(x, a) \land r^*(a, b) \land r^*(b, y)$;
 \item choose a subset of variables, a mapping from this subset to $\Ind(\Kk)$ and replace the chosen variables with individuals in the whole query (we assume that no other variable will be mapped to an individual);
 \item evaluate all atoms involving only individuals (this can be done, as $\Ii_0$ is already fixed, and the reachability between the individuals is known);
 \item replace atoms involving one individual from $\Ind(\Kk)$ with equivalent ones mentioning either some $d_i$ or no individual at all; this requires considering several simple cases, which we will omit here, such as: for each atom $r(x, a)$ where $a$ is an individual, choose to replace it either with $A_{r, a}(x)$ (only if $a$ is a nominal) or with $x = d_i$ for some $i$ such that $e_i = a$.
\end{itemize}

Thus, if $m$ is the number of transitive atoms in $q$, we can replace $q$ with a disjunction of $\CQt$s $q_1, \dots, q_k$ for $k \leq (|\Ind(\Kk)|^2+1)^m \cdot (|\Ind(\Kk)|+n+1)^{|\var(q)|}$, and assume that each connected component of each $q_j$ is contained entirely inside one interpretation $\Ii_i$ for $1 \leq i \leq n$.

Let $Q'$ be the $\UCQt$ being a disjunction of all resulting $\CQt$s for all $q \in Q$.


Now we just need a standard reduction of finite entailment to the case where one assumes that each $\CQt$ is connected.
In any counter-model, for each of $q \in Q'$, for at least one connected component of $q$ there must be no match (contained entirely in $\Ii_i$ for some $1 \leq i \leq n$).
For each possible choices of one connected component from each $q \in Q'$, check the entailment of their disjunction in each $\Kk_{\tau_i}$ for $1 \leq i \leq n$.
There is a counter-model for $Q$ and $\Kk$ iff for some choice of the connected components there are counter-models for the resulting $\UCQt$ and each $\Kk_{\tau_i}$ for $1 \leq i \leq n$; there are at most $(2m)^{k\cdot |Q|}$ such choices, where $k$ is the bound on the number of $\CQt$s $q_1, \dots, q_k$ above.

\subsection{$\ALCOQt$ without at-most restrictions over transitive closures}
We present only the modifications needed to be done in the proof for $\ALCOIt$.

Let $\Kk = (\Tt, \Aa)$ be an $\ALCOQt$ knowledge base without at-most restrictions over transitive closures of roles, with counting threshold $N$, and $Q \in \UCQt$ with $\CQt$s of size at most $m$.

%
One can decide if $\Kk \fentails Q$ in time
$\poly(\|\Kk\|, (2m)^{|Q|\cdot (N \cdot \|\Kk\|)^{O(m)}})$
using oracle calls to finite entailment modulo types over $\ALCQt$ KBs of size at most
$\poly(N, \|\Kk\|)$,
with trivial ABoxes and unions of at most
$|Q| \cdot (N \cdot \|\Kk\|)^{O(m)}$ 
$\CQt$s of size
$O(m)$.

\subsubsection{Auxiliary concepts.}
Introduce new auxiliary concept names $C_{\qnrleq k r A}$, $C_{\qnrgeq k r A}$ and $C_{\qnrgeq k {r^*} A}$ for each $k < N$, role name $r$ and original concept name $A$; they are supposed to denote that, if an element $x$ is in such a concept, it belongs to the concept in the subscript (in the whole interpretation $\Ii$).
That is, their axiomatization is simply $C_{\qnrleq k r A} \sqsubseteq {\qnrleq k r A}$, $C_{\qnrgeq k r A} \sqsubseteq {\qnrgeq k r A}$, $C_{\qnrgeq k {r^*} A} \sqsubseteq {\qnrgeq k {r^*} A}$.

\subsubsection{Unravelling.}
The unravelling procedure lifts easily to the case with counting, without inverses.
Assume that the auxiliary concept names have the intended meaning in the original counter-model; for example, an element is in concept $C_{\qnrgeq k {r^*} A}$ iff it has at least $k$ $r$-successors in concept $A$.
The definition of a witnessing set for $x$ is modified to be $S$ such that:
\begin{itemize}
 \item for every $(y, r) \in S$, $y$ is a direct $r$-successor of $x$;
 \item for each CI of the form $A \sqsubseteq \qnrgeq k r B$ such that $A \in \tp^{\Ii}(x)$, there exist at least $k$ distinct $(y, r) \in S$ such that $y$ is in concept $B$;
 \item for each CI of the form $A \sqsubseteq \qnrgeq k {r^*} B$ such that $A \in \tp^{\Ii}(x)$:
 \begin{itemize}
  \item if $B \in \tp^{\Ii}(x)$, at least $k-1$ distinct $r$-successors of $x$ in concept $B$ are $r$-reachable from the elements $\{y: (y, r) \in S\}$,
  \item if $B \not\in \tp^{\Ii}(x)$, at least $k$ distinct $r$-successors of $x$ in concept $B$ are $r$-reachable from the elements ${\{y: (y, r) \in S\}}$.
 \end{itemize}
\end{itemize}
\comment{(any minimal witnessing set has size at most $N \cdot \|\Kk\|$)}

Let $\Jj$ be any counter-model.
A counter-model $\Ii$ admitting the desired decomposition is constructed as follows.
Take a copy of $\Jj \upharpoonright \Ind(\Kk)$.
For every individual $x \in \Ind(\Kk)$, take any minimal witnessing set $S$ for $x$ in $\Jj$.
For each $(y, r) \in S$, add a fresh copy of the whole interpretation $\Jj$ to $\Ii$, identifying the nominals and \emph{removing all outgoing edges from the nominals} and add an $r$-edge from $x$ to the newly created copy of $y$.
As each element and each edge in $\Ii$ was created as a copy of some element or edge in $\Jj$, there is a homomorphism from $\Ii$ to $\Jj$ witnessing that $\Ii \not\models Q$.

To argue that $\Ii \models \Kk$, we will show that the satisfaction of all the required restrictions is preserved.
Recall that there are no inverses and no at-most restrictions over transitive closures.
For any element $x$ in $\Ii$ created as a copy of some $x'$ in $\Jj$, and for any role name $r$, all direct $r$-successors of $x$ were created as copies of distinct direct $r$-successors of $x'$, which preserves all at-most restrictions.
All required restrictions of the form $\qnrgeq k r A$ are clearly satisfied in $\Ii$, as the copies of appropriate elements and edges are added explicitly during the procedure.
As for restrictions of the form $\qnrgeq k {r^*} A$, consider an individual that has $m$ $r$-reachable elements in concept $A$ in $\Jj$:
\begin{itemize}
 \item if $m < N$, at least one copy of each of them is $r$-reachable in $\Ii$ from this individual (recall the assumption that the concepts $C_{\qnrgeq k {r^*} A}$ have the intended meaning in $\Jj$ and the definition of the witnessing set);
 \item if $m \geq N$, then copies of at least $N-1$ of them are reachable (by the argument above), which is enough to satisfy any counting restriction.
\end{itemize}
Therefore, for any element $x$ in $\Ii$ created as a copy of $x'$ from $\Jj$, any role name $r$ and any concept name $A$, it can be easily seen that either:
\begin{itemize}
 \item there is a nominal in concept $C_{\qnrgeq {N-1} {r^*} A}$ $r$-reachable from $x$;
 \item at least one copy of every element in concept $A$ $r$-reachable from $x'$ (in $\Jj$) is $r$-reachable from $x$ (in $\Ii$).
\end{itemize}
This is enough for all required restrictions to be satisfied in $\Ii$.

Note that, as there are no inverse roles, the edges between $e_i$ and $d_i$ are always directed towards $d_i$.


\subsubsection{Local auxiliary concepts.}
For $\ALCOIt$, when describing KBs $\Kk_{\tau_i}$, we defined local auxiliary concepts $D_{\exists r. A}$ and $D_{\exists r^*. A}$.
Here we will define analogous concepts $D_{\qnrleq k r A}$, $D_{\qnrgeq k r A}$ and $D_{\qnrgeq k {r^*} A}$, although this time we additionally need to be prepared to distinguish elements $r_i$-reachable from $d_i$ to avoid counting them twice later in the process.

For each role name $r$ we introduce a concept name $R_r$; if an element $x$ in $\Ii_i$ is $r$-reachable from $d_i$, it is supposed to be in this concept.
The axiomatization is simple: $R_r(d_i)$ is present in the ABox, and $R_r \sqsubseteq \forall r. R_r$ is present in the TBox.
Then, for each original concept name $A$, introduce a concept name $C_{A\sqcap \overline{R_r}} \equiv A \sqcap \overline{R_r}$.

Introduce local auxiliary concept names $D_{\qnrleq k r A}$, $D_{\qnrgeq k r A}$ and $D_{\qnrgeq k {r^*} A}$ for each $k < N$, role name $r$ and $A$ being either an original concept name or $C_{B\sqcap \overline{R_r}}$ for an original concept name $B$; if an element $x \in \Delta^{\Ii_i}$ is in such concept, it belongs to the concept in the subscript in $\Ii_i$.

When constructing KBs $\Kk_\tau$, rewrite CIs to use the auxiliary concept names; that is, replace:
\begin{itemize}
 \item $A \sqsubseteq \qnrleq k r A$ with $A \sqsubseteq C_{\qnrleq k r A}$;
 \item $A \sqsubseteq \qnrgeq k r A$ with $A \sqsubseteq C_{\qnrgeq k r A}$;
 \item $A \sqsubseteq \qnrgeq k {r^*} A$ with $A \sqsubseteq C_{\qnrgeq k {r^*} A}$.
\end{itemize}
Also, remove the axiomatization of the auxiliary concepts altogether. 

\subsubsection{Verifying being a model of $\Kk$.}
For each choice of $\Ii_0$, $n$, $e_i$, $r_i$, $\tau_i$ for $i = 1, \dots, n$, we need to verify whether there exists a counter-model admitting a decomposition consistent with their values.
For $\ALCOIt$, to perform this verification, we solved multiple instances of finite entailment problem for $\Kk_{\tau_i}$ for each $i = 1, \dots, n$ and the modified query $Q'$.
Now we will need to solve instances of finite entailment modulo types; we will define sets $\Theta_i$ (which will depend on $\Ii_0$, $n$, $e_i$, $r_i$ and $\tau_i$) of unary types allowed for $\Kk_{\tau_i}$ to make sure that whenever an element of a model $\Ii_i$ of $\Kk_{\tau_i}$ is in concept $C_{\qnrgeq k {r^*} A}$, it is also in concept $\qnrgeq k {R^*} A$ in the whole interpretation $\Ii$.

For a given element $x$ of some $\Ii_i$, we will be talking about \emph{local successors}: successors reachable from $x$ without leaving the model $\Ii_i$ (without passing through any nominal) and \emph{global successors}: ones that are reachable by paths passing through some nominal.
Note that the sets of local successors and global successors are not necessarily disjoint.

For a nominal $a$, define $\globsuc_{a,r,A}$ to be the set of $r$-successors of $a$ in concept $A$.
Note that, even though this exact set depends on choices of models $\Ii_i$, the local auxiliary concepts $D_{\qnrgeq k {r^*} A}$ present in $\tau_i$ for $i = 1, \dots, n$ along with $\Ii_0$ give a lower bound on its cardinality.
Moreover, assuming that they have the intended meaning in the interpretations $\Ii_i$, one can deduce from them (and from $\Ii_0$) the exact cardinality of $\globsuc_{a,r,A}$ up to $N$.
For a set of nominals $S$, let $\globsuc_{S,r,A} = \bigcup_{a \in S} \globsuc_{a,r,A}$.

For each $i = 1, \dots, n$ define the set $\Theta_i$ of allowed unary types $\tau$ such that, for every $k < N$, role name $r$ and original concept name $A$:
\begin{itemize}
 \item if $C_{\qnrgeq k r A} \in \tau$, then $\max\{\ell: D_{\qnrgeq \ell r A} \in \tau\} + |\{a \in \Nom(\Kk): A_{r,a} \in \tau \land A(a) \in \Aa\}| \geq k$;
 \item if $C_{\qnrleq k r A} \in \tau$, then $\min\{\ell: D_{\qnrleq \ell r A} \in \tau\} + |\{a \in \Nom(\Kk): A_{r,a} \in \tau \land A(a) \in \Aa\}| \leq k$;
 \item if $C_{\qnrgeq k {r^*} A} \in \tau$, then:
 \begin{itemize}
  \item if $d_i$ is not $r$-reachable from an element of type $\tau$ through any nominal (this depends only on $\Ii_0$, $r_i$ and the set of nominals $r$-reachable from an element of type $\tau$), then the sets of local and global $r$-successors in concept $A$ are disjoint, so the condition is:
  $\max\{\ell: D_{\qnrgeq \ell {r^*} A} \in \tau\} + |\globsuc_S| \geq k$, where $S = \{a \in \Nom(\Kk): C_{\qnrgeq 1 {r^*} {A_a}} \in \tau\}$;
  \item if $d_i$ is $r$-reachable from an element of type $\tau$ through any nominal, then these sets are intersecting, but we can avoid double counting thanks to the concept names $C_{A \sqcap \overline{R_r}}$ introduced in $\Kk_{\tau_i}$:
  $\max\{\ell: D_{\qnrgeq \ell {r^*} {C_{A \sqcap \overline{R_r}}}} \in \tau\} + |\globsuc_S| \geq k$, where $S = \{a \in \Nom(\Kk): C_{\qnrgeq 1 {r^*} {A_{r,a}}} \in \tau\}$.
 \end{itemize}
\end{itemize}
We cannot calculate the exact cardinality of $|\globsuc_S|$ used above, but we can use the described lower bound instead.
If the local auxiliary concept names have the intended interpretations in all $\Ii_i$, the conditions will be correctly verified.
If not, they will be lower bounds, so $\Theta_i$ might contain less unary types, which only reduces the space of considered models and does not affect the resulting model of $\Kk$; notice that the local auxiliary concept names are introduced by the procedure and are not present in the original KB $\Kk$.

As in the $\ALCOIt$ case, calculate for each individual the set of $r$-reachable nominals for each role name $r$.
For each element of $\Ii_0$, all its direct successors are either inside $\Ii_0$ or are $d_i$ (with unary types $\tau_i$), so verifying whether an element of $\Ii_0$ is in a concept of the form $\qnrleq k r A$ or $\qnrgeq k r A$ can be done by simply counting the appropriate successors.
Concepts of the form $\qnrgeq k {r^*} A$ can be easily verified using the information about the nominals reachable from individuals and the local auxiliary concept names.

\subsubsection{Adjusting the query.}
The only change with respect to the $\ALCOIt$ case regarding the query is the method of calculating whether a path between two individuals exist.
Before we used the sets of nominals $r$-reachable and $r^-$-reachable from the individuals; now we need to use the fact that there are no inverses and the edges between $e_i$ and $d_i$ are always directed towards $d_i$.
Therefore, any (forward) path between individuals is either contained entirely in $\Ii_0$, or it passes through some nominal and can be decomposed into two parts: the beginning up to the last nominal on the path and the rest (possibly empty), contained entirely in $\Ii_0$.
Existence of such parts is easy to verify; the first part relies on the set of nominals $r$-reachable from an individual, which can be deduced from the auxiliary concept names of the form $C_{\qnrgeq 1 {r^*} A_a}$.

\subsection{$\ALCOQt$}
Let $\Kk = (\Tt, \Aa)$ be an $\ALCOQt$ knowledge base with counting threshold $N$, and $Q \in \UCQt$ with $\CQt$s of size at most $m$.

One can decide if $\Kk \fentails Q$ in time
$\poly(\|\Kk\|, (2m)^{|Q|\cdot (\|\Kk\| \cdot N^{|\RCN(\Kk)|})^{O(m)}})$
using oracle calls to finite entailment modulo types over $\ALCQt$ KBs of size at most
$\poly(\|\Kk\|, N^{1+|\RCN(\Kk)|})$,
with trivial ABoxes and unions of at most
$|Q| \cdot (\|\Kk\| \cdot N^{|\RCN(\Kk)|})^{O(m)}$ 
$\CQt$s of size
$O(m)$.

Without loss of generality, assume that $\Kk$ is sticky.

In this case, $\rootint$ becomes $\Ii$ restricted not just to the individuals, but to individuals and all their relevant successors.
For this reason, at the very beginning of the reduction, we do the following.
Iterate through all possible sizes of the set of relevant successors of all individuals (by Fact~\ref{fact:relevant}, it is of size at most $|\Ind(\Kk)|\cdot |\Rol(\Kk)|\cdot N^{|\RCN(\Kk)|}$).
As usual, there is a counter-model iff there is one consistent with one of these choices.
For a fixed choice $t$ of this number, add $t$ fresh individuals to the ABox.
For each role name $r$ and each pair of individuals $(a, b)$, choose whether there exists an $r$-path from $a$ to $b$; if so, add a CI $\{a\} \sqsubseteq \exists r^*. \{b\}$ to the TBox.
Note that this might introduce new nominals, which are treated as such throughout the whole procedure; in particular, appropriate concept names of the form $A_{r, a}$ are introduced.

The construction extends the one from the case without at-most restrictions over transitive closures; below we describe only the newly introduced concepts and verification steps.

\subsubsection{Auxiliary concepts.}
Add auxiliary concept names $C_{\qnrleq k {r^*} A}$ for all $k < N$, role names $r$ and relevant concept names $A$.
Additionally, for each role name $r$ and relevant concept name $A$, choose a subset $\full_{r, A} \subseteq \Delta^{\rootint}$; it is supposed to denote the set of elements having at least $N$ $r$-successors in concept $A$ (one could express it using the concept $C_{\qnrgeq N {r^*} A}$, but we want to avoid increasing the counting threshold in the produced KBs).

\subsubsection{Unravelling.}
Unravelling works the same as in the case of $\ALCOQt$ without at-most restrictions over transitive closures.
Correctness follows easily from the fact that all relevant successors of individuals are inside $\Ii_0$ and that the at-most restrictions are sticky, therefore no additional interaction between the interpretations $\Ii_i$ is introduced.

\subsubsection{Local auxiliary concepts.}
Introduce concept names $D_{\qnrleq k {r^*} A}$ for $k < N$, any role name $r$ and any relevant concept name $A$.

\subsubsection{Verifying being a model of $\Kk$.}
By the assumption that $\rootint$ is closed under taking relevant successors, if any element of $\rootint$ is in the interpretation of a concept name $C_{\qnrleq k {r^*} A}$ for $k > 0$, all appropriate witnesses (i.e. its $r$-successors in concept $A$) need to also be in $\rootint$.
For any fixed role name $r$ and relevant concept name $A$, we verify that if an element of $\rootint$ is not in such concept for any $k$, then it is in the set $\full_{r, A}$.

For all $r$-edges from $e_i$ to $d_i$, if $e_i$ has an at-most restriction over transitive closure of $r$, $d_i$ needs to have the same restriction with number $0$.

 We will again construct the sets $\Theta_i$ of unary types allowed in models of $\Kk_{\tau_i}$: the allowed types $\tau$ need to satisfy all the conditions listed in the case without at-most restrictions over transitive closures. Additionally, if $C_{\qnrleq k {r^*} A} \in \tau$, then: if any nominal \emph{without} the same type of restriction ($C_{\qnrleq \ell {r^*} A}$ for any $\ell$) is $r$-reachable, the restriction is violated (this nominal is in $\full_{r, A}$ and has at least $N$ $r$-successors in concept $A$), so then $\tau$ is forbidden; otherwise, all reachable nominals have the same type of restriction, so all relevant successors reachable through nominals are also nominals and can be counted explicitly.
The upper bound on the number of local witnesses is $\min\{k: D_{\qnrleq k {r^*} A} \in \tau\}$, and, if the local auxiliary concept names have the intended meaning, these numbers are equal.

\clearpage
\section{Eliminating multiple roles}~\label{sec:app:mulroles}

\subsection{Unravelling}
Let $\Imc$ be a finite counter-model.
Let $\Imc_r$ for $r \in \Rol(\Kk)$ be a projection of $\Imc$ to the signature containing only one role name $r$.

We construct a tree-like model $\Jmc$.
We will maintain a homomorphism $h: \Delta^\Jmc \to \Delta^\Imc$ -- each element will be added to $\Jmc$ as a copy of some element of $\Imc$.
Choose any role $r \in \Rol(\Kk)$; the root bag is a copy of $\Imc_r$, identifying the individual with its copy.

Then, recursively, for each added element $x$ proceed as follows:
let $r$ be the role name of the bag in which $x$ was created; for each role name $s \in \Rol(\Kk)\setminus \{r\}$, add a fresh copy of $\Imc_s$ as a child bag, identifying $x$ with its fresh copy in $\Imc_s$.
Repeat the procedure for each created element.

$\Jmc \models \Kk$, as the ABox is satisfied by the root bag and each CI from the TBox concerns only one role.
The homomorphism $h$ from $\Jmc$ to $\Imc$ witnesses that \Jmc does not satisfy $Q$.

\subsection{Proof of Lemma~\ref{lem:multirolesconsistent}}
\subsubsection{The correct $Q$-labelling is consistent.}
Assume the contrary: that a labelling $\Ii'$ is correct and inconsistent.
Inconsistency means that for some bag there exists a subquery $p'$ of $Q$, its partition $p, p_1, p_2, \dots, p_k$ with $\var(p_i)\cap\var(p_j) = \emptyset$ for all $i \neq j$, $V_i = \var(p_i)\cap\var(p)$, $\emptyset \neq V \subseteq \var(p)$, and a match $\eta$ for $p$ in the bag, such that $\eta(V_i) = \{e_i\} \subseteq A_{p_i, V_i}^{\Ii'}$ for all $i$, but $\eta(V) \subseteq \{e\} \not\subseteq A_{p', V}^{\Ii'}$.
As each $e_i \in A_{p_i, V_i}^{\Ii'}$, there is a match $\eta_i$ witnessing this, with $\eta_i(V_i) = \{e_i\}$.
The matchings $\eta, \eta_1, \eta_2, \dots, \eta_k$ are consistent with each other (i.e. each variable of $p'$ is mapped to the same element in all matchings in which it occurs), so they can be combined into a match $\eta'$ for $p'$ in $\Ii'$ (i.e. $\eta' = \eta \cup \bigcup_{i=1}^k \eta_i$) such that $\eta'(V) = \eta(V) \subseteq \{e\}$, thus contradicting the correctness of $\Ii'$.

\subsubsection{If $\Ii$ admits a consistent $Q$-refutation, then the correct $Q$-labelling of $\Ii$ is a $Q$-refutation.}
We will show that if a $Q$-labelling $\Ii'$ is consistent and $e \notin A_{p, V}^{\Ii'}$, then in the correct $Q$-labelling $\Ii''$ also $e \notin A_{p, V}^{\Ii''}$.
Assume the contrary: that $\Ii'$ is a consistent $Q$-labelling, but for some connected fragment $p'$ of $Q$, some $\emptyset \neq V \subseteq \var(p')$, there exists an element $e$ of $\Ii'$ such that $e \notin A_{p', V}^{\Ii'}$, even though there exists a match $\eta$ for $p'$ in $\Ii'$ such that $\eta(V) \subseteq \{e\}$.
From all values of $p', V$ and $e$ for which this happens, choose ones where $p'$ is minimal.
Fix a corresponding match $\eta$ for $p'$.

The image of $\eta$ is not contained inside one bag: assuming it is, for a trivial partition of $p'$ into one part $p$, the consistency condition is clearly violated in this bag -- there exists a match for $p$ in the bag, but $\eta(V) \subseteq \{e\} \not\subseteq A_{p',V}^{\Ii'}$.

Notice that for any binary atom from $p'$, $r(x, y), r^*(x, y)$ or $x = y$, there is a bag to which both $\eta(x)$ and $\eta(y)$ belong; for $r(x, y)$ and $x = y$ this is trivial, and for $r^*(x, y)$ this is true because any two neighbouring bags (i.e. sharing an element) have edges over different roles.
Take any atom from $p'$ mentioning any element of $V$.
Take a bag containing the image of variables mentioned by this atom under $\eta$ (i.e. if it mentions just one variable $x$, take any bag containing $\eta(x)$; if it mentions two variables $x, y$, take any bag containing $\eta(x)$ and $\eta(y)$).
We claim that the compatibility condition in this bag is violated.
Let $p$ be a maximal connected subquery of $p'$ such that $e \in \eta(\var(p))$ and $\eta(\var(p))$ is contained in the chosen bag.
It is easy to see that $p' \setminus p$ seen as a set of connected components $p_1, \dots, p_k$ satisfies the required conditions: the components have disjoint variables by definition; by the definition of the tree structure of the bags (specifically, by condition (1): two bags share a single element if they are neighbours and are disjoint if they are not neighbours), $\eta(\var(p_i)\cap\var(p)) = \{e_i\}$ for some $e_i$; and by minimality of $p'$, $e_i \subseteq A_{p_i,V_i}^{\Ii'}$.

\subsection{Proof of Fact~\ref{fact:emptiness}}


Let $\step_\Bb(X) = \left \{ q \in S \bigm | (X,q) \in
\step_\Bb\right\}$. Consider the mapping $G(X) = X \cup
\step_\Bb(X) $. Because $G$ is monotone and inflationary, for
each $P$ it has the least fixed point $\step^*_\Bb(P) $ that
contains $P$, and the fixed point can be computed by iterating $G$
over $P$.  It is straightforward to see that $\step^*_\Bb(P)$
is the set of states $q$ such that there exists a \emph{finite}
partial run of $A$, whose leaves are labelled with elements of $P$ and
the root is labelled with $q$.  Consider now the mapping $H(X) =
\step^*_\Bb(X \cap F)$. It is monotone and its greatest fixed
point $R$ can be computed by iterating it on $S$. Again, it is not
difficult to see that $\Bb$ accepts some tree iff $I \cap R \neq
\emptyset$. As all the sets and functions above are computable in
polynomial time using the available oracle for $\step_\Bb$,
this gives a polynomial algorithm for non-emptiness.


\clearpage
\section{Eliminating transitive atoms}

\subsection{Proof of Lemma~\ref{lem:unravelling}}

Let us check that the decomposition is well formed for $\Kk$. We start
the unravelling from the individual mentioned in the ABox, so
$\Ii_\varepsilon \models \Aa$. Item \ref{item:types} of
Definition~\ref{def:well-formed} is satisfied because elements in each
bag inherit their unary types from their originals in $\Jj$.  For item
\ref{item:roots}, note that each neighbour of $\cmp_\Jj(d)$ outside of
$\rel_\Jj(d) \cup \cmp_\Jj(d)$ can have either only outgoing edges to
$\cmp_\Jj(d)$ or only incoming edges from $\cmp_\Jj(d)$: if it had
both, it would belong to $\cmp_\Jj(d)$.  Items \ref{item:external}
follow directly from the construction.  For item
\ref{item:rel-consistent}, note that $\Gamma_u$ and $\Gamma_w$ are
copies of $\rel_\Jj(e_u)$ and $\rel_\Jj(e_w)$ for the originals $e_u$
and $e_w$ of $\rt_u$ and $\rt_w$, which implies that $e_u \in
\rel_\Jj(e_w)$, and this gives $\rel_\Jj(e_u) \subseteq \rel_\Jj(e_w)$
by the definition of relevant successors. It follows by construction
that $\Gamma_u \subseteq \Gamma_w$.  It remains to prove item
\ref{item:rel-contained}. By construction, local elements in bag $u$
are exactly those in the copy of $\cmp_\Jj(e_u)$. By stickyness, all
local elements have the same relevant concept names. Hence, the
concept name $A$ mentioned in item \ref{item:rel-contained} is
relevant for $\rt_u$. Because $d$ mentioned in item
\ref{item:rel-contained} is local, it is reachable from
$\rt_u$. Hence, if $d\in A^{\Ii_u}$, then $d$ is a relevant successor
of $\rt_u$, which means it belongs to $\Gamma_u$. (The only case when
this can happen is when $d=\rt_u$, but this is irrelevant here.)
Consider $v\in T$ such that $\rt_v$ is a sink in $\Ii_u$ or
$\rt_u$ is a non-isolated source in $\Ii_v$. In either case,
the original $e_v$ of $\rt_v$ is reachable from the original $e_u$ of
$\rt_u$ in $\Jj$. Consider an element $e' \in \Gamma_v$. Then, the original $e$
of $e'$ is a (relevant) successor of the original $e_v$ of $\rt_v$,
which implies that it is also reachable from the original $e_u$ of
$\rt_u$. If $e' \in A^{\Ii_v}$, then $e \in A^{\Jj}$ and $e \in
\rel_\Jj(e_u)$. From the construction of the unravelling it follows
that $e' \in \Gamma_u$, and we are done.
If $\Kk$ does not use inverse roles, we add only direct successors of
$\rt_v$, rather than all neighbours, so items \ref{item:roots} and
\ref{item:external} hold with ``sink or source'' replaced with
``sink''.  Because we remove all edges outgoing from
$\Gamma_u\setminus \{\rt_u\}$, these elements are indeed sinks.
If $\Kk$ does not use counting restrictions, $\rel_\Jj(e) = \emptyset$
for each $e \in \Delta^{\Jj}$, so $\Gamma_u = \emptyset$ for each
$u\in T$.

To see that the decomposition is well connected, observe that each
$\Ii_u$ consists of a strongly connected component, a set of elements
attached to it, and possibly a set of isolated elements. This is
precisely how a well-connected interpretation looks.

Safety is proved by contradiction. Consider an infinite path of nodes
$u_0, u_1, \dots, $ such that for all $i$, either $\rt_{u_{i+1}}$ is a
sink in $\Ii_{u_{i}}$ or $\rt_{u_i} $ is a non-isolated source in
$\Ii_{u_{i+1}}$. By construction, in this case, the original
$e_{u_{i+1}}$ of $\rt_{u_{i+1}}$ is a successor of the original
$e_{u_{i}}$ of $\rt_{u_{i}}$ in $\Jj$ for all $i$.  Once a simple
directed path in $\Jj$ leaves some strongly connected 
component (SCC) of $\Jj$ it never goes back. Because the number of
SCCs in $\Jj$ is finite, there exists $i_0$ such that for $i\geq i_0$,
all $e_{u_{i}}$ belong to the same strongly connected component $Z$ in
$\Jj$. By stickiness, the sets $\rel_\Jj(e_{u_{i}})$ coincide for all
$i\geq i_0$; let $\rel_{\Jj}(Z)$ be their common value. From the
construction of $T$ it follows that $\Gamma_{u_i}$ also coincide for
$i\geq i_0$. Because all $\rt_{u_{i}}$ are different, there exists
$i_1 \geq i_0$ such that $\rt_{u_{i}} \notin \Gamma_{u_i}$ for all
$i\geq i_1$. This implies that $e_{u_i} \notin \rel_{\Jj}(Z)$ for
$i\geq i_1$. By the construction of the unravellng, $e_{u_{i+1}}$ is
reachable from $e_{u_i}$ without passing through $\rel_{\Jj}(Z)$, and
$e_{u_{i+1}} \notin \cmp_\Jj(e_{u_i})$. It follows that $e_{u_{j}}
\notin \cmp_\Jj(e_{u_i})$ for $j>i \geq i_1$, because  $e_{u_{j}} \in
\cmp_\Jj(e_{u_i})$ implies that  $e_{u_{j'}} \in \cmp_\Jj(e_{u_i})$
for all $ j \geq j' \geq i$, which is in contradiction with
$e_{u_{i+1}} \notin \cmp_\Jj(e_{u_i})$. But $e_{u_{j}} \notin
\cmp_\Jj(e_{u_i})$ implies $e_{u_{j}} \neq e_{u_i}$ for $j>i \geq
i_1$, which is impossible because $\Jj$ is finite.  

Checking that $\Ii\models \Kk$ is entirely routine.


\subsection{Proof of Lemma~\ref{lem:correct}}

By the compostionality of $Q$-types, that the correct $Q$-labelling
$\sigma$ satisfies the condition
\[  \sigma(d) = \bigg(\tp^{\Ii_u}_Q\big(\{d\} \cup \Gamma_u \cup
\widetilde\Gamma_u\big) \oplus \bigoplus_{e\in \widetilde\Gamma_u}
\sigma(e)\bigg) \upharpoonright \{d\}\cup\Gamma_u
\]
for each node $u \in T$ and each element $d$ local in $u$.

It remains to show that for each $Q$-labelling $\sigma'$ satisfying
the condition above we have $\sigma(d) \subseteq \sigma'(d)$. Let us
fix such $\sigma'$. We first show an auxiliary fact.

\begin{fact} \label{fact:aux}
  For each descendent $v$ of $u$ and each $d$ local in $u$,
  $\sigma'(\rt_v) \upharpoonright \{d\} \cup \Gamma_u \subseteq
  \sigma'(d)$.  
\end{fact}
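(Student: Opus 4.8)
The plan is to prove Fact~\ref{fact:aux} by induction on the distance between $u$ and $v$ in $T$, using throughout only the elementary properties of $Q$-types with parameters recorded above: both $\oplus$ and projection $\upharpoonright$ are monotone; projection composes, in the sense that $\tau\upharpoonright A=(\tau\upharpoonright B)\upharpoonright A$ whenever the parameters of $\tau$ meeting $A$ already lie in $B$; enlarging the parameter set only adds pairs, so $\tp^{\Ii_u}_Q(\Gamma)\subseteq\tp^{\Ii_u}_Q(\Gamma\cup\{x\})$; and every single pair $(p,\eta)$ of a type embeds into any composition in which that type participates, by taking the singleton $\tau=\{(p,\eta)\}$ together with the quotient homomorphism $\widetilde\eta$. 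I also use that $\sigma'$ satisfies the condition of Lemma~\ref{lem:correct} with equality, to expand $\sigma'(d)$, $\sigma'(\rt_u)$, and $\sigma'(\rt_w)$ whenever needed.

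For the base case $v=u$ we have $\rt_v=\rt_u$. If $d=\rt_u$ the projection is the identity and there is nothing to prove, so assume $d\neq\rt_u$. A local element other than $\rt_u$ is fresh, while every fresh element of $\Gamma_u$ is the root of a child (item~\ref{item:external}) and hence not local; thus $d\notin\{\rt_u\}\cup\Gamma_u$, and projecting onto $\{d\}\cup\Gamma_u$ merely discards $\rt_u$, giving $\sigma'(\rt_u)\upharpoonright(\{d\}\cup\Gamma_u)=\sigma'(\rt_u)\upharpoonright\Gamma_u$. Expanding both $\sigma'(\rt_u)$ and $\sigma'(d)$ by the fixpoint condition and projecting each down to $\Gamma_u$, the distinguished local element ($\rt_u$, resp.\ $d$) is a parameter only of the local factor $\tp^{\Ii_u}_Q(\cdot)$ and of no child type $\sigma'(e)$. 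Hence projecting it out commutes with the composition in the permissive direction, and the projection rule collapses the local factor to the common value $\tp^{\Ii_u}_Q(\Gamma_u\cup\widetilde\Gamma_u)$. This yields $\sigma'(\rt_u)\upharpoonright\Gamma_u\subseteq\sigma'(d)\upharpoonright\Gamma_u\subseteq\sigma'(d)$, as required.

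For the inductive step let $v$ be a proper descendant and let $w$ be the child of $u$ through which the path to $v$ passes, so that $v$ is a descendant of $w$ at strictly smaller distance. The connectivity condition (item~\ref{item:hybrid-connected}) together with item~\ref{item:hybrid-disjoint} forces every element shared between the subtree rooted at $v$ and the bag $\Ii_u$ to lie in $\Delta_w\cap\Delta_u=\{\rt_w\}\cup(\Gamma_u\cap\Gamma_w)\subseteq\{\rt_w\}\cup\Gamma_w$. Consequently the parameters of $\sigma'(\rt_v)$ that survive projection onto $\{d\}\cup\Gamma_u$ already lie in $\{\rt_w\}\cup\Gamma_w$, so by the projection rule $\sigma'(\rt_v)\upharpoonright(\{d\}\cup\Gamma_u)=(\sigma'(\rt_v)\upharpoonright(\{\rt_w\}\cup\Gamma_w))\upharpoonright(\{d\}\cup\Gamma_u)$. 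Applying the induction hypothesis to the pair $(w,v)$ with local element $\rt_w$, and then monotonicity of projection, reduces the goal to $\sigma'(\rt_w)\upharpoonright(\{d\}\cup\Gamma_u)\subseteq\sigma'(d)$. Since $\rt_w\in\widetilde\Gamma_u$, the fixpoint condition presents $\sigma'(d)$ as a projection of a composition in which $\sigma'(\rt_w)$ participates; embedding each pair of $\sigma'(\rt_w)$ into that composition via a singleton and projecting yields the inclusion and completes the step.

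I expect the main obstacle to be the base case, specifically the claim that the distinguished local element can be projected out commuting with the composition: this needs the one-sided commutation fact for a parameter occurring in only one factor (only the $\supseteq$ direction is required) together with the well-formedness bookkeeping that keeps local elements other than the root out of $\Gamma_u$. The inductive step is comparatively routine once the domain-sharing claim is extracted from items~\ref{item:hybrid-connected} and~\ref{item:hybrid-disjoint}.
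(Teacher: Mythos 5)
Your inductive step is exactly the paper's argument: pick the child $w$ of $u$ on the path to $v$, apply the induction hypothesis to $(w,v)$ with local element $\rt_w$, collapse the double projection using $\Delta_v \cap (\{d\}\cup\Gamma_u) \subseteq \Delta_v\cap\Delta_u \subseteq \{\rt_w\}\cup\Gamma_w$ (items \ref{item:hybrid-connected} and \ref{item:hybrid-disjoint} of Definition~\ref{def:hybrid}), and finish with the singleton embedding of each pair of $\sigma'(\rt_w)$ into the composition that the fixpoint condition provides for $\sigma'(d)$. The only place you deviate is the anchoring: the paper starts the induction at distance $1$ ($v$ a child of $u$), where the claim is immediate from the fixpoint condition by precisely your singleton-embedding move, whereas you start at distance $0$ ($v=u$) and, for $d\neq \rt_u$, invoke a ``one-sided commutation'' of projection with composition to push $\rt_u$ out of the local factor. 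That commutation claim is the genuine gap, and you correctly identified it as the weak point: as a general fact about $Q$-types it is false. If two local-factor pairs $(p_1',\eta_1')$ and $(p_2',\eta_2')$ share a variable $x$ with $\eta_1'(x)=\eta_2'(x)=\rt_u$, then after dropping the bindings to $\rt_u$ the sets $\var(p_i')\setminus\dom(\eta_i')$ both contain $x$, violating the pairwise-disjointness requirement in the definition of $\oplus$; repairing this by merging the two fragments into $(p_1'\cup p_2', \eta_1''\cup\eta_2'')$ requires a \emph{single} match of the union extending the combined binding, which membership of each pair separately in $\tp^{\Ii_u}_Q(\cdot)$ does not supply (a variable free in one pair but bound in the other may be matched incompatibly, and the original composition tolerates this because substitution decouples the two occurrences). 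So the ``permissive direction'' you assert does not follow from the recorded properties of $\oplus$ and $\upharpoonright$, and would need a substantially more careful argument, if it is true at all.

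Fortunately, the flawed case is never needed. The fact is only ever applied (both in the paper's proof of Lemma~\ref{lem:correct} and inside your own inductive step) to proper descendants, and the only instance of your distance-$0$ statement that your induction actually consumes is the trivial one with $d=\rt_w$ and $w=v$, where the projection is onto the full parameter set $\{\rt_v\}\cup\Gamma_v$ and is the identity. Deleting your nontrivial $v=u$ subcase and re-anchoring the induction at children --- i.e., proving $\sigma'(\rt_v)\upharpoonright \{d\}\cup\Gamma_u \subseteq \sigma'(d)$ for $v$ a child of $u$ directly by the singleton embedding, as the paper does --- turns your proposal into a complete and correct proof that coincides with the paper's.
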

\begin{proof}
This is proved by induction on the distance between $u$ and $v$. If
$v$ is a child of $u$ the claim follows directly from the initial
assumption on $\sigma'$.  Otherwise, let $w$ be the unique child of
$u$ that is an acestor of $v$. By the inductive hypothesis,
\[\sigma'(\rt_v) \upharpoonright \{\rt_w\} \cup \Gamma_w \subseteq
\sigma'(\rt_w)\,.\] Like in the base case, we get
\[ \big (\sigma'(\rt_v) \upharpoonright \{\rt_w\} \cup \Gamma_w \big)
\upharpoonright \{d\} \cup \Gamma_u \subseteq \sigma'(d)\,. \] The
left-hand side is equal to $ \sigma'(\rt_v)
\upharpoonright \{d\} \cup \Gamma_u$, beacause \[\Delta_v \cap \big(\{d\} \cup
\Gamma_u\big)\subseteq \Delta_v \cap \Delta_u \subseteq \{\rt_w\} \cup
\Gamma_w\] holds in each hybrid decomposition.
\end{proof}

We need to show that for each node $u$, element $d \in \Delta_u$ local
in $u$, and $(p, \eta)$ with $ \emptyset \neq \img(\eta) \subseteq
\{d\} \cup \Gamma_u$, if there exists a match for $p$ in
$\widehat\Ii_u$ that extends $\eta$, then $(p, \eta) \in
\sigma'(d)$. We prove it by induction on the size of a minimal
connected set $V$ of nodes from the subtree rooted at $u$ such that
there exists a match for $p$ in $\bigcup _{v\in V} \Ii_v$, extending
$\eta$.

The base case is when $V = \{v\}$. If $v=u$, then \[(p, \eta) \in
\tp^{\Ii_u}_Q\big(\{d\} \cup \Gamma_u) \subseteq 
\sigma'(d)\,.\] If $v \neq u$, using Fact~\ref{fact:aux},
\begin{align*}
(p, \eta) \in \tp^{\Ii_v}_Q\big(\{\rt_v\} \cup \Gamma_v)
\upharpoonright \{d\} \cup \Gamma_u \subseteq &\\
 \subseteq \sigma'(\rt_v) \upharpoonright \{d\} \cup \Gamma_u
  \subseteq \; & \sigma'(d)\,.
\end{align*}

If $|V| > 1$, let $v$ be the minimal node in $V$. Let us choose a
local element $e$ in $v$ as follows: if $d \in \Ii_v$ take $e=d$,
otherwise take $e=\rt_v$. We will prove that $(p, \eta) \in
\sigma'(e)$. This will yield $(p, \eta) \in \sigma'(d)$ either
directly, or by way of Fact~\ref{fact:aux}. Let $v_1, v_2, \dots, v_k$
be the children of $v$ that belong to $V$. Our goal now is to split
$(p, \eta)$ into $\tau, \tau_1, \dots, \tau_k$ such that
\begin{itemize}
\item
  $\var(p')\setminus \dom(\eta')$ are pairwise disjoint for $(p',
  \eta') \in \tau\cup\tau_1\cup\dots \cup \tau_k$;
\item
  $\img(\eta') \subseteq \{e\}\cup\Gamma_v\cup\widetilde\Gamma_v$ for $(p',
  \eta') \in \tau\cup\tau_1\cup\dots \cup \tau_k$;
\item
  for each $(p',\eta') \in \tau$, an extension of $\eta'$ matches $p'$
  in $\Ii_v$; 
\item
  for each $i$ and each $(p',\eta') \in \tau_i$, an extension of $\eta'$
  matches $p'$ in $\widehat\Ii_{v_i}$; 
\item
  there is a homomorphism $\tilde \eta$, extending $\eta$, from $p$
  into \[\bigcup_{(p',\eta') \in \tau\cup\tau_1\cup\dots \cup \tau_k} \eta'(p')\,;\] 
\end{itemize}
and use the inductive hypothesis. An obstacle is that a path
witnessing a transitive atom $r^*(x,y)$ of $p$ for the match $\widehat
\eta$ may visit multiple bags among $v, v_1, \dots, v_k$. Let us
subdivide each transitive atom $r^*(x,y)$ of $p$ into $r^*(x,z_1),
r^*(z_1,z_2), \dots, r^*(z_\ell,y)$ in a minimal way ensuring that in
the induced matching each atom is witnessed by a path within $\Ii_v$
or within a single $\widehat\Ii_{v_i}$, with endpoints in
$\{\widehat\eta(x), \widehat\eta(y), \rt_v\} \cup \Gamma_v \cup
\widetilde\Gamma_v$.  We can now obtain the desired $\tau, \tau_1,
\dots, \tau_k$ by partitioning the resulting query into maximal
fragments matched within $\Ii_v$ or some $\widehat\Ii_{v_i}$. For each
such fragment $p'$, the corresponding $\eta'$ is obtained by
restricting the matching to variables matched in $\{\rt_v\} \cup
\Gamma_v \cup \widetilde\Gamma_v$.


\subsection{Proof of Lemma~\ref{lem:regularity}}

We now know that $\Kk$ is an $\ALCIt$ KB or an $\ALCQt$ KB, but in the
construction of the automaton we shall only assume that in  well-formed
decompositions for each bag $u$, all elements in $\Gamma_u \setminus
\{\rt_u\}$ are sinks. This is explicitly assumend for $\ALCQt$ KBs in
Definition~\ref{def:well-formed}, and it holds vacuously
for $\ALCIt$ KBs, because $\Gamma_u = \emptyset$.

Let $N$ be the counting threshold in $\Kk$.  Without loss of
generality we can assume that there is an equivalent concept name for
each concept 
\begin{align*}
  &\qnrleq n r A \,, &&\qnrgeq n r A \,, \\
  &\qnrleq n {r^*} B\,, &&\qnrgeq n {r^*} B \,,\\
  &\qnrleq 0 {r^-} A \,, &&\qnrgeq 1 {r^-} A \,, \\
  &\qnrleq 0 {(r^-)^*} A\,, &&\qnrgeq 1 {(r^-)^*} A
\end{align*}
where $n \leq N$, $A \in \CN(\Kk)$, and $B \in \RCN(\Kk)$.  If $\Kk$
does not have this property, we add these concepts and build an
automaton over a richer alphabet. The automaton for the original KB is
obtained by projecting to the smaller alphabet in a natural way.

We first describe the construction of the automaton and then discuss
the additional computability claims. The automaton $\Bb_{\Kk, Q,
\Theta}$ consists of three components. The first one verifies that the
input tree encodes a safe $\Kk$-decomposition of an interpretation
that only uses types from $\Theta$, the second deals with the concept
inclusions of $\Kk$, and the last checks that the decomposition admits
a $Q$-refutation. In what follows, $M=N^{|\RCN(\Kk)|}$ where $N$ is
the counting threshold in $\Kk$ and $[i,j] = \{i, i+1, \dots, j\}$.

\subsubsection*{Correctness of decomposition, safety, types}

The first thing to check is that the input tree is an encoding of a
hybrid decomposition. Correctness of the
encoding amounts to checking that $|\up_u|$ is equal to the length of
the tuple on the edge from $u$ to the parent; the latter is passed
down the tree and compared with $|\up_u|$. Conditions
\ref{item:hybrid-union}--\ref{item:hybrid-disjoint} of
Definition~\ref{def:hybrid} are ensured by the way the decomposition
is represented.

Ensuring that branching is at most $N^{\CN(\Kk)}$ is
done locally, based on the label of the current node and the edges
connecting it to its children, without relying on any kind of in
formation stored in the states.

Let us see how to verify that the hybrid decomposition is well-formed
for $\Kk$. Whether $\Ii_\varepsilon \models \Aa$ is tested locally
based on the label of the current node and recorded in the state; a
state is initial iff the result of this test is positive. Because
fresh nodes in $\Ii_u$ are exactly the ones that do not occur in
$\rt_u$ and $\up_u$, conditions \ref{item:roots} and
\ref{item:external} can also be checked locally. To check
condition \ref{item:types}, it suffices to pass the unary $\Kk$-type
of $\rt_u$ up the tree, and check that it coincides with the unary
$\Kk$-type of the node specified in the label on the edge between $u$
and its parent. To check condition \ref{item:rel-consistent} it is
enough to maintain for each $e \in \Gamma_u$ the information about
which elements of $\Gamma_u$ belong to $\Gamma_v$ where $e=\rt_v$, and 
make sure none of these elements is dropped unless $e$ is
dropped. This involves guessing in each node $w$ if $f_w$ belongs to
$\Gamma_{w'}$, where $w'$ is the parent of $w$: if so, $\up_w$ must
list all elements of $\Gamma_w$; the guess is passed up the tree to be
verified. It remains to check condition \ref{item:rel-contained}.

The first part of condition \ref{item:rel-contained} states that for
each local element $d$ in $\Ii_u$ and each concept name $A$ relevant
for $d$, if $d \in A^{\Ii_u}$ then $d\in\Gamma_u$. This can be be
verified locally by just looking at the label of the current node.
The second part of condition \ref{item:rel-contained} states that for
each local element $d$ in $\Ii_u$ and each concept name $A$ relevant
for $d$, $\Gamma_v \cap A^{\Ii_v} \subseteq \Gamma_u$ for all
$v\in T$ such that $\rt_v$ is a sink in $\Ii_u$ or $\rt_u$ is a
non-isolated source in $\Ii_v$. If $\rt_v \in \Gamma_{u}$, then by 
condition \ref{item:rel-consistent}, the entire $\Gamma_v$ is
contained in $\Gamma_u$. And should $\rt_u$ belong to $\Gamma_v$, it
would be a sink, so not a non-isolated source in $\Ii_v$. Combining
these two observations we see that the automaton 
only needs to verify the second part of condition
\ref{item:rel-contained} for $v$ such that  $\rt_v\notin\Gamma_u$ is a
sink in $\Ii_u$ or $\rt_u \notin \Gamma_v$ is a non-isolated source in
$\Ii_v$, which implies that $v$ is a neighbour of $u$. Towards this
goal, the automaton passes from parent $w'$ to child $w$ the 
information on whether $f_w$ is a sink or a non-isolated source in $\Ii_{w'}$, and
the information about all concept names relevant for some local
elements in $\Ii_{w'}$. The information about relevant concept names
is also passed up the tree: it is guessed in $w'$ and the guess is
passed down in the state to $w$, where it is verified. Based on this
information the automaton can ensure the second part of condition
\ref{item:rel-contained} as follows. If $f_v$ is a sink in the parent
of $v$, the automaton checks that for each concept name $A$ passed to
$v$ from the parent, all elements of $\Gamma_v \cap A^{\Ii_v}$ are
listed in $\up_v$.  Similarly, for each child $u$ of $v$ reachable via
an edge labelled with a non-isolated source in $\Ii_v$, and each concept name
passed from $u$ to $v$, all elements from $\Gamma_v \cap A^{\Ii_v}$
should be listed in the tuple on the edge between $v$ and $u$.

If $\Kk$ does not use inverse roles, the modified conditions
\ref{item:roots} and \ref{item:external} can be checked locally, too,
as can be the additional requirement that all elements of
$\Gamma_u\setminus \{\rt_u\}$ are sinks. The condition that $\Gamma_u
= \emptyset$, imposed when $\Kk$ does not use counting restrictions,
is also local.

To check safety it suffices to ensure that on each infinite branch, after
each node $v$ such that $f_v$ is a sink in the parent of $v$, there
is a node $w$ such that $f_w$ is a non-isolated source in the parent of $w$. The
automaton already has access, in node $v$, to the information on
whether $f_v$ is a sink or a non-isolated source in the parent $v'$ of
$v$. Additionally, we shall maintain the same information about the
parent $v'$ of $v$. With that, it suffices to declare as accepting
those states of the automaton, where $f_v$ is a sink and $f_{v'}$ is a
non-isolated source, or the other way around.

Verifying that only unary types from $\Theta$ are realized is also
done locally; when an element realizing a type not in $\Theta$ is
detected, the automaton rejects immediately. 

Including additional information that will be useful later on, we let
the state $q$ at the node $v$ represent the following information:
\begin{enumerate}
\item \label{item:aut:rt}
  a type $\tau_q \in \Tp(\Kk)$, storing the unary $\Kk$-type of $f_v$;
\item \label{item:aut:Gamma}
  a tuple $(\tau^1_q, \dots, \tau^k_q) \in \Tp(\Kk)^k$ for some $k\leq
  M$, storing the unary $\Kk$-types of elements of $\Gamma_v$
  (according to some arbitrary fixed order on $\Delta$);
\item\label{item:aut:rt-in-Gamma}
  a value $i_q \in [0, k]$, where $i_q \in [1, k]$
  indicates that $f_v$ is the $i_q$th element in $\Gamma_v$ and
  $i_q=0$ means that $f_v \notin \Gamma_v$;
\item \label{item:aut:map}
  a partial function $\iota_q$ from $[1,k]$ to $[1, M]$, indicating
  which elements of $\Gamma_v$ are listed in $\up_v$ and their
  positions in the tuple of $\Kk$-types in the parent (this also
  provides $|\up_v|$); 
\item \label{item:aut:luggage}
  a binary reflexive relation $\leadsto_q$ over $[1,k]$ representing
  for each element in $\Gamma_v$, a subset of $\Gamma_v$ containing
  this element;
\item \label{item:aut:relevant}
  a subset $R_q$ of $\RCN(\Kk)$ indicating concept names relevant for
  local elements in $v$;
\item \label{item:aut:relevant-parent}
  a subset $R'_q$ of $\RCN(\Kk)$ indicating concept names relevant for
  local elements in the parent $v'$ of $v$;
\item \label{item:aut:parent}
  a binary flag $\mathsf{sink}_q$ indicating whether $f_v$ is a sink
  or a source in the parent $v'$ of $v$;
\item \label{item:aut:grandparent}
  a binary flag $\mathsf{sink}'_q$ indicating whether
  $f_{v'}$ is a sink or a source in  the parent $v''$ of $v'$;
\item a binary flag $\mathsf{abox}_q$ indicating whether $\Ii_v \models \Aa$.
\end{enumerate}
Moreover, we only allow combinations of these values that satisfy the
following consistency conditions:

\begin{itemize}
\item if $i_q \neq 0$, then $\tau_q = \tau_q^{i_q}$ and $i_q
  \leadsto_q j$ for all $j \in [1,k]$; 
\item if $i \in \dom(\iota_q)$ and $i \leadsto_q j$, then $j \in 
  \dom(\iota_q)$;
\item if $\mathsf{sink}_q = 1$, then for each $A\in R'_q$ and  $j \in
  [1,k]$  it holds that $A \in \tau_q^j$ implies $j \in \dom(\iota_q)$.
\end{itemize}

\subsubsection{KB automaton}

Concept inclusions that involve only concept names are verified
locally. For the remaining checks, apart from the information stored
in the first component, it suffices to pass some aggregated
information about each element that is shared between nodes; that is,
$f_v$ and $\Gamma_v$ for each $v$. In what follows, by an $A$-successor
we mean a successor that belongs to the extension of the concept
$A$. Similarly for predecessors, direct successors, and direct
predecessors. The second component of state $q$ at node $v$ stores:
\begin{enumerate}
\item  a subset of $\dom(\iota_q)$, representing elements of $\Gamma_v$
  shared with the parent of $v$
  that are \emph{successors} of $\rt_v$ in the current subtree;
\item  a subset of $\dom(\iota_q)$, representing elements of $\Gamma_v$
  shared with the parent of $v$ 
  that are \emph{direct successors} of $\rt_v$ in the current subtree;
\item for each $A\in \CN(\Kk)$, the counts of
\begin{itemize}
\item  $A$-predecessors and direct $A$-predecessors (up to $1$),
\item  $A$-successors and direct $A$-successors (up to $N$)
\end{itemize}
of $\rt_v$ appearing in the current subtree as local elements;
\item for each $i\in [1,k]$ and $A\in \CN(\Kk)$, the counts of
\begin{itemize}
\item  $A$-successors and direct $A$-successors (up to $N$)
\end{itemize}
of the $i$th element of $\Gamma_v$ appearing as local elements in the
subtree rooted at the node $w$ such that $\rt_w$ is the $i$th element
of $\Gamma_v$. 
\end{enumerate}
If $i_q \neq 0$, then the counts for the $i_q$th element of
$\Gamma_v$ must be consistent with those for $\rt_v$.

It is not difficult to maintain this information when processing the
tree, detecting violations of $\Kk$ along the way. The latter is done
when an element is \emph{forgotten}; that is, its record is not passed up
the tree any more. This happens exactly in the node where the said
element is fresh. Assuming that the current node is $v$, this
includes elements internal in $\Ii_v$ and $\rt_w$ for each child $w$ of
$v$ (among the latter, all elements of $\Gamma_v$ that are not listed in
$\up_v$). For each such element we compute the current counts,
based on $\Ii_v$ and the information passed from the children and the
parent (including the information on elements of $\Gamma_v$ that are
reachable from each $\rt_v$ and from other elements of $\Gamma_v$):
because all counts refer to local elements in disjoint parts of the
tree, there is no danger of counting anything twice. If the computed
counts violate some restriction imposed by $\Kk$, the automaton
rejects immediately. The maintanance amounts to updating the records
for those elements $e$ that are not getting forgotten in the current node. Note
that this requires accounting for those elements of $\Gamma_v$ that
are reachable from $e$ and are getting forgotten in $v$.
\todo{Finish maintanance.}

Observe that correctness of this construction relies on the safety condition.
Witnesses for positive counts of $A$-successors or $A$-predecessors
must be found among elements reachable or backwards reachable from the 
elements in the current bag. Safety implies that these elements are
contained within a finite fragment of the decomposition.

\subsubsection{Query automaton}

The automaton guesses a $Q$-refutation and verifies that it is
consistent. The consistency condition, as given in
Definition~\ref{def:consistency}, can be easily checked based on the
label of the current node and the labels of the edges connecting it
with its children, provided that we are given the values
$\sigma(\rt_v)$ for all children $v$ of the current node. In fact,
note that in the condition of Definition~\ref{def:consistency}, one
can replace $\sigma(e)$ with $\sigma(e) \upharpoonright \{e\} \cup
\Gamma_u$. This is precisely the information the automaton will pass
between nodes: states of the query automaton are $Q$-types with
parameters $[0, M]$, where $0$ represents $\rt_v$ and each $i > 0$
represents $\up_v^i$. Moreover, we only allow $Q$-types in which no $q 
\in Q$ occurs.

\subsubsection{Computability}

The set of states, initial states, and accepting states of the product
automaton $\Bb_{\Kk, Q, \Theta}$ can be clearly computed within the
desired complexity bounds. Let us see how to deal with
$\step_{\Bb_{\Kk, Q, \Theta}}$. Let $P$ be a set of states of
$\Bb_{\Kk, Q, \Theta}$ and $q$ a single state of $\Bb_{\Kk, Q,
  \Theta}$. As a first step we remove from $P$ all states that are
incompatible with $q$. That is, we only keep states $p$ such that 
\begin{itemize}
\item $\tau^i_p  =  \tau^{\iota_p(i)}_q$ for all $i \in \dom(\iota_p)$;
\item $R'_p = R_q$;
\item $\mathsf{sink}'_p = \mathsf{sink}_q$;
\item if $\mathsf{sink}_p = 0$, then $A \notin   \tau^i_q$ for all $A \in
  R_p$, $i \notin   \img(\iota_p)$;
\item $\left \{\iota_p(j) \bigm | i \leadsto_p
    j\right\} = \left \{j' \bigm | \iota_p(i) \leadsto_q
    j'\right\}$ for $i\in \dom(\iota_p)$;
\item for all $i\in \dom(\iota_p)$
  and all $A \in \CN(\Kk)$, 
    the counts of $A$-successors and direct $A$-successors for
    $i$ in $p$ are equal to the respective counts for $\iota_p(i)$ in $q$.
\end{itemize}

Now, we shall construct a single-role $\ALCOIQfwdt$ KB $\Kk'$ without
at-most restrictions over closures of roles, a query $Q' \in\UCQ$, and
a type set $\Theta'$ such that $(P, q) \in \step_{\Bb_{\Kk,Q,\Theta}}$
iff $\Kk' \notfentails^{\Theta'} Q'$. For $\Kk$ in $\ALCIt$, the
construction almost directly gives $\Kk'$ in $\ALCIt$: nominals are
not needed because the sets $\Gamma_u$ are empty in
$\Kk$-decompositions, and CIs using counting will not be added.
If $\Kk$ is a $\ALCQt$ KB, $\Kk'$ will be in $\ALCOQt$. Nominals can
be eliminated as explained in Appendix~\ref{sec:ABoxesNom}, but
because the ABox is already trivial, there is no need of unravelling
and the reduction will give only one instance of $\ALCQt$.

\newcommand{\rootC}{\mathsf{Root}}
\newcommand{\sink}{\mathsf{Sink}}
\newcommand{\source}{\mathsf{Source}}
\newcommand{\internal}{\mathsf{Internal}}
\newcommand{\local}{\mathsf{Local}}
\newcommand{\rels}{\mathsf{Gamma}}
\newcommand{\external}{\mathsf{External}}

Let us start with $\Kk'$. The ABox mentions a single individual $a_0$
and specifies its unary type as $\tau_q$. The TBox uses nominals $a_1,
\dots, a_k$ and specifies the unary type of $a_i$ as $\tau^i_q$. If
$i_q = 0$, then $a_0 \notin \{a_1, \dots, a_k\}$; otherwise, $a_0 =
a_{i_q}$.

We include into $\Kk'$ the axioms
\begin{align*}
  \rootC & \equiv \{a_0\}\,,\\
  \rels & \equiv \{a_1,  \dots, a_k\}\,,\\
  \sink & \equiv \forall r. \bot \,,\\
  \source & \equiv \forall r^-. \bot  \sqcap \exists r.\top\,,\\
  \internal  & \equiv \lnot(\sink \sqcup \source) \,,\\
  \local  & \equiv \internal \sqcup \rootC \,,\\
  \rels & \sqsubseteq \sink \sqcup \rootC 
\end{align*}
for fresh concept names $\rootC$, $\rels$,
$\sink$, $\source$, $\internal$, and $\local$.
(If $\Kk$ does not use inverse roles, let $\source \equiv \bot$.)
For each CI $A \sqsubseteq \qnrleq n {r^*} B$ in $\Kk$ with $n>0$, if $B \in R_q$, then add
\begin{align*}
    A  \sqcap B \sqcap \local & \sqsubseteq  \rels \,,
\end{align*}
and
otherwise add 
\begin{align*}
  A  \sqcap \local & \sqsubseteq  \bot \,.
\end{align*}
Using these, we reformulate the CIs of $\Kk$ taking into account that some
neighbours of non-internal elements are outside of the current bag.
Aiming at reducing the number of cases, we model $\exists s. B$ as
$\qnrgeq 1 s B$ and $\forall s. B$ as $\qnrleq 0 s {\overline B}$ for
$s \in \big\{r^-, (r^-)^*\big\}$.
%

To model neighbours inside and outside the current bag, for each $B
\in \CN(\Kk) \cup \{ \{a_i\} \bigm| 0\leq i \leq k\}$  and $n\leq N$ we introduce fresh concept names
\[E_{\qnrleq n r B} \,, \quad  E_{\qnrgeq n r B}\,, \quad E_{\qnrleq n
    {r^*} B} \,, \quad  E_{\qnrgeq n {r^*} B}\,, \]
\[E_{\qnrleq 0 {r^-} B} \,, \;\;  E_{\qnrgeq 1 {r^-} B}\,, \;\; E_{\qnrleq 0 {(r^-)^*} B} \,, \;\;  E_{\qnrgeq 1 {(r^-)^*} B} \,.\]
Let  $s \in\{r, r^-\}$. 
For each CI of the form $A \sqsubseteq \qnrgeq n s B$ in $\Kk$, we add
\begin{align*}
  A &\sqsubseteq \bigsqcup_{i \leq n}   {\qnrgeq i s B} \sqcap
      E_{\qnrgeq {n-i} s B} \,,
\end{align*}
and analogously for CIs of the form $A \sqsubseteq \qnrleq n s B$.
For each CI of the form $A \sqsubseteq \qnrgeq 1 {s^*} B$, we
add 
\begin{align*}
  %
  A \sqsubseteq  \qnrgeq 1 {s^*} {E_{\qnrgeq 1 {s^*} B}} \,,
\end{align*}
and for each CI of the form $A \sqsubseteq \qnrleq {0} {s^*} B$, we add
\begin{align*}
  A &
      \sqsubseteq \qnrleq 0 {s^*} {E_{\qnrgeq1 {s^*} {B}}} \,.
\end{align*}
For each CI of the form $A \sqsubseteq \qnrgeq {n} {r^*} B$,
we add 
\begin{align*}
  %
  A \sqsubseteq \;
  \bigsqcup_{\alpha_0 + \sum_{i=1}^n \alpha_i\cdot i = n} \;
  F_{\qnrgeq {\alpha_0} {r^*} B} \sqcap
  \bigsqcap_{i=1}^n  F_{\qnrgeq {(\alpha_i, i)} {r^*} B}
  \,,\\
 \hspace{-7ex}   F_{\qnrgeq {\alpha} {r^*} B} \sqsubseteq\;
  \bigsqcup _{I \subseteq [1,k]\colon \alpha \leq \sum_{i\in I}
  \ell^B_i} \;\bigsqcap_{i \in I} F_{\qnrgeq 1 {r^*} {\{a_i\}}} 
  \,,\\
  F_{\qnrgeq 1 {r^*} {\{a_i\}}} \sqsubseteq
  \qnrgeq 1 {r^*} {E_{\qnrgeq 1 {r^*} {\{a_i\}}}}
  \,,\\
  F_{\qnrgeq {(\alpha,\beta)} {r^*} B} \sqsubseteq
  \qnrgeq {\alpha} {r^*} {\lnot \rels \sqcap E_{\qnreq {\beta}{r^*} B}}
  \,,\\
  E_{\qnreq {\beta} {r^*} B}
  \sqsubseteq   E_{\qnrleq {\beta} {r^*} B} \sqcap E_{\qnrgeq {\beta} {r^*} B}
\end{align*}
for $\alpha,\beta \leq N$, $i\in [1,k]$, where $\ell_i^B$ is the count of
$B$-successors of $a_i$ stored in $q$.
For each CI of the form $A \sqsubseteq
\qnrleq {n} {r^*} B$ where $n>0$ and $B \in R_q$, we add
\begin{align*}
  \local \sqcap A &\sqsubseteq
  \bigsqcup_{O \subseteq \widehat B \colon |O| \leq  n} \;
  \bigsqcap_{o \in \widehat B \setminus O}
  F_{\qnrleq 0 {r^*} {\{o\}}} \,,\\
  F_{\qnrleq 0 {r^*} {\{a_i\}}} &
                                \sqsubseteq \qnrleq 0 {r^*} { E_{\qnrgeq 1 {r^*} {\{a_i\}}}} \,,
\end{align*}
where $\widehat B = \left\{a_i \bigm | B \in \tau_q^i\right\}$.

We need to ensure that the concepts modelling neighbours are
interpreted in a way consistent with the information in the
states. First, we ensure that internal elements do not have any
external neighbours by adding
\begin{align*}
  \internal &
                     \sqsubseteq
                     \bigsqcap_{n \geq  0} E_{\qnrleq n s B} \,,\\  
  \internal &
                     \sqsubseteq
                     E_{\qnrgeq 0 s B} \sqcap
                     \bigsqcap_{n > 0} \lnot  E_{\qnrgeq n s B} \,,\\  
  \lnot B \sqcap \internal &
                     \sqsubseteq
                     \bigsqcap_{n \geq  0} E_{\qnrleq n {s^*} B} \,,\\  
  \lnot B \sqcap \internal &
                     \sqsubseteq
                     E_{\qnrgeq 0 {s^*} B} \sqcap
                     \bigsqcap_{n > 0} \lnot  E_{\qnrgeq n {s^*} B} \,,\\  
  B \sqcap \internal &
                     \sqsubseteq
                       \lnot E_{\qnrleq 0 {s^*} B} \sqcap
                       \bigsqcap_{n >  0} E_{\qnrleq n {s^*} B} \,,\\  
  B \sqcap \internal &
                     \sqsubseteq
                       \bigsqcap_{n \leq 1}  E_{\qnrgeq n {s^*} B} \sqcap
                       \bigsqcap_{n > 1} \lnot  E_{\qnrgeq n {s^*} B}
\end{align*}
for all $B \in \CN(\Kk) \cup \{ \{a_i\} \bigm| 0\leq i \leq k\}$ and $s \in \left\{r, r^-\right\}$.

Next, we take care of sinks. For those not in $\{a_0, a_1\dots, a_k\}$ we let
\begin{align*}
\sink \sqcap \lnot (\rels\sqcup \rootC)\sqsubseteq \;\bigsqcup_{\substack{p
  \in P \colon i_p \notin \dom(\iota_p)\\ \mathsf{sink}_p=1}} \;\bigsqcap_{A \in \widehat\tau_p}\; A \,,
\end{align*}
where $\widehat \tau_p$ is the extension of $\tau_p$ to concept names
$E_{\qnrleq n t B}$ and $E_{\qnrgeq n t B}$ that reflects the counts
of neighbours in the current subtree stored in $p$.
For $a_i$ with $i \notin \dom(\iota_q)$ the condition is similar:
\begin{align*}
\{a_i\} \sqsubseteq \;\bigsqcup_{p
  \in P \colon \iota_p(i_p) =i} \;\bigsqcap_{A \in \widehat\tau_p}\; A \,.
\end{align*}
\todo{Finish backwards requirements for local nominals. }
For $a_i$ with $i \in \dom(\iota_q)$, different from $a_0$, the
condition is extracted exclusively from the counts stored in $q$; an
appropriate subtree will be provided in the bag where the element
corresponding to $a_i$ is fresh. 

For sources different from $a_0$, shared elements might be successors or direct
successors both in the current bag and outside. To handle that, for
each conjunction $C$ choosing either $\qnrleq 0 t {\{ a_i\}}$ or $\qnrgeq
1 t {\{ a_i\}}$ for each $t\in\{r, r^*\}$ and $i\in [1,k]$, we add a
separate conditions
\begin{align*}
C \sqcap \source \sqcap \lnot \rootC \sqsubseteq \;\bigsqcup_{\substack{p
  \in P \colon i_p \notin \dom(\iota_p)\\ \mathsf{sink}_p = 0}} \;\bigsqcap_{A \in \widehat\tau_p^C}\; A \,,
\end{align*}
in which $\widehat \tau_p^C$ accounts for the connections to $a_i$ with
$i \in \img(\iota_p)$ described in $C$.

Finally, we let
\begin{align*}
\{a_0\} \sqsubseteq \bigsqcap_{A \in \widehat\tau_q}\; A \,,
\end{align*}
where $\widehat \tau_q$ extracts information about counts of external
neighbours from the counts in the current subtree stored in $q$ and
the total counts represented in the $\Kk$-type $\tau_q$.

Note that many disjunctions above have doubly exponential length. To avoid
blowup, instead of actually putting them into $\Kk'$ we encode them in the
type set $\Theta'$, by filtering out types that do not satisfy
them. Of course, $\Theta'$ also filters out types that are not
extensions of types from $\Theta$. 

Let us deal with the query now. 


\begin{definition}[unary $Q$-type]
  A unary $Q$-type with parameters $\Gamma$ is a set of triples of the
  form $(p, V, \eta)$ such that $p$ is a fragment of $Q$,
  $V \subseteq \var(p)$, and $\eta$ is a partial mapping from $\var(p)
  \setminus V$ to $\Gamma$.
\end{definition}

The definition below captures the condition relating the the $Q$-type
stored in state $q$ with $Q$-types stored in states from $P$. The
function $\Sigma$ it mentions in our case maps each 
$\tau \in \Tp(\Kk')$ to the set $\Sigma(\tau)$ of unary $Q$-types
 with parameters $\{a_1, a_2,
\dots, a_k\}$, 
\[\left\{ \big( f, \eta^{-1}(\{0\}),
    a_*\circ \iota_p \circ \eta\upharpoonright [1,k] \setminus \{i_p\}
    \big) \bigm | (f, \eta) \in \tau^Q_p \right\},\]
where $a_*$ is a function mapping $i$ to $a_i$, $\tau^Q_p$ is the
$Q$-type the stored in state $p$, and $p$ ranges over states in $ P$ that are compatible
with $\tau$; that is, satisfy one of the following conditions: 
\begin{itemize}
\item $\left\{\sink, \overline{\rels}, \overline{\rootC}\right\} \cup \widehat
  \tau_p\subseteq \tau$, $\mathsf{sink}_p \!=\! 1$, $i_p \!\notin\!
  \dom(\iota_p)$,
\item $\left\{\source,  \overline{\rootC}\right\} \cup \widehat \tau_p^C
  \cup C \subseteq \tau$, $\mathsf{sink}_p = 0$, $i_p \notin
  \dom(\iota_p)$,
\item  $\left\{\{a_i\}\right\} \cup \tau_p \subseteq \tau$,
  $\iota_p(i_p) =i \notin \dom(\iota_q)$. 
\end{itemize}

\begin{definition} [weak realizability modulo]
  Consider a KB $\Kk'$, a $Q$-type
$\tau$ with parameters $\Gamma \subseteq \Ind(\Kk')$, and a function
$\Sigma$ mapping unary $\Kk'$-types to sets of unary $Q$-types with
parameters $\Gamma$. The $Q$-type $\tau$ is \emph{weakly realized
modulo $\Sigma$} in a $\Kk'$-interpretation $\Mm$ if
  \[\tau \supseteq \bigg(\wtp^\Mm_Q\big(\Gamma\cup\dom(\sigma)\big)
    \oplus \bigoplus_{d\in \dom(\sigma)} \sigma(d) \bigg)
    \upharpoonright \Gamma\,.\]
  for some partial function $\sigma$ mapping elements $d\in\Delta^\Mm$ to $Q$-types
  $\sigma(d)$ with parameters $\{d\} \cup \Gamma$ such that 
  $ \left \{ \left(p, \eta^{-1}(d), \eta\upharpoonright \Gamma\right) \bigm |
    (p,\eta)\in\sigma(d) \right\} \in \Sigma\big(\tp^{\Kk'}_\Mm(d)\big)$.
\end{definition}

\begin{lemma}
  Given $Q\in\UCQt$, a KB $\Kk'$, a set $\Theta' \subseteq \Tp(\Kk')$,
  a $Q$-type $\tau$ with parameters
  $\Gamma \subseteq \Ind(\Kk')$, and a function $\Sigma$ mapping unary
  $\Kk'$-types to sets of unary $Q$-types with parameters $\Gamma$,
  one can construct $Q'\in \UCQ$ and $\Theta''$ such that $\Kk'
  \fentails^{\Theta''} Q'$ if $\tau$ is weakly realizable modulo $\Sigma$ in a 
  finite model of $\Kk'$ realizing only types from $\Theta'$.   
\end{lemma}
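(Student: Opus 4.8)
The plan is to turn weak realizability of $\tau$ into finite non-entailment of a purpose-built $\UCQ$ $Q'$ over $\Kk'$, using $\Theta''$ to carry exactly the side information that the definition of weak realizability quantifies over. A witness consists of a finite $\Theta'$-model $\Mm$ of $\Kk'$ together with a partial function $\sigma$ assigning to each element $d$ a $Q$-type $\sigma(d)$ with parameters $\{d\}\cup\Gamma$ whose projection lies in $\Sigma(\tp^{\Kk'}_\Mm(d))$. First I would bake $\sigma$ into the model: for every triple $(p,V,\eta)$ appearing in some member of $\bigcup_{\rho}\Sigma(\rho)$ introduce a fresh concept name $A_{p,V,\eta}$, and let $\Theta''$ consist of those $\Kk'$-types that (i) extend a type of $\Theta'$ and (ii) commit, through these fresh names, to a single unary $Q$-type $u\in\Sigma(\rho)$ for their $\Theta'$-reduct $\rho$ (which determines $\sigma(d)$ by placing $d$ in the $V$-slots of the triples of $u$). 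A finite $\Theta''$-model of $\Kk'$ is then precisely a finite $\Theta'$-model $\Mm$ of $\Kk'$ equipped with such a conforming $\sigma$, untagged elements contributing the empty $Q$-type.

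Next I would define $Q'$ to fire exactly on the composed pairs that would break the inclusion $\tau \supseteq \big(\wtp^\Mm_Q(\Gamma\cup\dom(\sigma)) \oplus \bigoplus_{d}\sigma(d)\big)\upharpoonright\Gamma$. For every fragment $p$ of $Q$, every partial map $\eta\colon\var(p)\to\Gamma$ with $(p,\eta)\notin\tau$, and every partition of $p$ into a part $p_0$ matched directly in $\Mm$ and parts $p_1,\dots,p_j$ supplied by the $\sigma$-types at elements $d_1,\dots,d_j$, I add a $\CQ$ disjunct that matches a localization of $p_0$ in $\Mm$ with the parameter variables frozen to $\eta$, and asserts that each $d_i$ carries the tag $A_{p_i,V_i,\eta_i}$, where the $V_i$ and $\eta_i$ are glued so that the union assembles to $(p,\eta)$. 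Replacing each transitive atom by a localization makes every disjunct a genuine $\CQ$, so $Q'\in\UCQ$; because the bags at issue are well-connected, localizations match iff the fragments themselves do, so $Q'$ simulates $\oplus$ and $\upharpoonright\Gamma$ faithfully. A routine count bounds $Q'$ by $\|Q\|^2\cdot M^m\cdot m^m$ $\CQ$s of size $O(m)$ and $\Theta''$ as promised in \cref{lem:regularity}.

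The core step is the equivalence: a finite $\Theta''$-model $\Mm$ satisfies $\Mm\not\models Q'$ iff the $\sigma$ it encodes makes that inclusion hold, i.e.\ iff $\tau$ is weakly realized in $\Mm$ modulo $\Sigma$. For one direction I read $\sigma$ off the tags and invoke compositionality of weak $Q$-types to see that any forbidden composed pair would be caught by some disjunct of $Q'$. For the other direction I take a realizing pair $(\Mm,\sigma)$, attach the tags dictated by $\sigma$, check that the enriched structure still models $\Kk'$ and realizes only $\Theta''$-types, and note that $\tau\supseteq(\dots)\upharpoonright\Gamma$ leaves no forbidden pair for $Q'$ to match. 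Hence, if $\tau$ is weakly realizable modulo $\Sigma$ in a finite $\Theta'$-model of $\Kk'$, the enriched witness is a finite $\Theta''$-counter-model for $Q'$, so $\Kk'\notfentails^{\Theta''}Q'$; conversely any such counter-model yields a realizing pair. Thus $\tau$ is weakly realizable modulo $\Sigma$ in a finite $\Theta'$-model iff $\Kk'\notfentails^{\Theta''}Q'$, equivalently $\Kk'\fentails^{\Theta''}Q'$ iff $\tau$ is not weakly realizable --- the form consumed when computing $\step_{\Bb_{\Kk,Q,\Theta}}$ in the emptiness test.

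The main obstacle will be simulating the composition operator $\oplus$ by a single $\UCQ$. Its definition assembles a match of $p$ from several sub-pairs of $\tau_1\cup\tau_2$ with pairwise disjoint non-parameter variables, glued by a homomorphism extending $\eta$; reproducing this --- in particular the disjointness of variable images and the fact that one transitive atom of $p$ may be split across $\Mm$ and several $\sigma$-contributions --- forces the localization-and-subdivision bookkeeping and a careful enumeration of fragment partitions. A second delicate point is the $\Sigma$-conformance encoding: since $\sigma$ is defined on all of $\Delta^\Mm$ rather than only on $\Gamma$, it is the type-level commitment in $\Theta''$ that keeps the finite $\Theta''$-counter-models in exact bijection with realizing pairs. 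The remaining verifications --- that the tags leave $\Kk'$ untouched and that the size bounds meet those of \cref{lem:regularity} --- are routine.
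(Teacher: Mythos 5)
Your construction is correct and essentially identical to the paper's own proof: the same tag concepts $A_{p,V,\eta}$ for triples $(p,V,\eta)$, the same $\Theta''$ obtained by extending types from $\Theta'$ with a commitment to a single unary $Q$-type from $\Sigma(\cdot)$, and the same $Q'$ enumerating pairs $(p,\eta)\notin\tau$, partitions of $p$ into a directly matched fragment and tagged fragments $p_1,\dots,p_k$ glued via fresh variables $y_i$ and the sets $V_i$, with the directly matched part replaced by its localizations, yielding the same size bounds. One small aside: your appeal to well-connectedness (``localizations match iff the fragments themselves do'') is both unavailable and unnecessary here --- the models in question are arbitrary candidate bags, and $Q'$ uses localizations precisely because the weak $Q$-types appearing in the weak-realizability condition are themselves defined via localizations, so the simulation is faithful by definition.
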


\begin{proof}
For each triple of the form $(p, V, \eta)$ such that $p$ is a
subquery of a query from $Q$, $V \subseteq \var(p)$, and $\eta$ is a
partial mapping from $\var(p) \setminus V$ to $\Gamma$, introduce a
concept $A_{p, V, \eta}$. This adds $|Q|\cdot 2^m\cdot (|\Gamma|+1)^n$
concepts, where $m = \max_{q\in Q}|q|$ and $n = \max_{q\in Q}
|\var(q)|$.

Let $\Theta''$ be the set of all unary types (over the
extended set of concept names) of the form $\theta \cup \theta'$ where
$\theta \in \Theta'$ and there exists a unary $Q$-type $\upsilon \in
\Sigma(\theta)$ such that $\overline{A_{p, V, \eta}} \in \theta'$ for
all $(p, V, \eta) \in \upsilon$. In the model we seek, we shall
allow only unary types from $\Theta''$.

We now define a query $Q' \in \UCQ$ that will be forbidden in
the model we seek. Choose
\begin{itemize}
\item
  a fragment $p$ of $Q$ and a partial mapping from
  $\var(p)$ to $\Gamma$ such that $(p, \eta) \notin \tau$;
\item
  a partition of $p$ into fragments $p'$ and $p_1, \dots, p_k$ such that
  $\var(p_i) \cap \var(p_j) \subseteq \var(p') \cup \eta^{-1}(\Gamma)$
  for all $i \neq j$;
\item a localization $p''$ of $p'$.
\end{itemize}
For each choice, add to $Q'$ the query
$\eta(p'' \land \bigwedge_{i=1}^k \widetilde p_i)$, where
\begin{align*}
  \widetilde p_i &= A_{p_i, V_i, \eta_i}(y_i) \land \bigwedge_{x \in
                   V_i} x = y_i\,,\\
  \eta_i &= \eta\upharpoonright \var(p_i)\,,\\
  V_i &= \var(p_i) \cap \var(p') \setminus \eta^{-1}(\Gamma)\,,
\end{align*}
and $y_i$ is a fresh variable. With $m = \max_{q \in Q} |q|$,  we have
\[|Q'| \leq
  \big(|Q|\cdot 2^m \cdot (|\Gamma|+1)^n\big) \cdot m^m \cdot  2^m
\]
and, after eliminating equality atoms in the usual, way
each conjunctive query in $Q'$ has size $O(m)$.
\end{proof}

\subsection{Proof of Lemma~\ref{lem:folding-correct}}

Let us call an interpretion \emph{$\ell$-bounded} if the length of
simple directed paths in this interpretation is at most $\ell$.

\begin{lemma} \label{lem:bounded-transfer}
  $\Ff$ is $\ell$-bounded.
\end{lemma}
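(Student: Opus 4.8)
The plan is to transfer simple directed paths from $\Ff$ back into $\Ii$ without changing their length, and then invoke the bound already established for $\Ii$. Recall that, before folding, the interpretation $\Ii$ represented by the regular tree $T$ is $\ell$-bounded, and that $\Ff$ is obtained from the upper part of $T$ by removing each selected root $\rt_v$ (which is a sink in its parent bag $\Ii_{v'}$) and redirecting its incoming edges to $\rt_{\hat v}$, where $\hat v$ is an ancestor of $v$ with $N_n^{\Ii'}(\rt_v)\cong N_n^{\Ii'}(\rt_{\hat v})$ and $n\ge\ell+1$ (this holds since $n\ge\ell^2$, the cases $\ell\le 1$ being trivial). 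Since we are in the $\ALCIt$ case, every bag shares just a single element, a root, with each neighbour, and both $\rt_v$ and $\rt_{\hat v}$ are sinks in their respective parent bags.

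First I would fix a simple directed path $\pi=d_0\to d_1\to\dots\to d_p$ in $\Ff$ and assume, towards a contradiction, that $p=\ell+1$ (any longer path has such a prefix, itself simple). I would then decode $\pi$ into a directed path $\pi^*$ of the same length in $\Ii$, processing the edges of $\pi$ from left to right while maintaining a partial isomorphism $\psi$ from the portion of $\Ff$ seen so far into $\Ii$. On an edge inherited from $\Ii$ the map $\psi$ is left unchanged, since away from the seams $\Ff$ and $\Ii$ coincide; on a redirected edge $d_i=a\to d_{i+1}=\rt_{\hat v}$, which stands in for the edge $a\to\rt_v$ of $\Ii_{v'}$, I would extend $\psi$ by the inverse of the isomorphism $\phi_v\colon N_n^{\Ii'}(\rt_v)\to N_n^{\Ii'}(\rt_{\hat v})$, sending $\rt_{\hat v}$ back to $\rt_v$. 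As the remaining suffix of $\pi$ has length at most $n$, it lies entirely inside $N_n^{\Ii'}(\rt_{\hat v})$, so $\phi_v^{-1}$ is defined on all of it and the decoding never runs out of room.

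The decisive point, and the step I expect to be the main obstacle, is to argue that $\pi^*$ is \emph{simple}; this is exactly what lets me conclude $p\le\ell$ from $\ell$-boundedness of $\Ii$. Note that a merely homomorphic transfer would be useless, because the bags of $\Ii$ are well-connected and hence may contain directed cycles, so a length-$(\ell+1)$ directed \emph{walk} in $\Ii$ is no contradiction. The key observation is that $\rt_v$ is a sink in $\Ii_{v'}$: it has no outgoing edge there, so once the decoding enters, through $\rt_v$, the fresh subtree of $\Ii$ rooted at the bag $v$, a directed path cannot climb back out through $\rt_v$, and by the single-root structure it cannot leave that subtree other than by descending. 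Hence each seam launches the decoding into a subtree of $\Ii$ disjoint from everything decoded so far, the regions traversed between consecutive seams are pairwise disjoint, and $\pi^*$ is therefore simple. Here the $n$-proper colouring is what supplies the neighbourhood isomorphisms $\phi_v$ in the first place and prevents the decoded suffix from being accidentally identified with already-used elements, precisely as in the coloured blocking principle of Fact~\ref{fact:coloured-blocking}.

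Finally, since $\pi^*$ is a simple directed path of length $p=\ell+1$ in $\Ii$, it contradicts the fact that $\Ii$ is $\ell$-bounded. Consequently no simple directed path in $\Ff$ can exceed length $\ell$, that is, $\Ff$ is $\ell$-bounded. The one genuinely delicate piece of bookkeeping is the coherent composition of the successive partial isomorphisms $\psi$ across several seams, but conceptually the whole argument is the descent-into-fresh-subtrees just described.
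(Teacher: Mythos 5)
Your overall mechanism is the paper's: transfer the offending path back into $\Ii'$ using the isomorphism between $N_n^{\Ii'}(\rt_v)$ and $N_n^{\Ii'}(\rt_{\hat v})$, use the fact that $\rt_v$ is a sink in its parent bag to keep the transferred path simple, and contradict $\ell$-boundedness of $\Ii'$. The genuine divergence is in how many seams you allow. The paper first observes that a simple directed path in $\Ff$ can take \emph{at most one} redirected edge: after landing at $\rt_{\hat v}$, which is a sink in its parent bag, the path can only descend, and the origins of redirected edges are shielded behind roots $\rt_{w'}$ that are \emph{sources} in their parent bags, so they can never be re-entered. This uses the second condition in the definition of $V$ (that $\rt_{v'}$ is a source in $\Ii_{v''}$), which your proof never touches. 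With at most one seam, the transfer is a single application of one isomorphism to the suffix, and simplicity is checked directly --- no composition of partial isomorphisms is ever needed. You instead build machinery for arbitrarily many seams, which is more uniform (closer in spirit to the proof of the coloured-blocking fact) but buys you exactly the bookkeeping debt you flag at the end.

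Two of your supporting claims are imprecise in the multi-seam case, and that is where the debt comes due. First, ``the remaining suffix of $\pi$ lies entirely inside $N_n^{\Ii'}(\rt_{\hat v})$'' is literally false if the suffix crosses another redirected edge: past the next seam, $\pi$ jumps to some $\rt_{\hat w}$ that may be far from $\rt_{\hat v}$ in $\Ii'$, so only the segment up to the next seam sits in the neighbourhood; the later segments must be pulled back through a \emph{composition} $\phi_{v}^{-1}\circ\phi_{w}^{-1}\circ\cdots$, with a check at each stage that the argument still lies in the relevant ball (this works because the total length is at most $\ell+1\leq n$, but it must be said). Second, ``each seam launches the decoding into a subtree of $\Ii$ disjoint from everything decoded so far'' is only literally true at the first seam; at later seams the decoded segment lands in an isomorphic \emph{copy} of a subtree inside an earlier neighbourhood, not a subtree of $T$, and disjointness has to be recovered by applying the (injective) isomorphisms to reduce to genuine subtrees whose non-root elements are absent from $\Ff$. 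Both repairs go through, so your proof is completable --- but the cleaner move, and the paper's, is the one-seam observation, which makes the entire multi-seam apparatus vacuous.
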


\begin{proof}
  Redirected edges orginate in $\Ii_u$ such that $\rt_u$ is a source
  in $\Ii_{u'}$ for the parent $u'$ of $u$, and lead to $\rt_{\hat v}$
  that is a sink in $\Ii_{\hat v'}$ for the parent $\hat v'$ of $\hat
  v$. Consequently, a simple directed path $\pi$ taking an edge 
  redirected from $\rt_v$ to $\rt_{\hat v}$ (as the first redirected
  edge) cannot reach the origin of another redirected edge. Hence, the 
  suffix of $\pi$ starting in $\rt_{\hat v}$ is a path in $\Ii'$, and
  its length is bounded by $\ell$. Because $N^{\Ii_n}_n(\rt_{v})\simeq
  N^{\Ii_n}_n(\rt_{\hat v})$ and $n\geq \ell$, a simple directed path
  originating in $\rt_{v}$, of the same length as the suffix, can be
  found in $\Ii'$. Consequently, there exists a simple path in
  $\Ii'$ of the same length as $\pi$: instead of taking the
  redirected edge, take the original edge to $\rt_v$ and continue from
  there; because $\pi$ enters $\rt_v$ from $\Ii_{v'}$ for the parent
  $v'$ of $v$, and $\rt_v$ is a sink in $\Ii_{v'}$, the resulting path
  is indeed simple. It follows that the length of the whole $\pi$ is
  bounded by $\ell$. 
\end{proof}

\begin{lemma}
  $\Ff \models \Kk$.
\end{lemma}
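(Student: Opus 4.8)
The plan is to verify that $\Ff$ satisfies the trivial ABox and every concept inclusion of $\Kk$, going through the $\ALCIt$ normal forms; since $\Kk$ uses no counting, $\RCN(\Kk)=\emptyset$, so $\Gamma_u=\emptyset$ and any two bags of the decomposition share only their common root. First I would record the structural facts that tame the folding. Write $\Ii'$ for the $n$-proper colouring from which $\Ff$ is built; as the colours are fresh concept names, $\Ii'$ agrees with $\Ii$ on the vocabulary of $\Kk$, so $\Ii'\models\Kk$ and $\Ff$ and $\Ii'$ carry the same $\Kk$-types. Because $T$ is a regular tree accepted by a B\"uchi automaton, all its branches are infinite (the acceptance condition forbids finite branches); each therefore meets $V$ infinitely often and is cut at a selected node. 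Hence the non-retained part of $\Ii'$ is exactly the removed roots $\rt_v$ together with the subtrees strictly below the selected nodes $v$; since such a subtree is attached to the rest of $\Ii'$ only through $\rt_v$ and all edges into $\rt_v$ are redirected, it becomes a component disconnected from the retained elements and is simply discarded. Thus $\Ff$ arises from $\Ii'$ purely by redirecting edges into each $\rt_v$ towards $\rt_{\hat v}$, where $N^{\Ii'}_n(\rt_v)\cong N^{\Ii'}_n(\rt_{\hat v})$; in particular every new target has the same $\Kk$-type as the old one, and no retained element loses an incident edge. This already yields the ABox (which only constrains the retained root $\rt_\varepsilon$), every Boolean CI $\bigsqcap_i A_i\sqsubseteq\bigsqcup_j B_j$, and all one-step inclusions $A\sqsubseteq\exists r.B,\ \forall r.B,\ \exists r^-.B,\ \forall r^-.B$: each incident edge of a retained element is preserved or redirected to a same-type target, and any edge newly incident to some $\rt_{\hat v}$ reproduces, up to type, an edge present at $\rt_v$ in $\Ii'$.

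The core of the argument is a locality lemma: for every $d\in\Delta^\Ff$ the neighbourhood $N^\Ff_\ell(d)$ is isomorphic, via a concept-preserving map fixing $d$, to $N^{\Ii'}_\ell(d)$. This is the positive counterpart of Fact~\ref{fact:coloured-blocking}, and I expect its proof to be the main obstacle. The idea is to explore the $\ell$-ball of $d$ in $\Ii'$ and in $\Ff$ in lock-step: whenever the exploration in $\Ii'$ crosses an edge into a deleted root $\rt_v$, the exploration in $\Ff$ crosses the redirected edge into $\rt_{\hat v}$, and the isomorphism $N^{\Ii'}_n(\rt_v)\cong N^{\Ii'}_n(\rt_{\hat v})$ is used to match the continuation. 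The delicate point is that such redirections chain, so one must rule out that the foldings identify distinct elements or create spurious short cycles within radius $\ell$. Here I would use $n$-properness, which guarantees distinct colours inside every $n$-ball and is preserved by the folding, to keep the partial isomorphism injective and consistent, and the slack $n\ge\ell^2$ to absorb the up to $\ell$ chained redirections that an $\ell$-step exploration can meet. As $\ell\le\sqrt n$, this is precisely the regime underlying Fact~\ref{fact:coloured-blocking}.

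With the locality lemma in hand, the remaining, closure inclusions follow because $\Ff$ and $\Ii'$ are both $\ell$-bounded (Lemma~\ref{lem:bounded-transfer} and the bound on simple directed paths in $\Ii$), so the satisfaction of a CI at $d$ is decided inside $N^\Ff_\ell(d)$. For $A\sqsubseteq\exists r^*.B$ and $A\sqsubseteq\exists (r^-)^*.B$, $\ell$-boundedness places any witness reachable in $\Ii'$ within distance $\ell$ of $d$, hence inside $N^{\Ii'}_\ell(d)$, and the isomorphism transfers it to a witness in $\Ff$; dually, for $A\sqsubseteq\forall r^*.\bar B$ every element $r^*$-reachable from $d$ in $\Ff$ lies in $N^\Ff_\ell(d)$ and so corresponds to an $r^*$-reachable element of $\Ii'$, which is in $\bar B$ because $\Ii'\models\Kk$. (The universal closure inclusions could instead be obtained straight from Fact~\ref{fact:coloured-blocking}, writing the violation of $A\sqsubseteq\forall r^*.\bar B$ as a $\CQ$ of the shape $A(x)\wedge r^i(x,y)\wedge B(y)$ with $i\le\ell\le\sqrt n$ binary atoms, unsatisfiable in $\Ii'$ and hence in $\Ff$.) Since $\Ii'$ agrees with $\Ii\models\Kk$ on all symbols of $\Kk$, every inclusion holds at every element of $\Ff$, and therefore $\Ff\models\Kk$.
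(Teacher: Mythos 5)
Your plan founders on the ``locality lemma'' that you yourself identify as the core of the argument. The claim that $N^\Ff_\ell(d)$ is isomorphic to $N^{\Ii'}_\ell(d)$ for \emph{every} $d\in\Delta^\Ff$ is false at the folding targets: in $\Ff$, the element $\rt_{\hat v}$ keeps all of its original incoming edges (from the parent bag $\Ii_{\hat v'}$, which is retained since $\hat v$ has a selected descendant) \emph{and} additionally receives every edge that formerly pointed to $\rt_v$ --- indeed several selected nodes on different branches may fold onto the same $\hat v$. So in the $\ALCIt$ case, where neighbourhoods comprise both directions, the in-degree of $\rt_{\hat v}$ strictly increases and no concept-preserving isomorphism with its old $\ell$-ball exists. (This is also why your appeal to $n\ge\ell^2$ ``to absorb up to $\ell$ chained redirections'' is off target: along \emph{directed} paths redirections cannot chain at all, because redirected edges originate at sources and land at sinks --- this is exactly the observation in the proof of Lemma~\ref{lem:bounded-transfer} --- while along undirected explorations the lemma fails outright as above. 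The slack $n=\max(\ell^2,t^2)$ is needed to meet the $\sqrt{n}$-bound on binary atoms in Fact~\ref{fact:coloured-blocking}, not for chaining.)

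The paper's proof avoids any global neighbourhood isomorphism in $\Ff$ and splits the CIs into three groups. For CIs without transitive closure, it suffices that the unary types of $\rt_v$ and $\rt_{\hat v}$ coincide (a consequence of the isomorphism of their $n$-neighbourhoods \emph{in $\Ii'$}); e.g.\ a new in-neighbour of $\rt_{\hat v}$ arriving via a redirected edge was an in-neighbour of the same-typed $\rt_v$ in $\Ii'$, so $\forall r^-.B$ constraints survive. For \emph{universal} restrictions over closures, $\ell$-boundedness lets one rewrite their violation as satisfaction of a UCQ with at most $\ell$ binary atoms per CQ, and Fact~\ref{fact:coloured-blocking} --- which only requires old and new edge targets to have isomorphic $n$-neighbourhoods \emph{in the coloured $\Ii'$} --- yields non-satisfaction in $\Ff$; your parenthetical remark already contains this correct alternative. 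For \emph{existential} restrictions over closures, one takes a witnessing path in $\Ii'$ (of length at most $\ell$ by $\ell$-boundedness) and, upon crossing a redirected edge, transfers the suffix after $\rt_v$ to $\rt_{\hat v}$ via the $\Ii'$-neighbourhood isomorphism, using no-chaining to conclude after one transfer. So your treatment of the Boolean and one-step CIs and your fallback for universal closures match the paper, but the existential closures must be handled by suffix transfer rather than by the locality lemma, which as stated cannot be proved.
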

\begin{proof}
  Because the unary types of $\rt_v$ and $\rt_{\hat v}$ coincide,
  redirections of edges do not violate concept inclusions of $\Kk$
  that do not involve transitive closure. In $\ell$-bounded
  interpretations, each universal restriction is equivalent to not
  satisfying a UCQ with at most $\ell$ binary atoms per
  CQ. Consequently, $\Ff$ satisfies all universal restrictions by 
  Fact~\ref{fact:coloured-blocking}.  It remains to verify
  existential restrictions involving transitive closure. Consider a
  forward  path $\pi$ witnessing an existential restriction in $\Ii'$
  and suppose that the origin of this path is still in still in
  $\Ff$. Because $\Ii'$ is $\ell$-bounded, the path has length at
  most $\ell$. If this path does not pass through a redirected edge, we
  are done. Suppose that it does pass an edge redirected from $\rt_v$
  to $\rt_{\hat v}$. Then we can transfer the suffix of the path
  after $\rt_v$ to $\rt_{\hat v}$, because  $N^{\Ii_n}_n(\rt_{v})\simeq
  N^{\Ii_n}_n(\rt_{\hat v})$ and $n\geq \ell$. Like in the proof of
  Lemma~\ref{lem:bounded-transfer}, the transferred suffix cannot take 
  another redirected edge, and we are done.  
\end{proof}

\clearpage

\section{Hardness}
\par \noindent {\bf Theorem~\ref{thm:LoweBound}}
Finite entailment of  $\CQt\!$s over \ALC knowledge bases is \twoexp-hard. 
\newcommand{\Kmw}{\ensuremath{\Kmc_{(\Mmc, w)}}}
\newcommand{\qmw}{\ensuremath{q_{\Mmc,w}}}

\begin{proof}
We reduce the word problem of exponentially space bounded alternating Turing machines. 
For each input word $w$ to $\Mmc= (Q, \Sigma, q_0, \delta)$, we define an \ALC knowledge base $\Kmw$ and a $\CQt\!$ $q$ such that $\Mmc$ accepts $w$ iff there is a counter-model of $\Kmw$ and $\qmw$. 

Each finite forest counter-model of $\Kmw$ and $\qmw$ will represent an accepting computation of $\Mmc$ and $w$. 
Our encoding follows closely the one use in~\cite{EiterLOS09} to show that (arbitrary) entailment of CQs
over $\mathcal{SH}$ knowledge bases is \twoexp-hard. In particular, each tree model of $\Kmw$ encodes an accepting \emph{computation tree} in which each node is the root of a \emph{configuration tree} (see Figure~\ref{fig:comp-tree}). Notably, there are two consecutive edges connecting a configuration tree and any of its successor configurations. Further, each configuration tree is a complete binary tree of depth  equal to the length of the input word $|w|$, and whose leaves shall store the tape contents using a \emph{cell gadget} as illustrated in Figure~\ref{fig:conf-tree}. Each \emph{cell gadget} 
records the current (right descendants) and the previous (left descendants) configuration of $\Mmc$.  The particular structure of a cell gadget is illustrated in Figure~\ref{fig:cell-gadget}.  The root of each gadget cell is labelled with concepts names $B_1, \dots, B_m$ that encode (in binary) the position of the cell in the tape.  The node labelled with concept name $E_h$ records pairs $(q,a)$ (not) satisfied in the current configuration, whereas $F_h$ records all possible pairs $(q,a)$ with $q \in Q$ and $a \in \Sigma$ using concept names $Z_{q,a}$. The actual content of the cell in the current (previous) configuration is recorded in the node labelled by $G_h$ ($G_p$) using a concept name for each symbol in $\Sigma$; 
symmetrically, the node $F_p$ records the pairs  (not) satisfied in the previous configuration and $E_p$ all possible pairs $(q,a)$. 
 
Concept inclusions in $\Kmw$ to enforce the structure fo configuration trees are rather standard and use a single role $r$. They ensure locally  that in every gadget the current configuration is indeed a successor of the previous configuration. All the relevant axioms to ensure this from the encoding in ~\cite{EiterLOS09} can be expressed in \ALC. 

\medskip 
We will use a $\CQt\!$   $q_w$  to ensure that computation trees are \emph{proper}, i.e., that the previous configuration recorded in each configuration tree $T$ coincides with the current configuration recorded in all its successors $T'$. This property can be characterised as follows:

\begin{lemma}[Proposition 4 in~\cite{EiterLOS09}]\label{lemma:ProperLB}
A computation tree is not proper iff  there exists a cell gadget $n$ in some configuration tree $T$ and a cell gadget $n'$ in a successor configuration of $T$ such that for all $A \in \Bbf \cup \Zbf$ 

\begin{itemize}
\item[$(\dagger)$] $A$ is true at the $E_h$-node of $n$ and the $E_p$-node of $n'$, or
\item[$(\ddagger)$]  $A$ is true at the $F_h$-node of $n$ and the $F_p$-node of $n$.
\end{itemize}
\end{lemma}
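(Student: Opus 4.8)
The plan is to prove the equivalence by \emph{decoding} the covering condition ``for all $A \in \Bbf \cup \Zbf$: $(\dagger)$ or $(\ddagger)$'' into the statement ``$n$ and $n'$ address the same tape cell and disagree on its content'', and then to read off (non-)properness from this. First I would recall the recording discipline that the concept inclusions of $\Kmw$ impose on the four distinguished nodes of a cell gadget, handling the address concepts $\Bbf = \{B_1,\dots,B_m\}$ and the content concepts $\Zbf = \{Z_{q,a}\}$ separately. The address bits are available \emph{complementarily}: the $E_h$- and $E_p$-nodes carry $B_i$ exactly when the $i$-th address bit of $n$, resp.\ $n'$, is $1$, while the $F_h$- and $F_p$-nodes carry $B_i$ exactly when that bit is $0$. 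The content is recorded in the \emph{saturated} fashion announced in the construction: the $E_h$-node of $n$ carries precisely the pairs \emph{not} equal to the current content $c$ of $n$ (the ``not satisfied'' pairs), the $F_p$-node of $n'$ carries precisely the pairs not equal to the previous content $c'$ of $n'$, whereas the $F_h$-node of $n$ and the $E_p$-node of $n'$ carry \emph{all} pairs. (I would also flag the evident typo in $(\ddagger)$: the relevant node is the $F_p$-node of $n'$, not of $n$.)

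With this in place the core computation is immediate. For an address concept $B_i$, $(\dagger)$ asserts that both addresses have a $1$ in position $i$ and $(\ddagger)$ that both have a $0$, so $(\dagger)\vee(\ddagger)$ holds exactly when the two addresses agree in position $i$; quantifying over $i$ yields ``$n$ and $n'$ address the same cell''. For a content concept $Z_{q,a}$, the saturated nodes contribute the trivial conjunct, so $(\dagger)$ reduces to $(q,a)\neq c$ and $(\ddagger)$ to $(q,a)\neq c'$; hence $(\dagger)\vee(\ddagger)$ fails only for a pair with $(q,a)=c=c'$, and quantifying over all pairs the content part holds exactly when $c\neq c'$. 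Thus the whole covering condition is equivalent to ``same address, different content''. To finish, I would unfold properness: a computation tree is proper iff for every configuration tree $T$, every successor configuration $T'$, and every position $i$, the current content stored at the address-$i$ cell of $T$ equals the previous content stored at the address-$i$ cell of $T'$. Since each configuration tree is a complete binary tree of depth $|w|$, it contains one cell gadget for every address in $\{0,\dots,2^{|w|}-1\}$; therefore non-properness is equivalent to the existence of a step $T\to T'$ and cell gadgets $n$ in $T$, $n'$ in $T'$ with equal address and differing current/previous content, which by the decoding is exactly the stated condition. Both directions of the iff then drop out, with the witnessing pair supplied by the offending position in the $(\Leftarrow$ to non-proper$)$ reading and recovered from the failing position in the converse.

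The delicate point, and the reason this two-node ($E$/$F$) gadget is used at all, will be the content part of the decoding: a single conjunctive covering ranging over the polynomially many pairs in $\Zbf$ must certify \emph{inequality} of two contents, even though the cells sit among exponentially many others and the query cannot branch. This works solely because of the asymmetric saturated recording, which makes ``$c=c'$'' the unique way to violate the covering; I would verify this polarity with care, since the complementary recording used for the addresses instead tests \emph{equality}, and mixing the two up would collapse the whole argument. The remaining steps, namely that the address covering faithfully compares the two $m$-bit addresses, that every address is realised in every configuration tree so that a witnessing pair always exists, and that $n'$ genuinely lies in a successor configuration of $T$, are routine bookkeeping over the structure already fixed by $\Kmw$.
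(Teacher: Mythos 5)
Your proof is correct, but note first that the paper itself does not prove this lemma: it is imported verbatim as Proposition~4 of \cite{EiterLOS09}, and the only thing the paper adds is the remark that the proof relies neither on role inclusions nor on transitive roles, so it survives the adaptation to \ALC with a \CQt. What you have written is in effect a reconstruction of the original argument behind that proposition: the complementary storage of address bits at the $E$/$F$ nodes (turning the covering into an \emph{equality} test on the two $m$-bit addresses) versus the saturated-plus-complemented storage of content markers (turning it into an \emph{inequality} test on current versus previous content) is exactly the mechanism there, and your reduction of ``for all $A\in\Bbf\cup\Zbf$: $(\dagger)$ or $(\ddagger)$'' to ``same address, different current/previous content'', followed by unfolding properness over the $2^{|w|}$ cells of each configuration tree, is the standard proof. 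Your flag of the typo in $(\ddagger)$ is also right: the intended node is the $F_p$-node of $n'$, not of $n$; as literally stated the characterization would be false.

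One point that you gesture at but should make explicit, since you quantify only over pairs $(q,a)$: the inequality test via saturation is sound only if \emph{every} possible cell content has a marker, including head-free cells whose content is a bare symbol $a\in\Sigma$. With $\Zbf$ read literally as $\{Z_{q,a}\mid q\in Q,\ a\in\Sigma\}$, two same-address cells with equal head-free contents would vacuously satisfy the content part of the covering, so the right-hand side of the lemma would hold in a proper computation tree and the ``only if'' direction would fail. In the source construction the markers range over $\Sigma\cup(Q\times\Sigma)$; the paper's description of $\Zbf$ is an oversimplification, and your proof should state this coverage assumption, after which your decoding goes through unchanged. Similarly, your assumption that the $B_i$ are available complementarily at the $E$/$F$ nodes is not stated in the paper (which only places $B_1,\dots,B_m$ at the gadget root), but it is forced by the lemma statement and matches \cite{EiterLOS09}, so positing it is the right call rather than a gap.
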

where $\Bbf = \{B_1, \dots, B_m\}$ is the set of all concept names used to encode the cell addresses, and $\Zbf = \{ Z_{q,a} \mid q \in  Q, a\in \Sigma \}$
is the set of concept names used as markers in the cell gadgets. 
Notably, (the proof of) Lemma~\ref{lemma:ProperLB} does not rely on the use of role inclusions not transitive roles in the encoding. 

Thus, the query $q_w$ should be able to test properties $(\dagger)$ and $(\ddagger)$ for each $A \in \Bbf \cup \Zbf$. We achieve this by taking a copy of the query shown in Figure~\ref{fig:BasicQ}, for each $A  \in \Bbf \cup \Zbf$.  Possible matches of one of these copies in a computation tree are illustrated on Figure~\ref{fig:BasicQ}. The required query $q_w$ is the conjunction of all these copies.

\end{proof}

\begin{figure}\caption{ATM computations}\label{fig:comp-tree}
\centerline{%
\begin{tikzpicture}[>=latex,point/.style={circle,draw=black,minimum size=1.4mm,inner sep=0pt}]
\node[point, fill=black] (t1) at (0,2){};
\node[point, fill=black] (t12) at (1.5,1.5){};
\node[point, fill=black] (t22) at (3,1){};
\node[point, fill=black] (t13) at (4.5,0.5){};
\node[point, fill=black] (t23) at (6,0){};
\node[point,fill=black] (t14) at (4.5,2){};
\node[point, fill=black] (t24) at (6,3){};
\draw[thin] (0,2) -- (-1,0) -- (1,0) -- cycle;
\draw[->,out=10,in=130](t1) to node[pos=0.5,sloped, above] {$r$}  (t12); 
\draw[->,out=10,in=130](t12) to node[pos=0.5,sloped, above] {$r$}  (t22);
\draw[->,out=10,in=130](t22) to node[pos=0.5,sloped, above] {$r$}  (t13);
\draw[->,out=10,in=130](t13) to node[pos=0.5,sloped, above] {$r$}  (t23);
\draw[->,out=70,in=180](t14) to node[pos=0.5,sloped, above] {$r$} (t24);
\draw[->,out=70,in=180](t22) to node[pos=0.5,sloped, above] {$r$} (t14);
\draw[thin] (3,1) -- (2,-1) -- (4,-1) -- cycle;
\draw[thin] (6,3) -- (5,1) -- (7,1) -- cycle;
\draw[thin] (6,0) -- (5,-2) -- (7,-2) -- cycle;
\end{tikzpicture}%
}
\end{figure}
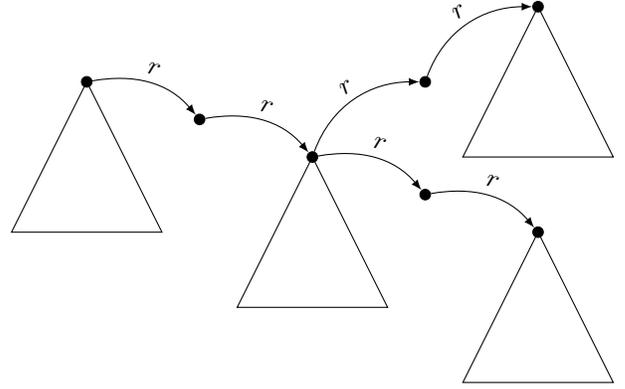
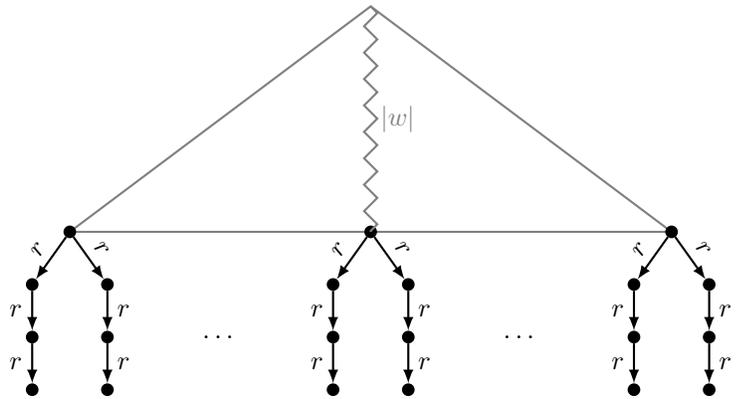
\begin{figure}\caption{Configuration Tree}\label{fig:conf-tree}
\centerline{
\begin{tikzpicture}[>=latex,thick,point/.style={circle,draw=black,minimum size=1.4mm,inner sep=0pt}]
\coordinate (A) at (0,3);
\coordinate (B) at (-4,0);
\coordinate(C) at (4,0);
\draw[color=gray] (A) -- (B) -- (C) -- cycle;
\foreach \x in  {-4, 0, 4}{
\node[point, fill=black] (cp) at (\x,0) {};
\node[point, fill=black] (lp) at (\x-0.5,-0.7) {};
\node[point, fill=black] (llp) at (\x-0.5,-1.4) {};
\node[point, fill=black] (lllp) at (\x-0.5,-2.1) {};
\node[point, fill=black] (rp) at (\x+0.5,-0.7) {};
\node[point, fill=black] (rrp) at (\x+0.5,-1.4) {};
\node[point, fill=black] (rrrp) at (\x+0.5,-2.1) {};
 %
\draw[->] (cp) to  node[pos=0.5,sloped, above] {$r$} (lp);
\draw[->] (cp) to  node[pos=0.5,sloped, above] {$r$} (rp);
\draw[->] (lp) to  node[pos=0.5, left] {$r$} (llp);
\draw[->] (llp) to  node[pos=0.5, left] {$r$} (lllp);
\draw[->] (rp) to  node[pos=0.5, right] {$r$} (rrp);
\draw[->] (rrp) to  node[pos=0.5, right] {$r$} (rrrp);
}
\node at (-2,-1.4){$\dots$}; 
\node at (2,-1.4){$\dots$}; 
\draw[color=gray, zigzag] (A) to node[pos= 0.5, right]{$|w|$} (0,0);
\end{tikzpicture}%
}
\end{figure}
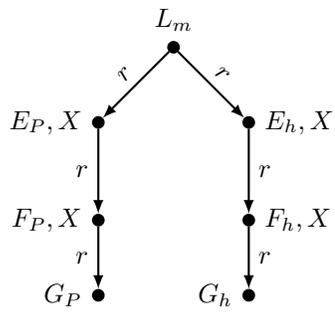
\begin{figure}\caption{Cell gadget}\label{fig:cell-gadget}
\centerline{%
\begin{tikzpicture}[>=latex,thick,point/.style={circle,draw=black,minimum size=1.4mm,inner sep=0pt}]
%
\node[point, fill=black, label=above:{$L_m$}] (cp) at (0,0) {};
\node[point, fill=black, label=left:{$E_P,X$}] (lp) at (-1,-1) {};
\node[point, fill=black, label=left:{$F_P,X$}] (llp) at (-1,-2.3) {};
\node[point, fill=black, label=left:{$G_P$}] (lllp) at (-1,-3.3) {};
\node[point, fill=black, label=right:{$E_h,X$}] (rp) at (1,-1) {};
\node[point, fill=black, label=right:{$F_h,X$}] (rrp) at (1,-2.3) {};
\node[point, fill=black, label=left:{$G_h$}] (rrrp) at (1,-3.3) {};
\draw[->] (cp) to  node[pos=0.5,sloped, above] {$r$} (lp);
\draw[->] (cp) to  node[pos=0.5,sloped, above] {$r$} (rp);
\draw[->] (lp) to  node[pos=0.5, left] {$r$} (llp);
\draw[->] (llp) to  node[pos=0.5, left] {$r$} (lllp);
\draw[->] (rp) to  node[pos=0.5, right] {$r$} (rrp);
\draw[->] (rrp) to  node[pos=0.5, right] {$r$} (rrrp);
\end{tikzpicture}
}
\end{figure}
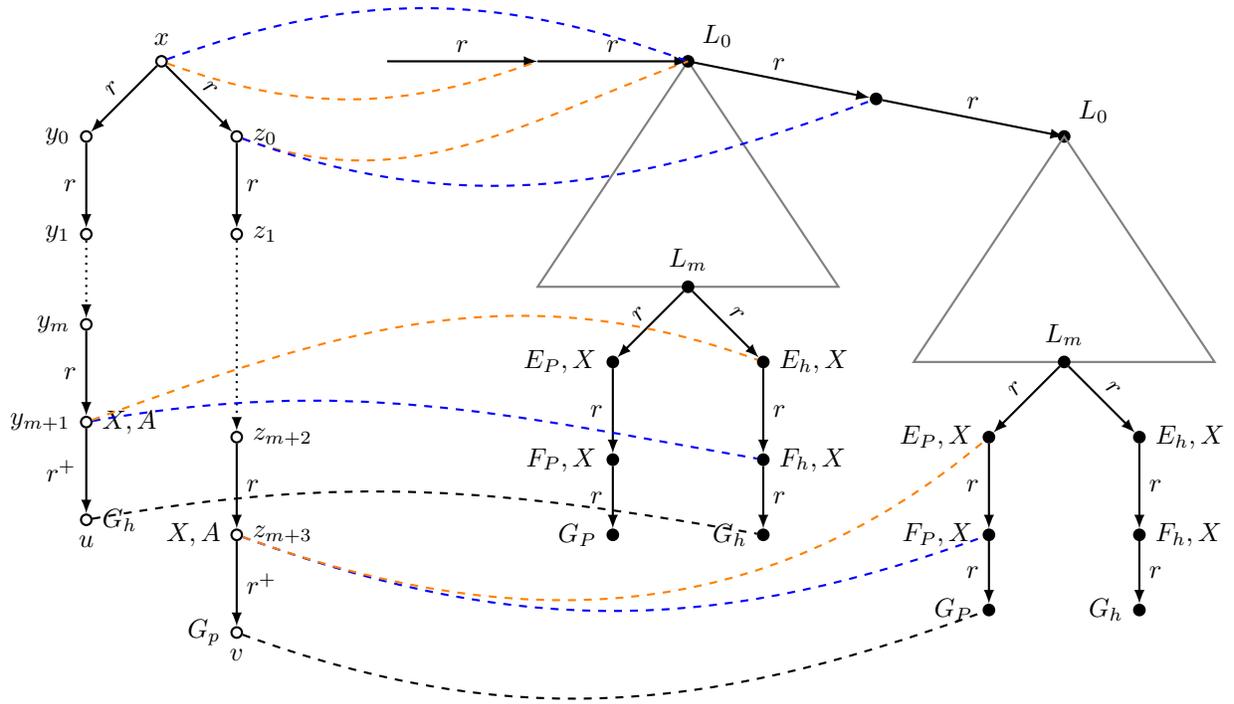
\begin{figure*}\caption{Basic Query}\label{fig:BasicQ}
\centerline{
\begin{tikzpicture}[>=latex,thick,point/.style={circle,draw=black,minimum size=1.4mm,inner sep=0pt}]
\node[point, fill=white, label=above:{$x$}] (x) at (0,0) {};
\node[point,  fill=white, label=left:{$y_0$}] (y0) at (-1,-1) {};
\node[point, fill=white, label=left:{$y_1$}] (y1) at (-1,-2.3) {};
\node[point,  fill=white, label=left:{$y_m$}] (ym) at (-1,-3.5) {};
\node[point,  fill=white, label=left:{$y_{m+1}$}, label= right:{$X, A$}] (X) at (-1,-3.5- 1.3) {};
\node[point,  fill=white, label=below:{$u$}, label= right:{$G_h$}] (Gh) at (-1,-3.5 - 2.6) {};
\node[point,  fill=white, label=right:{$z_0$}] (z0) at (1,-1) {};
\node[point, fill=white, label=right:{$z_1$}] (z1) at (1,-2.3) {};
\node[point,  fill=white, label=right:{$z_{m+2}$}] (zm2) at (1,-5) {};
\node[point,  fill=white, label=right:{$z_{m+3}$}, label=left:{$X, A$}] (XR) at (1,-5-1.3) {};
\node[point,  fill=white, label=below:{$v$}, label=left:{$G_p$}] (Gp) at (1,-5-2.6) {};
\draw[->] (x) to  node[pos=0.5,sloped, above] {$r$} (y0);
\draw[->] (x) to  node[pos=0.5,sloped, above] {$r$} (z0);
\draw[->] (y0) to  node[pos=0.5, left] {$r$} (y1);
\draw[->,dotted](y1) to (ym);
\draw[->] (ym) to  node[pos=0.5, left] {$r$} (X);
\draw[->] (zm2) to  node[pos=0.5, right] {$r$} (XR);
\draw[->] (X) to  node[pos=0.5, left] {$r^+$} (Gh);
\draw[->] (XR) to  node[pos=0.5, right] {$r^+$} (Gp);
\draw[->] (z0) to  node[pos=0.5, right] {$r$} (z1);
\draw[->,dotted] (z1) to  (zm2);
\coordinate (A) at (7,0);
\coordinate (B) at (5,-3);
\coordinate(C) at (9,-3);
\draw[color=gray] (A) -- (B) -- (C) -- cycle;
\node[point, fill=black, label=above right:$L_0$] at (A) {};
\node[point, fill=black, label=above right:$L_0$] at (12,-1) {};
\draw[color=gray] (12,-1) -- (10,-4) -- (14,-4) -- cycle;
\node[point, fill=black] (p) at (9.5, -0.5){};
\draw[->] (A) to node[pos=0.5, above]{$r$} (p);
\draw[->] (5,0) to node[pos=0.5, above]{$r$} (A);
\draw[->] (3,0) to node[pos=0.5, above]{$r$} (5,0);
\draw[->](p) to node[pos=0.5, above]{$r$} (12,-1);
\node[shift={(7,-3)}, point, fill=black, label=above:{$L_m$}] (cp) at (0,0) {};
\node[shift={(7,-3)}, point, fill=black, label=left:{$E_P,X$}] (lp) at (-1,-1) {};
\node[shift={(7,-3)}, point, fill=black, label=left:{$F_P,X$}] (llp) at (-1,-2.3) {};
\node[shift={(7,-3)}, point, fill=black, label=left:{$G_P$}] (lllp) at (-1,-3.3) {};
\node[shift={(7,-3)}, point, fill=black, label=right:{$E_h,X$}] (rp) at (1,-1) {};
\node[shift={(7,-3)}, point, fill=black, label=right:{$F_h,X$}] (rrp) at (1,-2.3) {};
\node[shift={(7,-3)}, point, fill=black, label=left:{$G_h$}] (rrrp) at (1,-3.3) {};
\draw[->] (cp) to  node[pos=0.5,sloped, above] {$r$} (lp);
\draw[->] (cp) to  node[pos=0.5,sloped, above] {$r$} (rp);
\draw[->] (lp) to  node[pos=0.5, left] {$r$} (llp);
\draw[->] (llp) to  node[pos=0.5, left] {$r$} (lllp);
\draw[->] (rp) to  node[pos=0.5, right] {$r$} (rrp);
\draw[->] (rrp) to  node[pos=0.5, right] {$r$} (rrrp);
\node[shift={(12,-4)}, point, fill=black, label=above:{$L_m$}] (Lm) at (0,0) {};
\node[shift={(12,-4)}, point, fill=black, label=left:{$E_P,X$}] (EP) at (-1,-1) {};
\node[shift={(12,-4)}, point, fill=black, label=left:{$F_P,X$}] (FP) at (-1,-2.3) {};
\node[shift={(12,-4)}, point, fill=black, label=left:{$G_P$}] (GP) at (-1,-3.3) {};
\node[shift={(12,-4)}, point, fill=black, label=right:{$E_h,X$}] (EH) at (1,-1) {};
\node[shift={(12,-4)}, point, fill=black, label=right:{$F_h,X$}] (FH) at (1,-2.3) {};
\node[shift={(12,-4)}, point, fill=black, label=left:{$G_h$}] (GH) at (1,-3.3) {};
\draw[->] (Lm) to  node[pos=0.5,sloped, above] {$r$} (EP);
\draw[->] (Lm) to  node[pos=0.5,sloped, above] {$r$} (EH);
\draw[->] (EP) to  node[pos=0.5, left] {$r$} (FP);
\draw[->] (FP) to  node[pos=0.5, left] {$r$} (GP);
\draw[->] (EH) to  node[pos=0.5, right] {$r$} (FH);
\draw[->] (FH) to  node[pos=0.5, right] {$r$} (GH);
\draw[dashed,out=-20,in=200,] (Gp) to (GP);
\draw[dashed,out=10,in=170] (Gh) to (rrrp);
\draw[dashed,out=10,in=170, blue] (X) to (rrp);
\draw[dashed,out=20,in=160, orange] (X) to (rp);
\draw[dashed,out=20,in=160, blue] (x) to (A);
\draw[dashed,out=-20,in=200, orange] (x) to (5,0);
\draw[dashed,out=-20,in=200, orange] (z0) to (A);
\draw[dashed,out=-20,in=200, blue] (z0) to (p);
\draw[dashed,out=-20,in=200, blue] (XR) to (FP);
\draw[dashed,out=-20,in=220, orange] (XR) to (EP);
%
%
\end{tikzpicture}}
\end{figure*}
%
%


\end{document}